\documentclass{article}

    \PassOptionsToPackage{authoryear,round,sort}{natbib}
\usepackage[abbrvbib,nohyperref]{jmlr2e}

\usepackage{lastpage}

\ShortHeadings{Bayesian Multi-Domain Causal Representation Learning}%
{Garg, Stettler, Schein, von K\"ugelgen}
\firstpageno{1}

\usepackage[utf8]{inputenc} %
\usepackage[T1]{fontenc}    %
\usepackage{url}            %
\usepackage{booktabs}       %
\usepackage{amsfonts}       %
\usepackage{nicefrac}       %
\usepackage{microtype}      %
\usepackage[dvipsnames]{xcolor}         %

\newcommand{\jvk}[1]{\textcolor{Red}{[JvK: #1]}}
\newcommand{\ag}[1]{\textcolor{OliveGreen}{[AG: #1]}}

\renewcommand{\jvk}[1]{}
\renewcommand{\ag}[1]{}

\input{neurips_header.tex}

\title{Discrete Causal Representations from Heterogeneous~Domains:\\ A Bayesian Approach with Social Survey Applications}

\author{%
\name Ankur Garg \email garga@uchicago.edu \\ \addr Department of Statistics, University of Chicago
\AND 
\name Michael Stettler \email michael.stettler@cin.uni-tuebgingen.de \\ \addr University of T\"ubingen
\AND
\name Aaron Schein \email schein@uchicago.edu \\ \addr Department of Statistics \& Data Science Institute, University of Chicago
\AND 
Julius von Kügelgen \email julius.vonkuegelgen@stat.math.ethz.ch \\ \addr Seminar for Statistics, ETH Zürich
}
\editor{}

\begin{document}
\doparttoc
\faketableofcontents%

\maketitle

\begin{abstract} %
\looseness-1
Causal representation learning aims to infer the high-level latent causal concepts that give rise to observed low-level measurements.
This is particularly relevant for heterogeneous data from different 
environments or domains since distribution shifts often arise through sparse, localized changes in some of the underlying causal mechanisms, while other parts of the generative process remain unchanged.
Whereas identifiability of causal representations has been studied extensively, practical uncertainty-aware methods and real-world use cases remain less explored. 
In this work, we propose a Bayesian approach to learning causal representations from multi-environment data, focusing on the case of discrete causal concepts and unknown multi-node soft interventions. 
To this end, we translate causal assumptions and interpretability desiderata into suitable priors and parametric choices within a hierarchical model.
We then devise an inference scheme based on sequential Monte Carlo sampling to approximate the resulting multimodal posterior.
We showcase our approach through case studies on social survey data, where latent causal concepts correspond to cultural values or political opinions, measurements to survey responses, and environments to different countries or states.
Our model infers meaningful high-level concepts and plausible causal relations among them, demonstrating its utility for learning causal representations of complex real-world data.
\end{abstract}

\vspace{1em}
\begin{keywords}
causal representation learning,  
multi-environment data,
sparse mechanism shifts,
hierarchical Bayesian modeling,
sequential Monte Carlo samplers~(SMCS)
\end{keywords}

\everypar{\looseness=-1}
\linepenalty=1000
\numberwithin{equation}{section}
\numberwithin{theorem}{section}

\section{Introduction}
\label{sec:introduction}
Questions in the natural and social sciences are often causal in nature: 
\textit{What will be the effects of knocking out certain genes or implementing a new policy? What are the main causes of a given disease? What drives political polarization?}
Causal inference provides a principled framework for answering such  questions from data~\citep{robins1986new,rubin1974estimating,neyman1923applications,wright1934method}. 
Causal models can be viewed as a generalization of statistical models and as a coarsening of mechanistic, differential-equation-based models~\citep{peters2017elements}. 
They describe not only the unperturbed, observational state of a system but %
a whole family of related distributions that capture how the involved variables would respond to manipulations or interventions~\citep{Pearl2009}. 
To go beyond associations and support reasoning about interventions, causal models rely on additional, typically graphical structure and express the data-generating process as a collection of independent modules, which determine how each variable depends on its direct causes.

Sometimes, the qualitative causal relations among variables---usually in the form of a directed, acyclic causal graph---are known or can be derived from the temporal ordering of events (since causes generally precede their effects).
\textit{Causal reasoning} then aims to quantify the strength of causal relations such as the effect of a medical treatment or policy intervention on an outcome of interest, often in the presence of unobserved confounding.
Methods for causal reasoning, mostly from observational (i.e., passively collected) data, have been extensively studied and successfully applied in fields such as epidemiology, econometrics, or social science~\citep{hernan2020causal,imbens2015causal,angrist2009mostly,morgan2014counterfactuals}.

\looseness-1 When the qualitative causal relations are unknown, they need to be inferred from data. This \textit{causal discovery} task is notoriously difficult if only observational data is available and generally only possible up to an equivalence class of graphs~\citep{spirtes2001causation}.
To overcome this challenge, a common approach is to leverage heterogeneous data from multiple experiments, environments, or domains~\citep{peters2016causal,bareinboim2016causal,jaber2020causal,tian2001causal,Brouillard2020,perry2022causal,hauser2012characterization,hauser2015jointly,yang2018characterizing,wu2025bayesian}.

A central assumption for %
causal reasoning and causal discovery is that the causal variables of interest are directly observed. 
However, %
we often only observe high-dimensional proxy measurements of the underlying causal variables.
Examples include computer vision, where we observe pixels %
but care about the underlying entities and their relations; 
single-cell biology, where gene expression or microscopy images are used to infer latent gene programs~\citep{uhler2025causal};
and, as explored in this work, political and social science, where questionnaire responses provide indirect evidence about respondents' beliefs on the involved topics.
Due to the mismatch between what is observed and the variables of interest, directly applying standard techniques for classical causal inference is challenging if not impossible in such settings.

\textit{Causal representation learning} (CRL) aims to bridge this gap by extending causal modeling to settings in which the causal variables of interest are latent and only indirectly observed~\citep{scholkopf2021toward,moran2026towards,varici2026causal}. 
In CRL, the observed data is modeled as arising from a measurement process whose inputs are high-level latent variables that are causally related~\citep{silva2006learning}.
This allows shifting causal assumptions and modeling from the level of the raw data to an embedding or representation space inferred by machine learning models, thereby broadening the scope of causal inference to more complex settings.

Motivated by impossibility results in causal discovery~\citep{eberhardt2005number} and representation learning~\citep{hyvarinen1999nonlinear,locatello2019challenging}, prior studies of CRL have mostly been theoretical in nature and focused on the question of identifiability: under what conditions and up to what ambiguities can the generative process be recovered at the population level (i.e., assuming access to infinite data).
Following breakthroughs in nonlinear independent component analysis~\citep{hyvarinen2016unsupervised,hyvarinen2017nonlinear,hyvarinen2018nonlinear,khemakhem2020variational}, there has been a flurry of identifiability results for learning causal representations from observational data~\citep{adams2021identification,xu2024sparsity,xie2020generalized,xie2022identification,cai2019triad}, time series~\citep{lippe2022citris,ahuja2021properties,lippe2022icitris,yao2021learning,yao2022temporally}, or simultaneously observed views or modalities~\citep{von2021self,brehmer2022weakly,daunhawer2023identifiability,yao2024multi}.

\looseness-1 In the present work, we consider the multi-domain setting of learning from interventional datasets collected from different environments or experimental conditions, which has also been studied extensively from an identifiability perspective~\citep{ahuja2023interventional,buchholz2023learning,wendong2023causal,von2023nonparametric,squires2023linear,zhang2023identifiability,varici2025score,xi2022indeterminacy}.
However, much less work has been dedicated to robust probabilistic inference of causal representations from finite real-world data. Most existing results assume idealized conditions such as perfect single-node interventions, infinite data, and synthetic benchmarks. 
In contrast, imperfect multi-node interventions~\citep{ahuja2024multi,bing2024identifying,varici2024linear}, partial identifiability~\citep{lachapelle2024nonparametric,zhang2024causal}, and limited sample size are the norm in practice. 

\looseness-1
These challenges motivate probabilistic approaches that can quantify uncertainty in the inferred structures and parameters~\citep{wu2026multi}.
Targeting a posterior rather than a point estimate, allows for incorporating different types of uncertainty: \text{epistemic} uncertainty due to a lack of data (finite sample issues as encountered in practice) and \text{aleatoric} uncertainty due to inherent ambiguities in the generative process (partial identifiability due to a lack of sufficiently informative interventions or domains). 

\looseness-1 The Bayesian modeling literature has developed an extensive toolkit for inference in these types of structured latent variable models~\citep{blei2014build,gelman2013bayesian,airoldi2014handbook}. Originally developed for a variety of classical latent variable modeling tasks, these methods provide a natural fit for CRL: structured priors offer a flexible and modular way to impose suitable assumptions on both the latent causal variables and the measurement model. 

\subsection{Structure and Overview}
\label{sec:contributions}
In this work, we develop a Bayesian approach for the unsupervised learning of discrete causal representations from heterogeneous environments and empirically validate it on social survey data. 

In~\cref{sec:problem_setting}, we begin by describing the general problem setting.
We formulate the assumed data generating process as a causal latent variable model, in which the unknown measurement process that produces the observed data from the underlying latent variables remains invariant and differences across domains arise through unknown, soft, multi-node interventions in the latent causal model~(\cref{fig:multi_env}). 
To facilitate interpretability, we focus on the case of discrete latent causal variables, which gives the generative process an interpretation as a finite mixture model with domain-specific mixture weights~(\cref{fig:ex_multi_domain_data}). 

In~\cref{sec:model}, we then propose a hierarchical Bayesian model~(\cref{fig:graphical_model}) that follows the structure of the assumed generative process. 
For the latent causal model, this involves specifying suitable priors over the causal mechanisms and unknown interventions that take into account fundamental assumptions from the causal inference literature such as modularity and sparse shifts~(\cref{tab:mapping}). 
In particular, we refine the sparse mechanism hypothesis~\citep{scholkopf2021toward} for finite-samples with our definition of \(\varepsilon\)-intervention detectability~(\cref{assump:epsilon_intervention_detectability}). We then operationalize this assumption by parameterizing interventions as additive shifts in logit space and enforcing detectability through a novel KL-thresholded shift prior. This prior uses an approximation to the KL divergence between the base and intervened mechanisms~(\cref{fig:kl_normal}) to rule out shifts that induce only negligible distributional changes. %
For the measurement model, we adopt a Gaussian mixture model, in which each latent is treated as an ordinal factor whose level contributes additively to the mean. 
This parametrizes how strongly each latent causal concept influences each observed variable, thereby aiding with interpretability. \looseness-1 %

In~\cref{sec:inference}, we present our approach to approximate posterior inference over the various model components. Due to the highly multimodal structure of the posterior, we employ sequential Monte Carlo sampling (SMCS), which embeds likelihood-tempered Markov chain Monte Carlo (MCMC) updates within a particle-based framework. Most model components admit closed-form conditional updates, and for the non-conjugate parameters we leverage data augmentation to obtain efficient Gibbs steps throughout.

 In~\cref{sec:synthetic_experiments}, we use synthetic experiments and ablations to interpret and evaluate our model. We analyze the posterior landscape, showing that permutations of the latent structure induce distinct asymmetric modes~(\cref{fig:bayes_factors}), and identifying the model components that create and distinguish these modes~(\cref{fig:bayes_factors_decomp}). We also evaluate our inference procedure by comparing against alternative inference schemes and model ablations~(\cref{tab:inference_comparison}).

In~\cref{sec:case_study_wvs,sec:case_study_us}, we then demonstrate our method through case studies on political survey data, where we further develop a procedure for ascribing semantic meaning to the inferred latent variables and thus interpreting the learned causal representations.
For data from the World Values Survey between 2017 and 2022~\citep{haerpfer2022world} in~\cref{sec:case_study_wvs}, where domains correspond to countries, we recover latent factors corresponding to economic hardship, demographics and cultural conservatism; these align with the widely accepted Inglehart–Welzel map~\citep{inglehart2005world} of cross-cultural variation. Our model suggests a causal link \{economic hardship, demographics\} $\to$ cultural conservatism, and we further investigate country-specific patterns of interventions and causal effects.
For data from the US in~\cref{sec:case_study_us}, where domains correspond to urbanicity and census regions, we find a largely one-dimensional latent structure reflecting well-known partisan divisions. We further consider a large language model (LLM)-generated ``semi-synthetic'' version of the same dataset, where we introduce controlled interventions and show that our approach is able to recover the structure of partisanship and four issue opinions (Trade, Regulation, Immigration, and Poverty).

In~\cref{sec:discussion}, we discuss our approach and findings in the context of related work and highlight limitations and avenues for future research. 
\Cref{sec:conclusion} concludes with a brief summary and outlook. 

\subsection{Contributions}
Our main contributions can be summarized as follows.
\begin{enumerate}[leftmargin=*]
    \item \textbf{Conceptual:} We formulate the causal generative process for interventional multi-domain data~(\cref{sec:problem_setting}) as a hierarchical Bayesian model~(\cref{fig:graphical_model,sec:model}). This 
    involves translating assumptions from the causality literature into suitable modeling choices and priors~(\cref{tab:mapping}), including a refinement of sparse mechanism shifts to consider detectability from finite samples (\cref{assump:epsilon_intervention_detectability}).
    \item \textbf{Technical:} We develop a novel prior based on KL-divergence truncation~(\cref{sec:shift_priors}), as well as a Sequential Monte Carlo sampling–based inference procedure~(\cref{sec:inference}) which leverages conjugate updates and likelihood tempering to efficiently explore a highly multimodal posterior over latent causal structures~(\cref{tab:inference_comparison,fig:bayes_factors,fig:bayes_factors_decomp}).
    \item \textbf{Empirical:} We showcase our approach through case studies on the World Values Survey~(\cref{sec:case_study_wvs}), as well as on real and LLM--generated US political survey data~(\cref{sec:case_study_us}), highlighting how to interpret and derive causal insights from %
    the estimated posterior over different model components.
\end{enumerate}
The present work thus constitutes one of the first successful demonstrations of applying a complete unsupervised CRL pipeline---from model formulation through estimation to interpretation of the inferred latent variables and structures---to complex real-world data.

\begin{table}[t]
\centering
\caption{Overview of the high-level assumptions and desiderata on the causal generative process and their implementation within our Bayesian hierarchical latent variable model.} 
\label{tab:mapping}
\setlength{\tabcolsep}{10pt} %
\renewcommand{\arraystretch}{2} %
\resizebox{\textwidth}{!}{
\begin{tabular}{p{0.3\linewidth} p{0.4\linewidth} p{0.21\linewidth}}
\toprule
\textbf{Assumption/Desideratum} & \textbf{Implementation}  & \textbf{Reference} \\
\midrule
discrete causal concepts $\Zb$
 &  each $p^e(\Xb)$ is a finite mixture model
 & \cref{ass:cardinality}, \cref{sec:discrete}
 \\
 invariant measurement process 
& mixture components $p(\Xb \mid \Zb=\zb)$ shared across domains 
& \cref{ass:shared_obs_model}, \cref{sec:model_measurement}
\\
modularity/independent causal mechanisms
& parameters $\Thetab$ of unintervened mechanisms $p(Z_\ell \mid \PA_\ell)$ shared across domains 
&
\cref{assumption:shared_mechs}, \cref{sec:model_base_mechanisms}
\\
sparse mechanism shifts 
& sparsity-inducing Beta-Bernoulli prior for intervention target indicators $\Ical^e$
& \cref{sec:model_interventions}
\\
post-intervention mechanisms are sufficiently distinct
& prior based on KL-divergence truncation for domain-specific shifts $\Deltab^e$
& \cref{assump:epsilon_intervention_detectability}, \cref{sec:model_intervened_mechanisms}
\\
{ease of interpretability\newline of causal concepts}
&
measurement model as additive effects for simple latent-question pairing
& \cref{sec:model_measurement} \\
\bottomrule
\end{tabular}
}
\end{table}

\subsection{Notation}
We use the terms domain and environment interchangeably.
Scalar quantities are written in non-bold (e.g., $c$ or $L$), column vectors in bold lower case (e.g., $\xb$) and matrices in bold uppercase (e.g., $\Mb$ with $\mb_i$ representing the $i$\textsuperscript{th} column). We use uppercase for random variables (e.g., $\Zb$ and~$Z_1$) and lowercase for their realizations (e.g., $\zb$ and~$z_1$).
For $n\in\NN_{\geq 1}$, we use the shorthand $[n]:=\{1, ..., n\}$. %
Superscripts index domains (e.g., $\Ical^e$ with environment index $e$) and data points (e.g., $\xb^{e,j}$ with sample index~$j$), subscripts index dimensions (e.g., $Z_\ell$ with latent variable index~$\ell$), and parentheses index parent configurations (e.g.,~$ \thetab_\ell(\pa_\ell)$). 
We denote the vector of all ones by $\bm1=(1,1,...,1)$ and the identity matrix by $\Ib$.
In a slight abuse of notation, we use $p$ to refer to both probability distributions and their associated probability mass or density functions.

\section{Problem Setting: CRL from Heterogeneous Domains}
\label{sec:problem_setting}
\looseness-1 
We begin by formalizing the problem of learning causal representations from high-dimensional measurements across multiple related domains.
In this section, we focus on a general nonparametric description of the setting. Specific modeling choices and priors are deferred to~\cref{sec:model}.
Specifically, we formulate the data-generating process as a latent variable model, %
in which the latent variables are related through an unknown causal model~(\cref{sec:causal_model}). Differences among domains are then modeled through unknown soft, or imperfect, interventions in the shared latent causal model, subject to the same invariant measurement process~(\cref{sec:multi_domain}). We focus on the case of discrete latent variables;  the data generating process then takes the form of a decomposable finite mixture model with fixed mixture distributions but variable mixture weights across domains,  resulting from interventions on subsets of latents~(\cref{sec:discrete}).

\subsection{Causal Latent Variable Model}
We formalize the assumed data generating process for the observed random vector $\Xb$ as a \textit{latent variable model} with latent variables $\Zb$.
A latent variable model is a generative model that describes a hierarchical data-generating process.
Rather than specifying the distribution $p(\Xb)$ directly, we specify a (marginal) distribution $p(\Zb)$ over the latent variables and a conditional  $p(\Xb\mid\Zb)$ over the observed variables given the latent variables, which we will refer to as a \textit{measurement model}.
Together, this defines the joint distribution
   $p(\Zb,\Xb)=p(\Zb)p(\Xb\mid\Zb)$
which in turn induces the marginal data distribution $p(\Xb)$ via $p(\Xb)=\int p(\Xb\mid\Zb) p(\Zb) \mathrm{d}\Zb\,.$

\label{sec:causal_model}
Rather than choosing a single fixed latent distribution $p(\Zb)$, we assume that the latent variables $\Zb=(Z_1, ..., Z_L)$ are related through a causal model. 
The causal relations among $\Zb$ are encoded by a \textit{causal graph} $G$, %
i.e., a directed acyclic graph (DAG) with vertices $\Zb$ and directed edges $Z_j\to Z_\ell$ indicating that $Z_j$ is a direct cause or causal parent of $Z_\ell$. The set of all such causal parents is denoted $\PA_\ell \subseteq\Zb\setminus\{Z_\ell \}$.
The joint distribution $p(\Zb)$ of a causal model with causal graph $G$ is Markovian w.r.t.\ $G$, i.e., it obeys the following factorization,
\begin{equation}
\label{eq:causal_factorisation}
p(\Zb)=p(Z_1,\dots,Z_L) = \prod_{\ell=1}^L  p\left(Z_\ell \mid \PA_\ell\right)\,.
\end{equation}
where the conditionals \(p\left(Z_\ell \mid \PA_\ell\right)\) in~\eqref{eq:causal_factorisation} are called the %
\textit{causal mechanisms}.

What makes the model causal is that it also describes a family of \textit{interventional} distributions, i.e., distributions that arise from replacing a subset of the causal mechanisms with new mechanisms $\tilde p(Z_\ell\mid\PA_\ell)$.
The principle of independent causal mechanisms~\citep[ICM;][]{peters2017elements} states that causal mechanisms are modular in the sense that changing some of them does not affect the other mechanisms. 
Hence, the distribution for an intervention that changes the mechanisms of $Z_\ell$ for all $\ell$ in some subset $\Ical\subseteq [L]$ is given by
\begin{equation}
\label{eq:interventional_distribution}
\tilde p_\Ical(\Zb)=\prod_{\ell\in\Ical}\tilde p\left(Z_\ell\mid\PA_\ell\right) \prod_{\ell\not\in\Ical}p\left(Z_\ell\mid\PA_\ell\right)\,. 
\end{equation}

\subsection{Interventional Multi-Domain Data}
\label{sec:multi_domain}
Causal representation learning from i.i.d.\ data is known to be impossible without strong additional assumptions and richer, heterogeneous data %
is therefore typically required. In the present work, we consider data from multiple environments or domains, which has been shown to provide useful causal learning signals when combined with suitable causal assumptions~\citep{peters2016causal,heinze2018invariant,Rojas-Carulla2018,krueger2021out,eastwood2022probable,perry2022causal}. 
In the multi-domain setting, we observe datasets $\Dcal=\{\Dcal^e\}_{e\in\Ecal}$ where $e\in\Ecal$ indexes  different domains or environments.
Each such dataset $\Dcal^e$ is assumed to be an i.i.d.\ sample of observations of~$\Xb$ of size $m_e$ from a domain-specific distribution~$p^e(\Xb)$, i.e.,
\begin{equation}
\label{eq:multi_env_data}
    \Dcal^e=\{\xb^{e,1},...,\xb^{e,m_e}\}
    \overset{\mathrm{i.i.d.}}{\sim} 
    p^e(\Xb)
    \qquad
    \mathrm{for}
    \qquad
    e \in \Ecal%
\end{equation}

\begin{figure}
    \newcommand{\xshift}{3.5em}
    \newcommand{\yshift}{2.75em}
    \newcommand{\zshift}{0.75em}
    \centering
    \begin{tikzpicture}
        \centering
        \node (V1) [latent] {$Z_1$};
        \node (V2) [latent, xshift=\xshift] {$Z_2$};
        \node (V3) [latent, yshift=-\yshift, draw=\ivcolor,thick] {\textcolor{\ivcolor}{$Z_3$}};
        \node (V4) [latent, xshift=\xshift, yshift=-\yshift] {$Z_4$};
        \edge[-stealth, thick]{V1,V2,V3}{V4};
        \edge[-stealth, color=\ivcolor,thick]{V1}{V3};
        \plate[inner sep=0.1em, yshift=0.2em, dashed] {plate1}{(V1) (V2) (V3) (V4)}{};

        \node (Y1) [latent,yshift=-2*\yshift] {$Z_1$};
        \node (Y2) [latent, xshift=\xshift, yshift=-2*\yshift] {$Z_2$};
        \node (Y3) [latent, yshift=-3*\yshift] {$Z_3$};
        \node (Y4) [latent, xshift=\xshift, yshift=-3*\yshift, draw=\ivcolor,thick] {\textcolor{\ivcolor}{$Z_4$}};
        \edge[-stealth, color=\ivcolor,thick]{Y1,Y2,Y3}{Y4};
        \edge[-stealth, thick]{Y1}{Y3};
        \plate[inner sep=0.1em, yshift=0.2em, dashed] {plate2}{(Y1) (Y2) (Y3) (Y4)}{};

        \node (W1) [latent, yshift=-4*\yshift, draw=\ivcolor,thick] {\textcolor{\ivcolor}{$Z_1$}};
        \node (W2) [latent, xshift=\xshift, yshift=-4*\yshift, draw=\ivcolor,thick] {\textcolor{\ivcolor}{$Z_2$}};
        \node (W3) [latent, yshift=-5*\yshift] {$Z_3$};
        \node (W4) [latent, xshift=\xshift, yshift=-5*\yshift] {$Z_4$};
        \edge[-stealth, thick]{W1}{W3};
        \edge[-stealth, thick]{W1,W2,W3}{W4};
        \plate[inner sep=0.1em, yshift=0.2em, dashed] {plate3}{(W1) (W2) (W3) (W4)}{};

        \node (e1) [const, xshift=-1*\xshift, yshift=-0.5*\yshift]{$e=1$:};
        \node (e2) [const, xshift=-1*\xshift, yshift=-2.5*\yshift]{$e=2$:};
        \node (e3) [const, xshift=-1*\xshift, yshift=-4.5*\yshift]{$e=3$:};
        \node (p1) [const, xshift=2.5*\xshift, yshift=-0.5*\yshift]{$\zb\sim p^1(\Zb)$};
        \node (p2) [const, xshift=2.5*\xshift, yshift=-2.5*\yshift]{$\zb\sim p^2(\Zb)$};
        \node (p3) [const, xshift=2.5*\xshift, yshift=-4.5*\yshift]{$\zb\sim p^3(\Zb)$};

        \coordinate (A) at (4*\xshift,-0.75*\yshift);
        \coordinate (B) at (5.5*\xshift,-0.*\yshift);
        \coordinate (C) at (5.5*\xshift,-5*\yshift);
        \coordinate (D) at (4*\xshift,-4.25*\yshift);
        \draw[fill=\decodercolor!20] (A) -- (B) -- (C) -- (D) -- cycle;
        \coordinate (input_1) at (4*\xshift,-1.25*\yshift);
        \coordinate (input_i) at (4*\xshift,-2.5*\yshift);
        \coordinate (input_n) at (4*\xshift,-3.75*\yshift);
        \edge[-stealth, thick, color=\decodercolor]{p1.south east}{input_1};
        \edge[-stealth, thick,color=\decodercolor]{p2.east}{input_i};
        \edge[-stealth, thick,color=\decodercolor]{p3.north east}{input_n};
        \node[const,xshift=4.75*\xshift, yshift=-2.5*\yshift] {$p(\Xb~|~\Zb)$};
        \coordinate (output_1) at (5.5*\xshift,-0.5*\yshift);
        \coordinate (output_i) at (5.5*\xshift,-2.5*\yshift);
        \coordinate (output_n) at (5.5*\xshift,-4.5*\yshift);
        \node (D_1) [const, xshift=7*\xshift, yshift=-0.5*\yshift] {\, $\Dcal^1\sim p^1(\Xb)
        $};
        \node (D_i) [const, xshift=7*\xshift, yshift=-2.5*\yshift] {\, $\Dcal^2\sim p^2(\Xb)
        $};
        \node (D_n) [const, xshift=7*\xshift, yshift=-4.5*\yshift] {\, $\Dcal^3\sim  p^3(\Xb)
        $};
        \edge[-stealth, thick,color=\decodercolor]{output_1}{D_1};
        \edge[-stealth, thick,color=\decodercolor]{output_i}{D_i};
        \edge[-stealth, thick,color=\decodercolor]{output_n}{D_n};
    \end{tikzpicture}
    \caption{\textbf{Multi-domain CRL with soft, multi-node interventions.} Illustration of our  multi-domain setup with $L=4$ causal variables $\Zb=(Z_1,Z_2,Z_3,Z_4)$ and $|\Ecal|=3$ domains.
    The intervention targets $\Ical^e$ are given by $\Ical^1=\{3\}$, $\Ical^2=\{4\}$, and $\Ical^3=\{1,2\}$, and the corresponding environment-specific (changed) mechanisms by $p^1(Z_3\mid Z_1)$, $p^2(Z_4\mid Z_1,Z_2,Z_3)$, and $\{p^3(Z_1), p^3(Z_2)\}$, respectively.}
    \label{fig:multi_env}
\end{figure}

On its own, such multi-domain data is not necessarily useful, since the domain-specific distributions~$p^e$ need not be related in any meaningful way. 
However, we will assume that certain parts of the data generating process are shared across domains.
Concretely, we assume that all domains share the same invariant measurement model $p(\Xb\mid\Zb)$ and arise from (unknown) interventions to some causal mechanisms in a shared causal model, as illustrated in~\cref{fig:multi_env}.
\begin{assumption}[Invariant measurement model]
\label{ass:shared_obs_model}
The observation model $p(\Xb\mid\Zb)$ is invariant across domains. That is, observations $\xb^e$ in~\eqref{eq:multi_env_data} are generated as
\begin{equation}
    \zb^e\sim p^e(\Zb), \qquad \xb^e\sim p(\Xb\mid\Zb=\zb^e).
\end{equation}
\end{assumption}
\begin{assumption}[Shared mechanisms]
\label{assumption:shared_mechs}
Each domain $e$ independently results from a shared latent causal model
by intervening on an (unknown) subset $\Ical^e\subseteq[L]$ of mechanisms. That is, for all $e\in\Ecal$, the interventional joint latent distribution $p^e(\Zb)$ can be written as
\begin{equation}
\label{eq:shared_mechanisms}
    p^e(\Zb)=p^e(Z_1, ..., Z_L) = 
    \prod_{\ell\in \Ical^e} p^e\left(Z_\ell\mid\PA_\ell\right)%
    \prod_{\ell\in[L]\setminus \Ical^e}p\left(Z_\ell\mid\PA_\ell\right),
\end{equation}
where $p(Z_\ell\mid\PA_\ell)$ are the (base) causal mechanisms from~\eqref{eq:causal_factorisation} and $p^e(Z_\ell\mid\PA_\ell)$ are changed domain-specific mechanisms.%
\end{assumption}
\Cref{assumption:shared_mechs} is a common assumption for causal analyses of multi-domain data~\citep{peters2016causal,Rojas-Carulla2018,perry2022causal} 
that is also typical for multi-domain causal representation learning~\citep{ahuja2023interventional,buchholz2023learning,wendong2023causal,von2023nonparametric,squires2023linear,zhang2023identifiability,varici2025score}.
However, the aforementioned identifiability-focused CRL studies mostly consider perfect or hard interventions, which remove any influence from the causal parents such that $p^e(Z_\ell\mid\PA_\ell)=p^e(Z_\ell)$.
We instead consider more general imperfect or soft interventions, which modify the dependence on causal parents, $p^e(Z_\ell\mid\PA_\ell)\neq p(Z_\ell\mid\PA_\ell)$.
\begin{remark}
    \Cref{assumption:shared_mechs} alone imposes no restriction on the generating process if\ $\Ical^e = [L]$. However, when combined with the \textit{sparse mechanism shift} hypothesis~\citep{scholkopf2021toward,perry2022causal}---which we will formalize in~\Cref{sec:model_interventions}---this does impose a meaningful structural assumption.
\end{remark}

\subsection{Discrete Causal Representations and Finite Mixture Models}
\label{sec:discrete}
\begin{figure}[tbp]
\centering
\includegraphics[width=\textwidth]{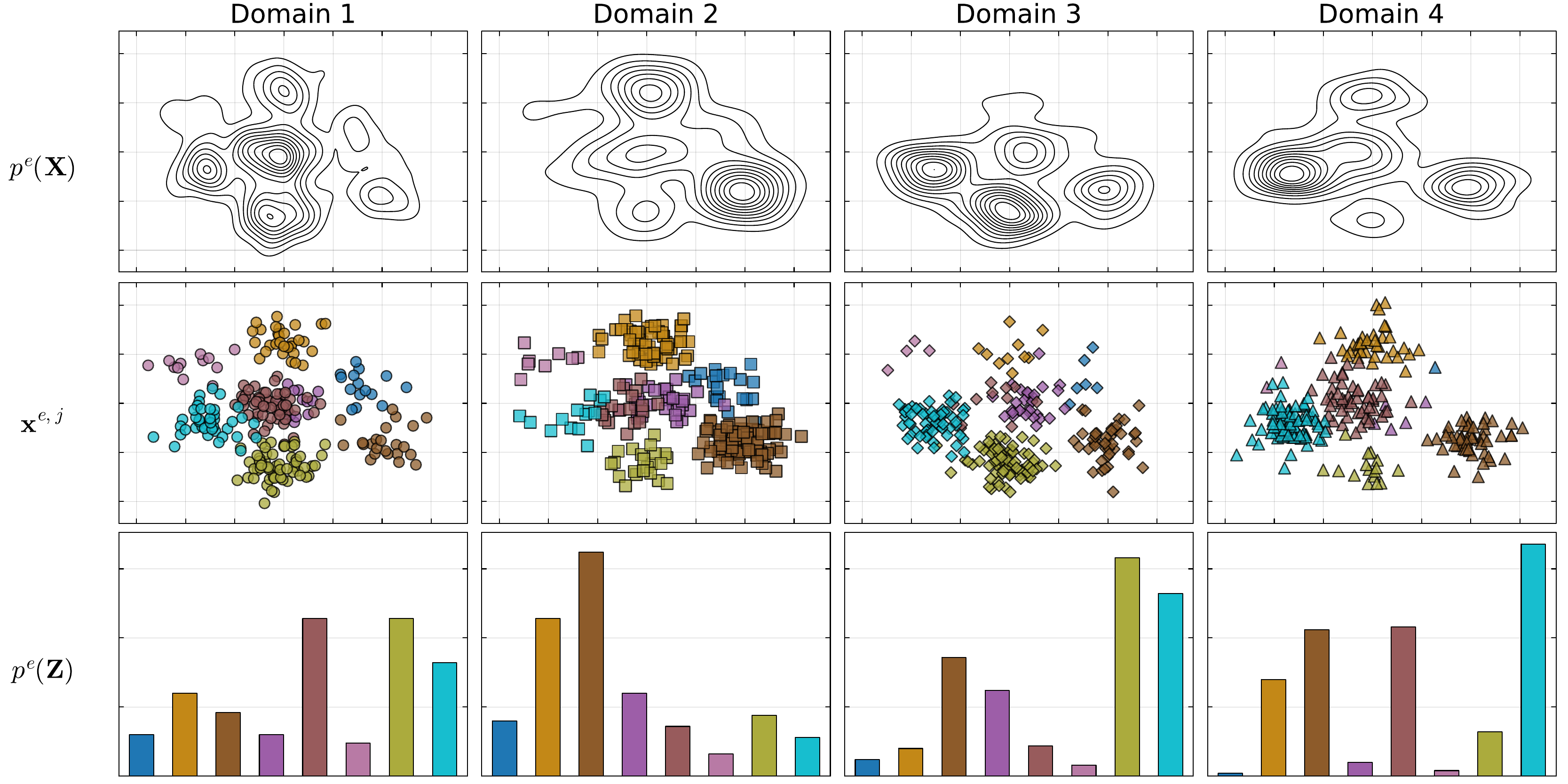} 
\caption{
\textbf{Multi-domain data for CRL with discrete causal latents.} We illustrate an example data generating process with $L=3$ binary causal latents $\Zb=(Z_1, Z_2, Z_3)$, a Gaussian measurement model $p(\Xb\mid\Zb)$ giving rise to $D=2$ continuous observations $\Xb=(X_1,X_2)$, and $|\Ecal|=4$ domains (columns; different markers) corresponding to an observational reference setting and interventions on $Z_1$, $Z_2$, and $Z_3$ (from left to right).
The distributions $p^e(\Xb)$ over observed variables (top row; kernel density estimates based on $N_e = 250$ observations) are mixtures of Gaussians with fixed mixture components corresponding to the $2^3=8$ distinct joint latent states $\zb$ (different colours). The mixture weights are given by $p^e(\Zb=\zb)$ and change across domains (bottom row). 
The center row shows the underlying data annotated with the true but unobserved mixture indices; the cluster locations are shared, but the proportions of observations assigned to each cluster vary by domain.
}
\label{fig:ex_multi_domain_data}
\end{figure}

The case with continuous $\Zb$ can be quite challenging and typically requires strong assumptions such as linear causal relations and perfect single-node interventions~\citep[see, e.g.,][]{von2023nonparametric,squires2023linear,zhang2023identifiability,varici2025score,buchholz2023learning,ahuja2023interventional}.
Here, we focus on the case in which the latent causal variables are discrete, i.e., each  $Z_i$ takes values in a finite domain $\Zcal_i$. 
In this case, the marginal domain-specific distributions $p^e(\Xb)$ are given by
\begin{equation}
\label{eq:mixture_model}
\begin{aligned}
    p^e(\Xb)=\sum_{\zb\in\Zcal}p^e(\Zb=\zb) p(\Xb\mid\Zb=\zb)
\end{aligned}
\end{equation}
where $\Zcal=\Zcal_1\times...\times \Zcal_L$.
In other words, each $p^e(\Xb)$ is a \textit{finite mixture model} with $|\Zcal|$~components indexed by the latent joint states~$\zb$. 
The mixture weights are domain-specific and given by $p^e(\Zb=\zb)$, whereas the mixture distributions are shared (i.e., domain-invariant) and given by $p(\Xb\mid\Zb=\zb)$.
An example of this setting with two binary latents $Z_1,Z_2$, bivariate Gaussians for $p(\Xb\mid\Zb)$, and $|\Ecal|=3$ domains is illustrated in~\cref{fig:ex_multi_domain_data}.

Finite mixture models have been studied extensively, and various situations are known in which the mixture model is identifiable, e.g., when $p(\Xb \mid \Zb = \zb)$ is from the exponential family~\citep{yakowitz1968identifiability}.
In this case, the number of mixture components, as well as the corresponding mixture weights and distributions, can be recovered from $p^e(\Xb)$. 
However, the mixture model in~\eqref{eq:mixture_model} can also arise from a single categorical latent with $\Zcal$ states. 
Hence, mixture identifiability on its own is insufficient to reveal the underlying latent causal structure. 

\section{Hierarchical Bayesian Model for Multi-Domain Discrete CRL}
\label{sec:model}
\looseness-1
In this section, we present our Bayesian approach to learning discrete causal representations from heterogeneous environments by explicitly incorporating the causal structure and assumptions from~\cref{sec:problem_setting} into a hierarchical Bayesian generative model. 
Given the observations of $\Xb$ across multiple domains in~\eqref{eq:multi_env_data}, our goal is to infer the unobserved parts of the data-generating process: the latent causal variables and their relations, the unintervened mechanisms, the domain-specific intervention targets and intervened mechanisms, and the measurement model. 
To this end, we need to first  parametrize the various model components and place suitable priors on all unknowns.
A representation of the resulting graphical model is shown in~\Cref{fig:graphical_model}.

\subsection{Causal Variables and Causal Graph}
\looseness-1 
The causal graph $G$ determines the factorization  of $p(\Zb)$ into causal mechanisms in~\eqref{eq:causal_factorisation}.
In particular, it determines the set of parents $\PA_\ell$ of each $Z_\ell$ and thus also affects the number of parameters~$\Thetab_\ell$ for the corresponding mechanism. %
As a result, models based on different graphs can have a different number of parameters, which poses implementation challenges for comparing models and inferring a posterior over graphs. 
Therefore, we focus on the case in which the cardinalities $|\Zcal_\ell|$ of all discrete latent variables are assumed to be the same.\footnote{In principle, different joint latent states $\zb\neq\zb'$ can share the same $p(\Xb\mid\Zb=\zb)=p(\Xb\mid\Zb=\zb')$, i.e., the mixture model in~\eqref{eq:mixture_model} can be degenerate. Thus, if the cardinalities were different to begin with, we can choose the largest one for each $Z_\ell$.
Any excess capacity should then result in redundancies in $p(\Zb=\zb)$ (i.e., in the mechanism parameters $\Thetab$ and $\Thetab^e$) and $p(\Xb\mid\Zb=\zb)$.
This assumption thus comes w.l.o.g.}
\begin{assumption}[Same cardinality for all latents]
\label{ass:cardinality}
$|\Zcal_\ell|=K$ for all $\ell\in[L]$, where $K$ is known.
\end{assumption}
In this case, the latent variables are interchangeable---recall that the intervention targets $\Ical^e$ are also unknown---and we can fix an arbitrary partial causal ordering w.l.o.g.~\citep{squires2023linear,von2023nonparametric}. 
We choose the natural ordering $Z_1\preceq Z_2 \preceq ... \preceq Z_L$ for convenience.
Further, we assume that the graph is \textit{complete} w.r.t.\ this ordering, i.e.,  $\PA_\ell=\{Z_1, ..., Z_{\ell-1}\}$ for all $\ell\in[L]$.
\begin{assumption}[Fixed causal graph]
\label{ass:fixed_graph}
The latent causal graph $G$ is the complete directed acyclic graph consistent with the ordering $Z_1\preceq Z_2 \preceq ... \preceq Z_L$.
\end{assumption}
While $p(\Zb)$ may not be faithful~\citep{spirtes2001causation} to the complete graph (i.e., it may contain extra edges that are not in the true $G$), faithfulness is not strictly required~\citep{wendong2023causal} since $p^e(\Zb)$ is still Markov w.r.t.~the complete graph, i.e., the factorization~\eqref{eq:causal_factorisation} still holds.
If the $\Thetab_\ell$ are (approximately) correctly recovered, unnecessary edges can be pruned post-hoc based on the dependencies encoded in the learnt mechanisms.

\begin{figure}[tbp]
\newcommand{\xshift}{11.em}
\newcommand{\yshift}{3.5em}
    \centering
    \begin{tikzpicture}
        \node(c)[latent, yshift=-\yshift,xshift=-0.5*\xshift]{$c_\ell$};
        \node(theta)[latent, yshift=-\yshift,xshift=0.*\xshift]{$\Thetab_\ell$};
        \node(I^e)[latent, yshift=-2*\yshift,xshift=-0.5*\xshift]{$\Ical_\ell^e$};
         \node(theta^e)[latent, yshift=-2*\yshift,xshift=0.5*\xshift]{$\Deltab^e_\ell$};
        \node(v)[latent, yshift=-3*\yshift]{$\zb^{e,j}$};
        \node(x)[obs, yshift=-4*\yshift]{$\xb^{e,j}$};
        \node(M)[latent, yshift=-3*\yshift, xshift=-.875*\xshift]{$\Mb$};
        \node(sigma)[latent, yshift=-3*\yshift, xshift=.875*\xshift]{$\sigmab^2$};        
        \edge[-stealth, thick]{c}{I^e};
        \edge[-stealth, thick]{I^e,theta,theta^e}{v};
        \edge[-stealth, thick]{M,v,sigma}{x};
        \plate[inner sep=1.25em] {environments}{(theta^e) (I^e) (v) (x)}{$e\in\Ecal$};
        \plate[inner sep=.65em] {samples}{(v) (x)}{$j{=}1,...,N^e$};
        \tikzset{plate caption/.style={caption, node distance=0, inner sep=0pt,
        below left=-15pt and 0pt of #1.north east,text height=1.2em,text depth=0.3em}}
        \plate[inner sep=.35em] {variables}{(c) (theta) (theta^e) (I^e)}{$\ell{=}1,...,L$};
    \end{tikzpicture}
    \caption{\looseness-1 \textbf{Graphical model representation for Bayesian multi-domain CRL with categorical causal latents.} 
    Shaded and white circles denote observed and unobserved random variables, respectively. 
    Plates indicate that the contained subgraph and incoming edges are repeated across
    the corresponding plate indices $j, \ell, e$.
    For each domain $e\in\Ecal$, we have a sample of $N_e$ observations $\xb^{e,j}$ which result from local latent causal variables $\zb^{e,j}$ via a measurement model $p(\Xb\mid\Zb)$ parametrized by $\Mb$ and $\sigmab^2$.
    The base mechanisms of each $Z_\ell$ are parametrized by $\Thetab_\ell$, the binary unknown intervention targets $\Ical^e_\ell$ are parametrized by $c_\ell$, and the corresponding domain-specific shifts are parametrized by~$\Deltab^e_\ell$.
    }
    \label{fig:graphical_model}
\end{figure}

\subsection{Base Causal Mechanisms} 
\label{sec:model_base_mechanisms}
Since each $Z_\ell$ is discrete, all marginals and conditionals in the factorization of $p(\Zb)$ in~\eqref{eq:causal_factorisation} (i.e., the causal mechanisms) are given by (conditional) Categorical distributions. 
For all $\ell\in[L]$ and all values $\pa_\ell\in\mathcal{PA}_\ell:=\bigtimes_{Z_j\in\PA_\ell}\Zcal_j$ of the causal parents, we parametrize these categoricals as follows:
\begin{equation}
\label{eq:Categorical_mechanism}
    p\left(Z_\ell=k~\middle|~\PA_\ell=\pa_\ell\right) = 
    \softmax\left(\bm\theta_{\ell}\left(\pa_\ell\right)\right)_k
\end{equation}
for parameter vectors $\{\bm\theta_{\ell}(\pa_\ell)\in\RR^{K}\}_{\pa_\ell\in\mathcal{PA}_\ell}$ where $\softmax(\xb):\RR^K\to\Delta^{K-1}\subset(0,1)^K$ denotes the softmax function
\begin{equation}
    \softmax(\xb)_k=\frac{\exp(x_k)}{\sum_{j=1}^K \exp(x_j)}
\end{equation}
and 
  $\Delta^{K-1}=\{(\pi_1, ..., \pi_{K}):\pi_k\geq 0, \sum_{k=1}^{K}\pi_k=1\}$
denotes the $(K-1)$-dimensional probability simplex. 
We collect the parameters $\bm\theta_{\ell}(\pa_\ell)$ for all values of $\pa_\ell$ in a  matrix $\Thetab_\ell\in \RR^{K\times|\PAcal_\ell|}$.

\subsubsection{Mechanism Priors}
\label{subsec:mechanism_priors}
We place a multivariate Gaussian prior on the mechanism parameters, i.e.,
\begin{equation}
    \label{eq:mechanism_prior}
    \left(\thetab_\ell(\pa_\ell)\right)_{\ell\in[L],\pa_\ell\in\PAcal_\ell} \iidsim 
        \Ncal_K\Big(\mathbf{0}, \, \Sigmab_\Theta \Big)
        ,
\end{equation}
with covariance matrix given by
\begin{equation}
    \label{eq:covariance_theta}
    \Sigmab_\Theta= \text{diag}\left(\sigmab_\Theta^2\right) - \frac{1}{K} \sigmab_\Theta \sigmab_\Theta^\top
\end{equation}
for some hyperparameter vector $\sigmab_\Theta \in \RR_+^K$, which controls the variance of each logit dimension. 
Larger values of $\sigma_i$ increase the prior uncertainty on the corresponding logit~$\theta_i$. Conversely, smaller $\sigma_i$ values shrink the logits toward zero, concentrating the prior over nearly uniform categorical probabilities.

The specific covariance structure in~\eqref{eq:covariance_theta} is chosen to project out the constant vector $\bm1_K:=(1,1,...,1)$.
This is useful because the distribution induced by a vector of logits  $\thetab\in\RR^K$ is invariant to shifting all entries by the same constant, i.e., $\softmax(\thetab)=\softmax(\thetab+a\bm1)$ for all $a\in\RR$. 
The choice from~\eqref{eq:covariance_theta} removes the redundant degree of freedom due to the simplex constraint.

\subsection{Intervention Targets}
\label{sec:model_interventions}
Whether the base causal mechanism $p(Z_\ell\mid\PA_\ell)$ of $Z_\ell$ or an intervened version $p^e(Z_\ell\mid\PA_\ell)$ thereof is active in domain $e$ is determined by the (unknown) intervention targets $\Ical^e$. 
We model these as binary vectors $\Ical^e\in\{0,1\}^L$ where $\Ical^e_\ell=1$ indicates an intervention on $Z_\ell|\PA_\ell$ in domain $e$, and $\Ical^e_\ell=0$ indicates no intervention.
According to the ICM principle~\citep{peters2017elements}, the causal mechanisms ``do not inform or influence each other''. Hence, $\Ical^e_1, ..., \Ical^e_L$ should be independent within any domain $e$. Moreover, we assume that the intervention targets are chosen independently for each domain, i.e., that $\{\Ical^e\}_{e\in\Ecal}$ are jointly independent. 
However, we do allow the probability of an intervention to differ across $\ell\in[L]$, encoding the notion that some causal mechanisms may be more stable than others. 
The intervention targets can thus be viewed as the result of independent (but biased) coin flips, i.e., for all $\ell\in[L]$: %
\begin{align}
\label{eq:intervention_targets}
    \left(\Ical^e_\ell\right)_{e\in\Ecal}\iidsim \mathrm{Bernoulli}(c_\ell)\,,
\end{align}
where the mechanism change probabilities $c_\ell \in[0,1]$ have the interpretation of \textit{instability parameters}, with $c_\ell=0$ indicating perfect stability (i.e., the mechanism for $Z_\ell|\PA_\ell$ never changes) and  $c_\ell=1$ indicating complete instability (i.e., the mechanism changes in each new domain).

\subsubsection{Instability Priors}
\label{sec:instability_priors}
Motivated by the \textit{sparse mechanism shift} hypothesis~\citep{scholkopf2021toward,perry2022causal}, which states that distribution changes tend to only affect a few factors in~\eqref{eq:causal_factorisation},
we choose an asymmetric prior for the instability parameters $c_\ell\in[0,1]$, which puts more probability mass on small shift probabilities, thus inducing sparsity. 
Specifically, we choose
\begin{equation}
    \label{eq:shift_prior}(c_\ell)_{\ell\in[L]}\iidsim\mathrm{Beta}(\alpha_c,\beta_c)
\end{equation}
with fixed $\alpha_c,\beta_c$ such that the expected shift probability for each $Z_\ell|\PA_\ell$ in any new domain, given by $\EE[c_\ell]=\frac{\alpha_c}{\alpha_c+\beta_c}$, is small. In all experiments, we set $\alpha_c = 1,\, \beta_c = 9$,  for a prior expected sparsity of $90\%$.

\subsection{Intervened Mechanisms}  
\label{sec:model_intervened_mechanisms}
In domains where an intervention occurs on a mechanism $Z_\ell|\PA_\ell$ (i.e., if $\Ical^e_\ell=1$), we model the resulting \emph{intervened mechanism} $p^e(Z_\ell\mid\PA_\ell)$ as an exponential tilting of the base mechanism. Specifically, for any domain~$e$ and parent configuration~$\pa_\ell \in \mathcal{PA}_\ell$, we define
\begin{equation}
\label{eq:intervened_categorical}
    p^e\left(Z_\ell=k \mid \PA_\ell=\pa_\ell\right) = 
    \softmax\Big(\bm\theta_{\ell}(\pa_\ell) + \Ical^e_\ell  \bm\delta^e_\ell(\pa_\ell)\Big)_k,
\end{equation}
where $\bm\delta^e_\ell(\pa_\ell)$ encodes the intervention-specific perturbation of the logits relative to the base mechanism $\bm\theta_\ell(\pa_\ell)$, which determines both the direction of the change in probability mass across categories and the magnitude of that shift. 
Similar to the base mechanisms $\Thetab_\ell$, we collect the parameters $\bm\delta^e_{\ell}(\pa_\ell)$ for all values of $\pa_\ell$ in a  matrix $\Deltab^e_\ell\in \RR^{K\times|\PAcal_\ell|}$.

Given the intervention targets and base/intervened causal mechanisms, we can now define the distributions for the different domain-specific realizations of the latent causal variables. 
According to Assumption~\ref{assumption:shared_mechs} and~\eqref{eq:Categorical_mechanism}, for each domain $e\in\Ecal$ we have:
\begin{align}
\label{eq:domain_specific_latent_distribution}
    \left(\zb^{e,i}\right)_{i\in[N^e]}\iidsim p^e(\Zb)&
     =\prod_{\ell=1}^L 
     p^e\left(Z_\ell\mid\PA_\ell\right)
     =
     \prod_{\ell\,:\,\Ical^e_\ell=1} p^e\left(Z_\ell\mid\PA_\ell\right)%
    \prod_{\ell\,:\,\Ical^e_\ell=0}p\left(Z_\ell\mid\PA_\ell\right)
\end{align}

\subsubsection{Shift Priors}
\label{sec:shift_priors}
With finite data, arbitrarily small interventions are not statistically distinguishable from no intervention. Thus, sparsity of interventions alone, as induced by the Beta-Bernoulli prior over intervention targets from~\cref{sec:instability_priors}, is not sufficient for finite-sample recovery. A shift may be technically nonzero while inducing only an arbitrarily small, undetectable change in the conditional mechanism \(p^e(\zb_\ell \mid \pa_\ell)\). 
As formalized in Lemma~\ref{lem:kl_min_detectability}, %
interventions that induce a sufficiently small change in KL cannot be reliably distinguished from the null (i.e., unintervened) mechanism. 
This results in %
an unstable sparse
recovery problem: small environment-specific fluctuations can be explained by
many weak interventions, even when these fluctuations are due to finite-sample
noise or mild model misspecification. 

This suggests a refinement or operationalization of the sparse mechanism hypothesis~\citep{scholkopf2021toward} for finite-sample scenarios:
``interventions \textit{with non-negligible effects} are sparse''.
We formalize this by imposing the following detectability condition requiring active interventions to induce a sufficiently large distributional change.
\begin{assumption}[\(\varepsilon\)-intervention detectability]
\label{assump:epsilon_intervention_detectability}
For every candidate intervention \((\ell,e)\), the intervention is either absent or induces a KL shift of at least \(\varepsilon\) for all parent configurations: 
\[
p(\Zb_\ell \mid \pa_\ell) = p^e(\Zb_\ell \mid \pa_\ell)
\quad \text{for all } \pa_\ell \in \mathcal{PA}_\ell,
\]
or
\[
\operatorname{KL}\left(
p(\Zb_\ell \mid \pa_\ell)
\,\middle\|\,
p^e(\Zb_\ell \mid \pa_\ell)
\right) \geq \varepsilon
\quad \text{for all } \pa_\ell \in \mathcal{PA}_\ell.
\]
\end{assumption}

 Thresholded detectability conditions of this kind appear frequently in the finite-sample causal discovery literature, where causal dependencies or intervention effects must be bounded away from zero to be distinguishable from statistical noise~\citep{uhler2013geometry,zhang2012faithfulness,chevalley2025deriving}. The closest analogue to our condition is \(\varepsilon\)-interventional faithfulness~\citep{chevalley2025deriving} which imposes a very similar divergence-based lower bound on intervention-induced distributional changes. More generally, such minimum signal strength conditions are common in various settings such as sparse support recovery, where beta-min assumptions require nonzero coefficients to be sufficiently large~\citep{wainwright2009sharp}, and conditional independence testing, where alternatives are typically required to be separated in total variation distance~\citep{canonne2018testing,neykov2021minimax}.

To specify this shift prior, we begin with a similar structure to the mechanism parameters. The environment-specific shifts  $\deltab^e_\ell$ in logit space are also drawn from a multivariate normal distribution $\Ncal(\bm0,\Sigmab_\Delta)$
with covariance given by
\begin{equation}
\label{eq:covariance_delta}
    \Sigmab_\Delta=\text{diag}\left(\sigmab_\Delta^2\right) - \frac{1}{K} \sigmab_\Delta \sigmab_\Delta^\top
\end{equation}
for some hyperparameter $\sigmab_\Delta\in\RR^K_+$. Similar to $\Sigmab_\Theta$ from~\eqref{eq:covariance_theta}, the specific form in~\eqref{eq:covariance_delta} is chosen to project out logit shifts of the form $a\cdot\bm1$ which have no effect, see~\cref{subsec:mechanism_priors} for details.

For tractability, we do not enforce the KL condition in \cref{assump:epsilon_intervention_detectability} exactly. Instead, we approximate the KL-shift magnitude,
\[
\operatorname{KL}\left(
\softmax(\thetab)
\,\middle\|\,
\softmax(\thetab+\deltab)
\right),
\]
by evaluating it at \(\thetab=\bm0\) and taking its second-order expansion with
respect to \(\deltab\). This gives the approximate constraint,
\begin{equation}
    \label{eq:hessian_approximation}
    \frac{1}{2}\deltab^\top \Hb  \deltab \geq \varepsilon, \qquad \text{where} \qquad 
    \Hb = \frac{1}{K}\left( \Ib - \frac{1}{K} \bm{1} \bm{1}^\top\right),
\end{equation}
and yields a $\thetab$-independent truncation region, which keeps the shift prior computationally atractable while ruling out nearly trivial shifts. We find this approximation sufficient in the small-KL ($\varepsilon=0.1$) regime used in our experiments; see~\cref{fig:kl_normal} for an illustration and \Cref{app:KL} for details.\looseness-1

The (unnormalized) truncated shift prior is then given by
\begin{equation}
 \label{eq:truncated_normal}
\left(\deltab^e_\ell(\pa_\ell)\right)_{e\in\Ecal,\ell\in[L],\pa_\ell\in\PAcal_\ell} 
\iidsim 
p\left(\deltab \mid \Sigmab_\Delta, \varepsilon \right) 
\propto 
\mathcal{N}_K\left(\bm0, \Sigmab_\Delta\right) \cdot\mathbbm{1}\left\{\frac{1}{2}\deltab^\top \Hb  \deltab \geq \varepsilon\right\}.
\end{equation}
Empirically, we found that softer sparsity-inducing priors, such as the Beta-Binomial priors on intervention indicators alone, were insufficient to consistently recover sparse interventions. These priors penalize the number of interventions but do not rule out many weak shifts, and as such often hallucinated extra interventions when fit to data. Hard KL-thresholding was significantly more robust, as shown in~\cref{sec:evaluation}. This robustness is critical, since sparse intervention recovery is what enables recovery of the latent causal order though the \textit{sparse mechanism shift} hypothesis.

\begin{figure}[tbp]
\centering
\includegraphics[height=0.32\textwidth]{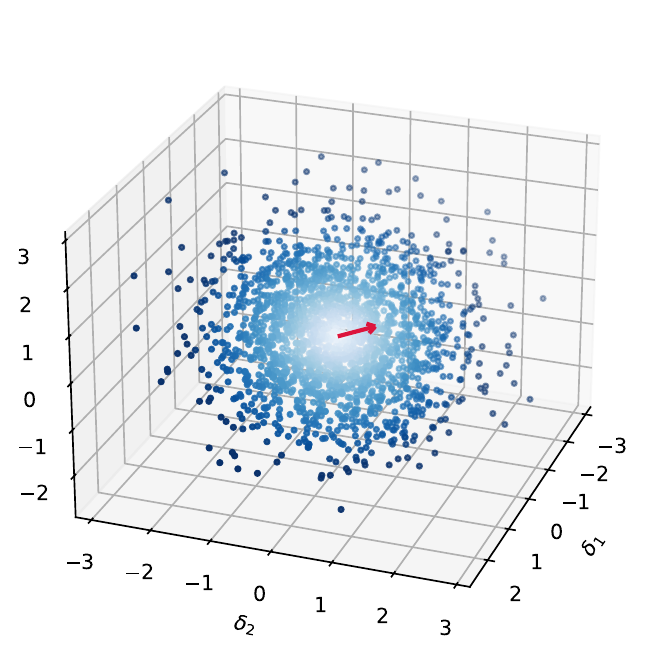}
\includegraphics[height=0.32\textwidth]{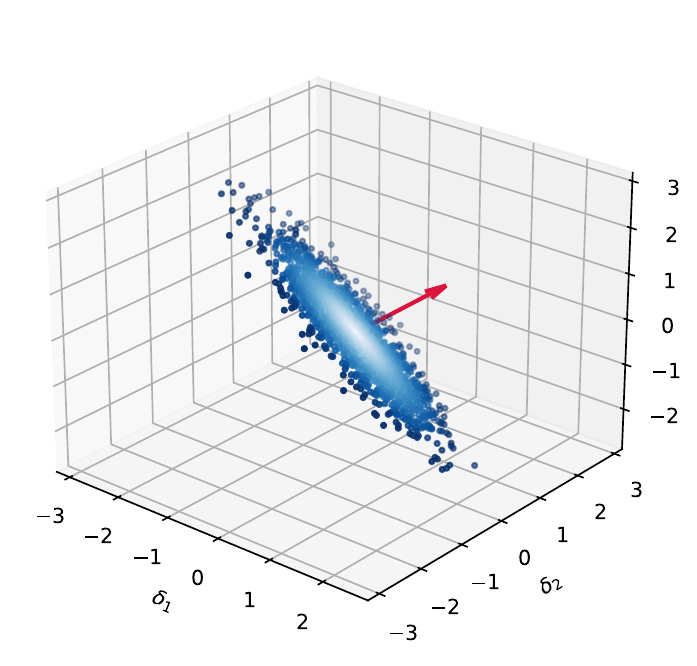}
\includegraphics[height=0.3\textwidth]{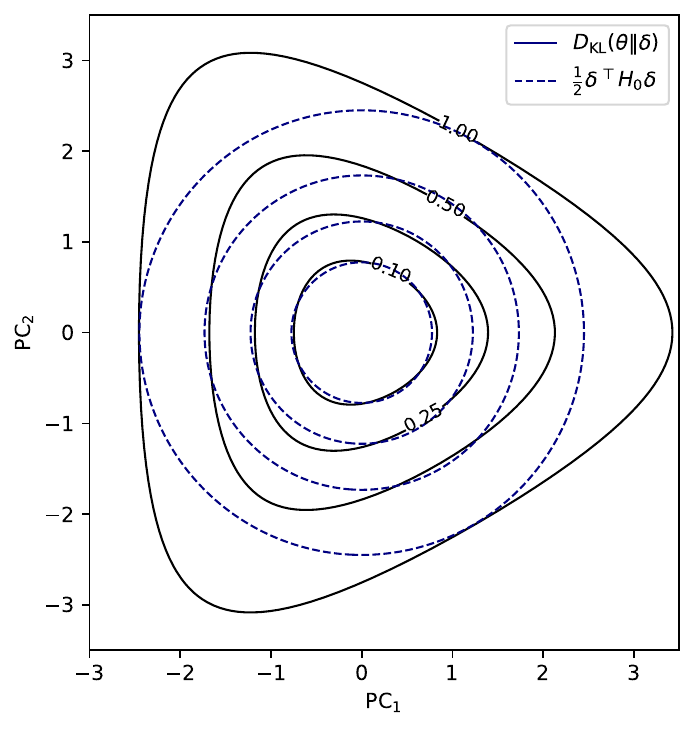}
\caption{
\textbf{Illustration of the quadratic KL constraint on $\deltab$.}
(Left, Middle) Samples of $\deltab$, colored by the quadratic approximation $\tfrac{1}{2}\deltab^\top \Hb \deltab$. 
The red arrow indicates the constant vector $\bm 1$ which is orthogonal to the subspace of shift vectors.
(Right) Contours of the exact KL divergence and its quadratic approximation, shown on a two-dimensional projection of $\deltab$ onto its principal components.}
\label{fig:kl_normal}
\end{figure}

\subsection{Measurement Model and Observations}
\label{sec:model_measurement}
Each latent vector $\zb^{e,j}$ generates a corresponding observation $\xb^{e,j}$ through a domain-invariant measurement model $p(\xb \mid \zb)$. Here, we adopt the following additive Gaussian measurement model,
\begin{align}
\label{eq:Gaussian_measurement_model}
    \xb^{e,j} &\sim \mathcal{N}_D\left(\mub\left(\zb^{e,j}\right), \mathrm{diag}\left(\sigmab^2\right)\right).
\end{align}
The mean function $\mub:\RR^{L}\to\RR^D$ is defined as
\begin{align}
\label{eq:Gaussian_measurement_model_mean}
    \mub(\zb) = \mb_0 + \sum_{\ell \in [L]} \mb_\ell z_\ell.
\end{align}
where each $\mb_\ell$ represents the additive contribution of latent coordinate $Z_\ell$ to the mean of $\xb$.
We collect these mean parameters together with an offset $\mb_0$ as columns in a measurement model matrix $\Mb \in \RR^{D \times (L+1)} = \begin{bmatrix} \mb_0 & \hdots & \mb_L \end{bmatrix}$. 

Here, each $Z_\ell$ is centered around $0$ with unit spacing:
\[
z_\ell \in \left\{1,2,\cdots, K\right\} -  \frac{K+1}{2},
    \qquad \text{e.g.,} \quad \{-2,-1,0,1,2\}\text{ or } \left\{-\frac{3}{2},-\frac{1}{2},\frac{1}{2},\frac{3}{2}\right\}.
\]
This parameterization is chosen with interpretability in mind. In particular, $\mb_\ell$ can be interpreted as an effect-size vector where large absolute entries indicate strong associations between latent variable~$z_\ell$ and observed dimensions of $\xb$.

\subsubsection{Measurement Model Priors}
We assume that the offset $\mb_0$ and the additive influences $\mb_\ell\in\RR^D$ of each $Z_\ell$ on the mean of $\Xb$ (i.e., the columns of~$\Mb$) are i.i.d.\ draws from a standard isotropic Gaussian, i.e.,
\begin{equation}
\label{eq:prior_M}
    (\mb_\ell)_{\ell\in[L]\cup\{0\}}\iidsim\Ncal_D(\bm 0, \Ib)\,.
\end{equation}
Further, we assume that the component-wise variances are sampled i.i.d.\ from an inverse Gamma prior which puts higher weight on small variances,
\begin{equation}
    \label{eq:prior_sigma}
(\sigma^2_d)_{d\in[D]}\iidsim\mathrm{Inv-}\Gamma(\alpha_\sigma=3,\beta_\sigma=1)\,.
\end{equation}

\subsection{Permutation Symmetry of the Measurement Model}
\label{sec:model_permutation}
A central aspect of the measurement model is its invariance under relabeling of latent coordinates, which is crucial to recover the causal relationships among the inferred latent variables $Z_1, \dots, Z_L$. Formally, for any permutation $\pi \in S_L$ of the $L$ latent coordinates,
\begin{equation}
\label{eq:permutation_symmetry}
p(\xb \mid \zb, \Mb, \sigmab) = p(\xb \mid \zb_\pi, \Mb_\pi, \sigmab),
\end{equation}
where 
\[
\zb_\pi = (z_{\pi(1)}, \dots, z_{\pi(L)}), \qquad
\Mb_\pi = \begin{bmatrix} \mb_0 & \mb_{\pi(1)} & \dots & \mb_{\pi(L)} \end{bmatrix}.
\]
 Because we fix the causal graph to the complete DAG with ordering $Z_1 \preceq \dots \preceq Z_L$, inference over causal relations does not proceed by modifying the graph%
 , but rather by what information is encoded in latents that come earlier (e.g., $Z_1$) versus later (e.g., $Z_L$) in the fixed causal hierarchy. The measurement model %
 determines this encoding %
 by relating latent concepts to observed responses. 

In our additive Gaussian model specifically, this interpretation is encoded in the columns of the measurement matrix $\Mb$: the column $\mb_\ell$ determines how $Z_\ell$ affects the mean of $\xb$.  Therefore, permutation of the columns of $\Mb$ acts as a permutation of the causal order of latent concepts. 
As a result, the measurement model parametrization is not arbitrary: it must admit permutations of its parameters so that any reordering of the latent coordinates can be correspondingly represented.

Importantly, while the measurement model is symmetric, the complete model is not. As we %
show in \cref{sec:multimodality}, the priors over $\Zb$ introduce asymmetries that allow distinguishing %
causal orders.

\begin{remark}[More General Measurement Models]
The property that permutations of latent coordinates can be paired with corresponding permutations of measurement model parameters to induce the same distribution is not specific to our choice of an additive Gaussian model. 
Any measurement model whose parameters can be symmetrically permuted to reflect a relabeling of the latent coordinates will exhibit the same permutation invariance, and could be used in place of the additive Gaussian model.
\end{remark}

\subsection{Complete Model} 
\label{sec:complete_model}

The full generative process described in the previous subsections is given by:
\begin{align*}
\forall \ell\in[L], \pa_\ell\in\mathcal{PA}_\ell:& & \thetab_\ell(\pa_\ell) &\sim \mathcal{N}_K\left(\mathbf{0}, \Sigmab_\Theta\right) 
\\ 
\forall \ell\in[L]:& & c_\ell &\sim \operatorname{Beta}(\alpha_c , \beta_c)
\\
\forall e\in\Ecal, \ell\in[L]:& & \Ical_\ell^e &\sim \operatorname{Bernoulli}(c_\ell) 
\\ 
\forall e\in\Ecal, \ell\in[L], \pa_\ell\in\mathcal{PA}_\ell:& & \qquad \deltab_\ell^e(\pa_\ell) &\sim p\left(\deltab \mid \Sigmab_\Delta, \varepsilon \right) 
\propto 
\mathcal{N}_K\left(\bm0, \Sigmab_\Delta\right) \cdot\mathbbm{1}\left\{\tfrac{1}{2}\deltab^\top \Hb  \deltab \geq \varepsilon\right\}
\\
\forall e\in\Ecal,\ell\in[L], j\in[N_e]:& & z_\ell^{e,j} &\sim \operatorname{Categorical}\left(\softmax\left(\thetab_\ell\big(\pa_\ell^{e,j}\big) + \Ical_\ell^e \, \deltab_\ell^e\big(\pa_\ell^{e,j}\big)\right)\right) 
\\ 
\forall \ell\in[L]\cup\{0\}:& & \mb_\ell &\sim\Ncal_D(\bm 0, \Ib)\ 
\\
\forall d\in[D]:& & \sigma^2_d & \sim\mathrm{Inv-}\Gamma(\alpha_\sigma,\beta_\sigma)
\\
\forall e\in\Ecal,j\in[N_e]:& & \xb^{e,j} &\sim \mathcal{N}_D\left(\mb_0 +\textstyle\sum_{\ell \in [L]} \mb_\ell z_\ell^{e,j},
\mathrm{diag}\big(\sigmab^2\big)\right)
\end{align*}

For notational convenience, we introduce the shorthands $\Thetab=(\Thetab_\ell)_{\ell\in[L]}$, $\cbb=(c_1, ..., c_L)$, ${\Ical^\Ecal=(\Ical^e_\ell)_{\ell\in[L],e\in\Ecal}}$, $\Deltab^\Ecal=(\Deltab^e_\ell)_{\ell\in[L],e\in\Ecal}$, $\Zb^\Ecal=(\zb^{e,j})_{e\in\Ecal,j\in[N^e]}$, and $\Xb^\Ecal=(\xb^{e,j})_{e\in\Ecal,j\in[N^e]}$,  and separately collect the parameters of the latent distributions and measurement model respectively into 
\begin{align*}
\Phib&\coloneqq\left(\Thetab, \cbb, \Ical^\Ecal, \Deltab^\Ecal\right), \\
\Psib&\coloneqq\left(\Mb, \sigmab\right).
\end{align*}
The joint distribution of all observed and unobserved variables then factorizes as follows:
\begin{equation*}
p\left(\Phib,\Psib, \Zb^\Ecal,\Xb^\Ecal\right)=
p\left(\Phib\right)\, p\left(\Psib\right) \,p\left(\Zb^\Ecal\mid\Phib\right)\,p\left(\Xb^\Ecal\mid \Zb^\Ecal,\Psib\right)
\end{equation*}
Unless otherwise specified, hyper-parameters are fixed across all experiments. 
In particular, we use $\alpha_c = 1$ and $\beta_c = 9$ as a sparsity-inducing Beta prior on the instability parameters $c_\ell$, and $\alpha_\sigma = 3$ and $\beta_\sigma = 1$ for the inverse-gamma prior on the noise variances $\sigma_d^2$. 
For the mechanism priors, we use $\varepsilon = 0.1$ as the lower threshold for the shift prior, and $\sigmab_\Theta = \sigmab_\Delta = \bm{1}$ to construct the covariances $\Sigmab_\Theta$ and $\Sigmab_\Delta$.

\section{Posterior Inference with Sequential Monte Carlo Sampling}
\label{sec:inference}
Given a set of observations $\Xb^\Ecal$ collected across multiple environments, our inference target is the joint posterior distribution over the latent causal variables $\Zb^\Ecal$, the parameters governing the latent distributions~$\Phib$, and the parameters of the measurement model~$\Psib$ conditioned on the observed $\Xb^\Ecal$. Formally, we seek to approximate
\[
p\left(\Phib,\Psib, \Zb^\Ecal \mid \Xb^\Ecal\right).
\]
As with many latent-variable models, this posterior distribution is analytically intractable. We therefore rely on Monte Carlo methods for posterior approximation.

\subsection{Conditional Conjugate Updates}
The model is constructed such that most parameters admit efficient closed-form conjugate updates. In these cases, forming the complete conditional distribution (i.e., the conditional distribution given all other variables) amounts to incrementing the natural parameters of the prior.  For example, for the intervention instability parameters ($\cbb$) and intervention target indicators ($\Ical$), these follow from Beta-Binomial conjugacy and Bernoulli updates, respectively,
\begin{align*}
(c_\ell \mid -) &\sim \operatorname{Beta}\left(\alpha_c + \sum_{e\in \Ecal }\Ical_\ell^e,\beta_c+|\Ecal|-\sum_{e\in \Ecal }\Ical_\ell^e\right), 
\\
 p(\Ical_\ell^e = \Ical \mid -) &\propto 
c_\ell^{\Ical}(1 - c_\ell)^{1-\Ical}\prod_{j\in [N_e]}
\softmax\Big(\bm\theta_{\ell}\big(\pa_\ell^{e,j}\big) + \Ical %
\deltab^e_\ell\big(\pa_\ell^{e,j}\big) \Big)_{z^{e,j}_\ell}.
\end{align*}

The notable exception to closed-form conjugacy are the logit parameters $\Thetab$ and $\Deltab$, which do not admit standard conjugate updates. We address this using Pólya--Gamma augmentation, a data-augmentation technique originally developed for Bayesian logistic regression with Gaussian priors~\citep{polson2013bayesian} and extended to multinomial models~\citep{chen2013polya}. This approach introduces Pólya-Gamma-distributed auxiliary variables that render the logistic likelihood into a conditionally Gaussian form, allowing $\Thetab$ and $\Deltab$ to be updated via standard Gaussian updates. Full conditional derivations and implementation details are provided in \Cref{app:inference_conj}.

A standard way to exploit these conditional updates to sample from the posterior is through a Gibbs sampler that iteratively samples each variable from its complete conditional distribution $p(\cdot \mid -)$.

\subsection{Sequential Monte Carlo Sampling (SMCS)}
\label{sec:SMCS}
In our case, despite the availability of efficient conditional updates, straightforward Gibbs sampling performs poorly due to its inability to mix over a multimodal posterior. As discussed in \Cref{sec:model_permutation} and illustrated in~\Cref{sec:multimodality}, permutations of the measurement model correspond to permutations of the latent causal ordering. Each such permutation defines a local posterior mode where the measurement likelihood remains the same across permutations, but the latent terms differ. Consequently, accurately recovering the causal structure requires exploring all of these modes.

We employ Sequential Monte Carlo Sampling (SMCS), a class of population-based Monte Carlo methods that are particularly effective for exploring complex, multimodal posteriors \citep{del2006sequential,dai2022invitation}. Unlike MCMC, which iteratively updates a single posterior state, SMCS uses importance sampling to evolve a population of candidate posterior states, referred to as particles, through a sequence of intermediate distributions that gradually approach the true posterior. 

\subsubsection{Tempered Posterior Targets}
In our implementation, this sequence is constructed via complete-data-likelihood tempering. Let
\[
\infty =T_0 > T_1 > \cdots > T_K = 1
\]
denote a decreasing temperature schedule. For each temperature $T_k$, define the tempered posterior
\begin{equation}
p_{T_k}(\Phib,\Psib,\Zb^\Ecal \mid \Xb^\Ecal)
\;\propto\;
p(\Phib)p(\Psib)
\left[
p\left(\Zb^\Ecal, \Xb^\Ecal \mid \Psib, \Phib\right)
\right]^\frac{1}{T_k}.
\label{eq:tempered-posterior}
\end{equation}
When $T_0=\infty$, the contribution of terms that scale with the size of the data is removed, and the target distribution reduces to the prior over global
parameters, $(\Phib,\Psib)$, together with a flat distribution over
$\Zb^\Ecal$.
As the temperature is lowered, the complete-data-likelihood contribution is gradually introduced, until the final temperature, $T_K=1$, where the target distribution is exactly the posterior distribution of interest.
Tempering has two useful interpretations. First, at high temperatures the target distribution is flattened, reducing energy barriers between posterior modes and allowing particles to explore the parameter space more freely. Second, by initially weakening the $(\Zb^\Ecal, \Xb^\Ecal)$ terms that scale with the size of the data, tempering allows the structured global priors to meaningfully guide inference before the likelihood becomes dominant. These effects make SMCS particularly effective for exploring our multi-modal posterior, where preserving the influence of our model priors is critical for valid inference over latent causal structure.

\subsubsection{SMCS Steps}
When transitioning from one intermediate distribution to the next, SMCS uses importance sampling to account for the change in the target distribution. Each particle carries an importance weight, and as the temperature decreases from $T_{k-1}$ to $T_k$, particles are \textbf{re-weighted} to adjust for this shift in the target density $p_{T_k}$. However, importance sampling alone can quickly lead to \emph{weight degeneracy}: after several temperature updates, most of the total weight may be concentrated on only a small number of particles. To mitigate this, SMCS augments importance sampling with \textbf{resampling}, which discards particles with negligible weight and duplicates higher weight particles. This introduces its own issue of \emph{path degeneracy}, where most of the particles are identical copies. This motivates the third component \textbf{mutation}, which applies Markov transition kernels to each particle. improving diversity while maintaining the current target density. This preserves the current target distribution, while increasing particle diversity and enabling local exploration of the current tempered distribution. The algorithm therefore proceeds by iterating through the temperature schedule and repeatedly applying these three operations: \textbf{reweighting}, \textbf{resampling}, and \textbf{mutation}.

\begin{algorithm}[tbp]
\caption{Sequential Monte Carlo with Likelihood Tempering}
\label{alg:smc}
\begin{algorithmic}[1]
\Require Temperatures $\{T_0,\ldots,T_K\}$, number of particles $S$, unnormalized tempered posterior from Eq.~\ref{eq:tempered-posterior} ($p_{T_k}(\Phib,\Psib,\Zb \mid \Xb)$), and tempered $p_{T_k}$-invariant MCMC kernels (\textsc{GibbsUpdate}).\looseness-1
\State %
$(\Phib,\Psib,\Zb)_0^{(s)} \iidsim p_{T_0}(\Phib,\Psib,\Zb)$ for all $s \in [S]$ \Comment{Initialize particles}
\State %
$w_0^{(s)} \gets 1/S$ for all $s \in [S]$\Comment{Initialize weights}
\For{$k = 1$ {\bf to} $K$}
    		\State Set weights $w_k^{(s)} \propto w_{k-1}^{(s)}%
		 \left[p(\Zb , \Xb  \mid \Phib, \Mb)\right]^{1/T_k - 1/T_{k-1}}$  for all $s \in [S]$\Comment{Reweight}
        	\State $\{(\Phib,\Psib,\Zb)_k^{(s)}, w_{k}^{(s)}\}_{s=1}^S 
		 	  \gets \textsc{Resample}\!\left(\left\{(\Phib,\Psib,\Zb)_{k}^{(s)}, w_{k}^{(s)}\right\}_{s=1}^S\right)$
		\Comment{Resample}

     	  	 \State $(\Phib,\Psib,\Zb)_k^{(s)} \leftarrow \textsc{GibbsUpdate}\left((\Phib,\Psib,\Zb)_k^{(s)}, T_k\right)$ for all $s \in [S]$\Comment{Mutate}
\EndFor  
\State {\bf return} $\left\{\left((\Phib,\Psib,\Zb)_K^{(s)},\, w_K^{(s)}\right)\right\}_{s=1}^S$ as approximate posterior samples
\end{algorithmic}
\end{algorithm}

\begin{enumerate}[leftmargin=*]
    \item 
\textbf{Reweighting:}
Given particles $\{(\Phib,\Psib,\Zb)_{k-1}^{(s)}\}_{s=1}^S$ with normalized weights $w_{k-1}^{(s)}$, we compute un-normalized importance weights induced by the change from $T_{k-1}$ to $T_k$:
\[
    \tilde{w}_k^{(s)}
    \;=\;
      w_{k-1}^{(s)}
     \frac{p_{T_k}(\Phib^{(s)},\Psib^{(s)},\Zb^{(s)} \mid \Xb)}{p_{T_{k-1}}(\Phib^{(s)},\Psib^{(s)},\Zb^{(s)} \mid \Xb)}
         \;=\;
    w_{k-1}^{(s)}
      \left[p(\Zb^{(s)} \mid \Phib^{(s)})  p(\Xb \mid \Zb^{(s)} , \Psib^{(s)})  \right]^{\frac{1}{T_k} - \frac{1}{T_{k-1}}}.
\]
These weights are then normalized as
\[
w_k^{(s)} = \frac{\tilde{w}_k^{(s)}}{\sum_{r=1}^S \tilde{w}_k^{(r)}},
\quad \text{so that } \sum_{s=1}^S w_k^{(s)} = 1.
\]
\item 
\textbf{Resampling:}
As tempering progresses, weight degeneracy is monitored using effective sample size,\looseness-1
\[
\mathrm{ESS}_k = \frac{1}{\sum_{s=1}^S \left(w_k^{(s)}\right)^2}.
\]
When $\mathrm{ESS}_k < \tau S$ (with $\tau = 0.5$ in our implementation), we perform a resampling step by re-drawing $S$ particles with replacement from the current set, where the expected number of times each particle is selected is proportional to its importance weight, specifically
\[
\mathbb{E}[N_s] = S w_k^{(s)},
\]
where $N_s$ denotes the number of copies of particle $s$ after resampling.  See appendix~\cref{app:inference_resampling} for exact details on the \emph{stratified resampling} \citep{Kitagawa1996Filter} scheme used.

\item
\textbf{Mutation via Gibbs updates:}
After resampling, the particles are mutated using an MCMC step that leaves the tempered posterior $p_{T_k}(\Phib,\Psib,\Zb \mid \Xb)$ invariant. In our implementation, this mutation is performed by Gibbs updates. For each particle, the sampler cycles through the latent variables and resamples each from its complete conditional distribution under temperature $T_k$.

\end{enumerate}
These three steps are repeated until the final temperature $T=1$ is reached, yielding a weighted particle approximation to the posterior:
\[
p(\Phib,\Psib,\Zb^\Ecal \mid \Xb^\Ecal)
\approx
\sum_{s=1}^S w_K^{(s)} \, \delta_{(\Phib,\Psib,\Zb^\Ecal)_K^{(s)}}
\]
A summary of the full algorithm is given in Algorithm~\ref{alg:smc}.
Code for our model and data for all experiments is available at: 
\href{https://github.com/agarg7/discrete-bayesian-crl}{\texttt{github.com/agarg7/discrete-bayesian-crl}}.

\section{Synthetic Experiments}
\label{sec:synthetic_experiments}
We first evaluate our method on synthetic data to study the  structure of the posterior and assess recovery of the latent causal structure in a controlled setting. We first examine the geometry of the posterior in \Cref{sec:multimodality}, demonstrating how the measurement model and latent mechanism priors interact to produce distinct, asymmetric posterior modes. We then compare our approach against alternative inference methods and model ablations in \cref{sec:evaluation}, evaluating both predictive fit and recovery of the underlying latent structure. These comparisons demonstrate the efficacy of both our model structure and our inference method.
\subsection{Posterior Multimodality}
\label{sec:multimodality}

\begin{figure}[tbp]
\centering
\includegraphics[width = \textwidth]{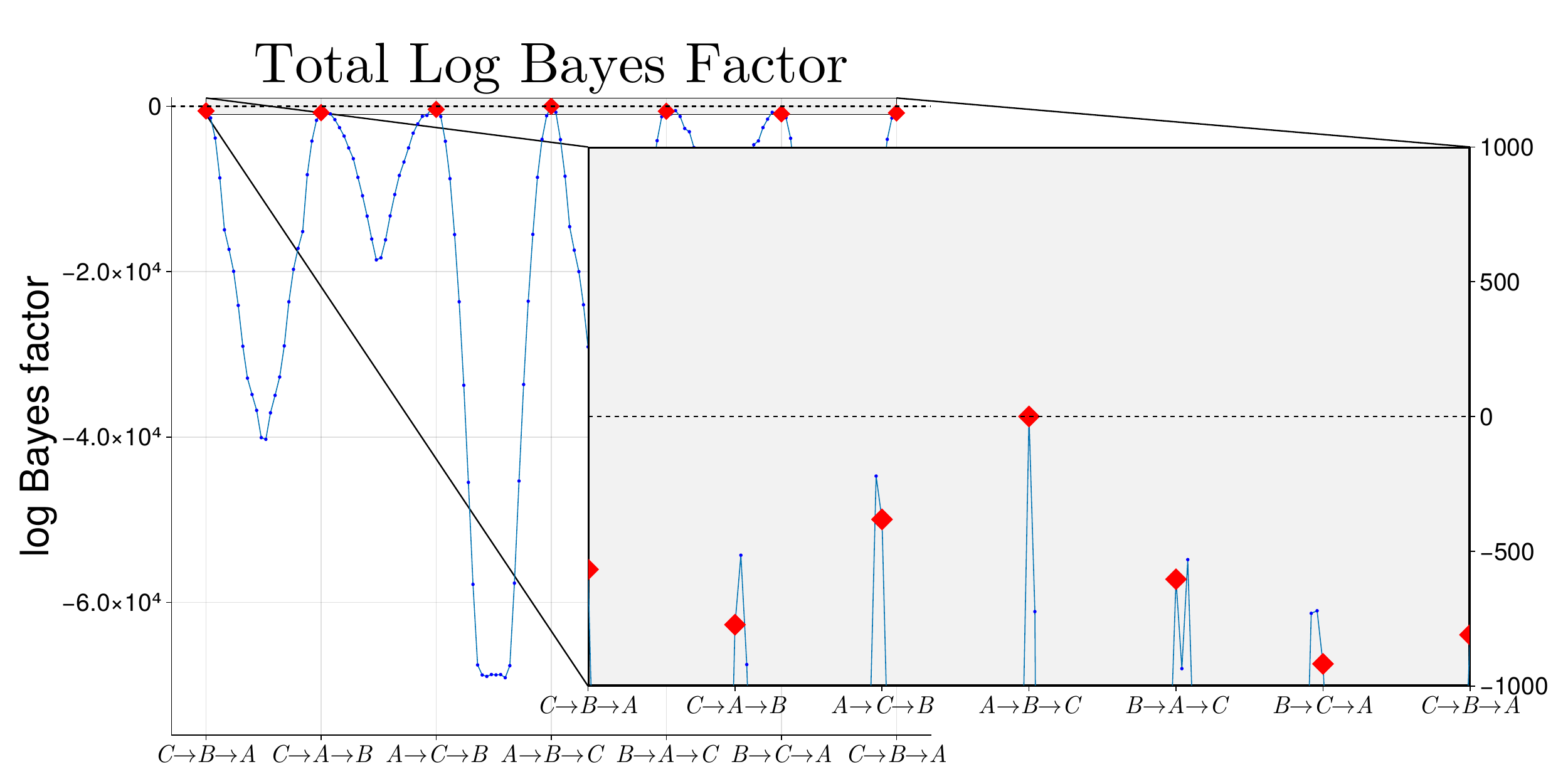} 
\caption{\textbf{Posterior multimodality across permutations of the measurement model.}\\ Each red point corresponds to a permutation of the columns of $\Mb$, with intermediate points obtained by linear interpolations between permutations. The vertical axis shows the log Bayes factor relative to the true ordering. The landscape exhibits multiple local modes, each associated with a distinct causal ordering, while the true ordering ($A\to B\to C$) attains the global optimum. The measurement model is the same as in \cref{fig:ex_multi_domain_data}, and marginal likelihoods are estimated using the method of~\citet{chib1995marginal}.\looseness-1}
\label{fig:bayes_factors}
\end{figure}

We begin by characterizing the posterior landscape induced by permutations of the latent ordering. To make this precise, we define the log Bayes factor of a permutation $\pi$ relative to the true ordering $\pi^\star$ as
\begin{equation}
\label{eq:logBF}
    \mathrm{logBF}(\pi) = \log p(\Xb, \Psib_\pi) - \log p(\Xb, \Psib_{\pi^\star})
\end{equation}
where \(p(\Xb, \Psib_\pi)\) is the marginal likelihood of the data under the measurement model corresponding to permutation \(\pi\).

Figure~\ref{fig:bayes_factors} illustrates these Bayes factors in a setting with $L=3$ binary latent variables and $|E|=16$ environments, each containing $N_e=500$ observations of dimension $D=2$. The environments consist of 4 observational environments with no interventions and 12 single-intervention environments, evenly divided across the $L=3$ latent variables. We denote the three latent factors by $(A,B,C)$ and their corresponding measurement vectors are $(\mb_1,\mb_2,\mb_3)$:
\[
\mb_0 =
\begin{bmatrix}
0\\
0
\end{bmatrix},\quad
\mb_1 =
\begin{bmatrix}
1\\
5
\end{bmatrix},\quad
\mb_2 =
\begin{bmatrix}
3\\
-1
\end{bmatrix},\quad
\mb_3 =
\begin{bmatrix}
-2\\
4
\end{bmatrix}.
\]
This is the same measurement model as in~\cref{fig:ex_multi_domain_data}; all remaining components are drawn from the priors specified in~\Cref{sec:model}.\looseness-1

These Bayes factors show the posterior landscape to contain multiple well separated, non-identical modes, with the mode corresponding to the true ordering ($A\to B\to C$) attaining the global optimum. In many latent variable models, such multimodality arises from nuisance symmetries, where multiple posterior modes correspond to equivalent representations under relabelings~\citep{jasra2005markov}. Here, however, the multimodality is neither symmetric nor a nuisance: each mode corresponds to a distinct ordering of the latent concepts. This distinction is crucial and motivates our use of SMCS~(\cref{sec:SMCS}), as recovering the latent ordering requires exploring the full multimodal posterior rather than concentrating on a single mode.

To further investigate the components of the model that create and distinguish these modes, we decompose the Bayes factors into contributions from the latent mechanism and the measurement model:\looseness-1
\begin{equation}
\label{eq:logBF_decomp}
\mathrm{logBF}(\pi) 
= \underbrace{\log p(\Zb_\pi , \Phib_\pi) - \log p(\Zb_{\pi^\star} , \Phib_{\pi^\star})}_{\text{latent mechanism contribution}}
+ \underbrace{\log p(\Xb,  \Psib_\pi \mid \Zb_\pi) - \log p(\Xb, \Psib_{\pi^\star} \mid \Zb_{\pi^\star} )}_{\text{measurement contribution}}.
\end{equation}

Figure~\ref{fig:bayes_factors_decomp} illustrates these two components and shows how the multimodal posterior structure arises from their interaction: the measurement model induces the modes, while the latent mechanism prior differentiates among them.

The measurement contribution (left panel of Figure~\ref{fig:bayes_factors_decomp}) is sharply and uniformly peaked at configurations corresponding to permutations of the true measurement model. It assigns high probability to these configurations and rapidly decays away from them, but does not distinguish between different permutations within this set. This symmetry is expected and necessary, as discussed in \cref{sec:model_permutation}, and consequently the measurement model induces a collection of symmetric modes without differentiation between them.

In contrast, the latent mechanism contribution (right panel of Figure~\ref{fig:bayes_factors_decomp}) does not induce such sharp peaks, but instead differentiates among them. Its contribution varies across permutations and assigns the highest probability to the true causal ordering ($A\to B\to C$), thereby selecting among the modes created by the measurement model. This behavior reflects the role of this component in encoding our priors over latent causal structure, as described in \cref{sec:model_base_mechanisms,sec:model_interventions,sec:model_intervened_mechanisms}. In regions between the permutation modes, this prior takes relatively favorable values even when the measurement model fit is poor. In these regions, the latent variables and parameters are effectively free to adjust in ways that optimize the prior structure without being constrained by the observed data.

Taken together, these effects create a posterior landscape characterized by sharp, well-separated peaks induced by the measurement model, with their relative evidence determined by the latent mechanism prior. This interplay explains both the multimodality of the posterior and the necessity of exploring all modes.

\begin{figure}[tbp]
\centering
\includegraphics[width = 0.5\textwidth]{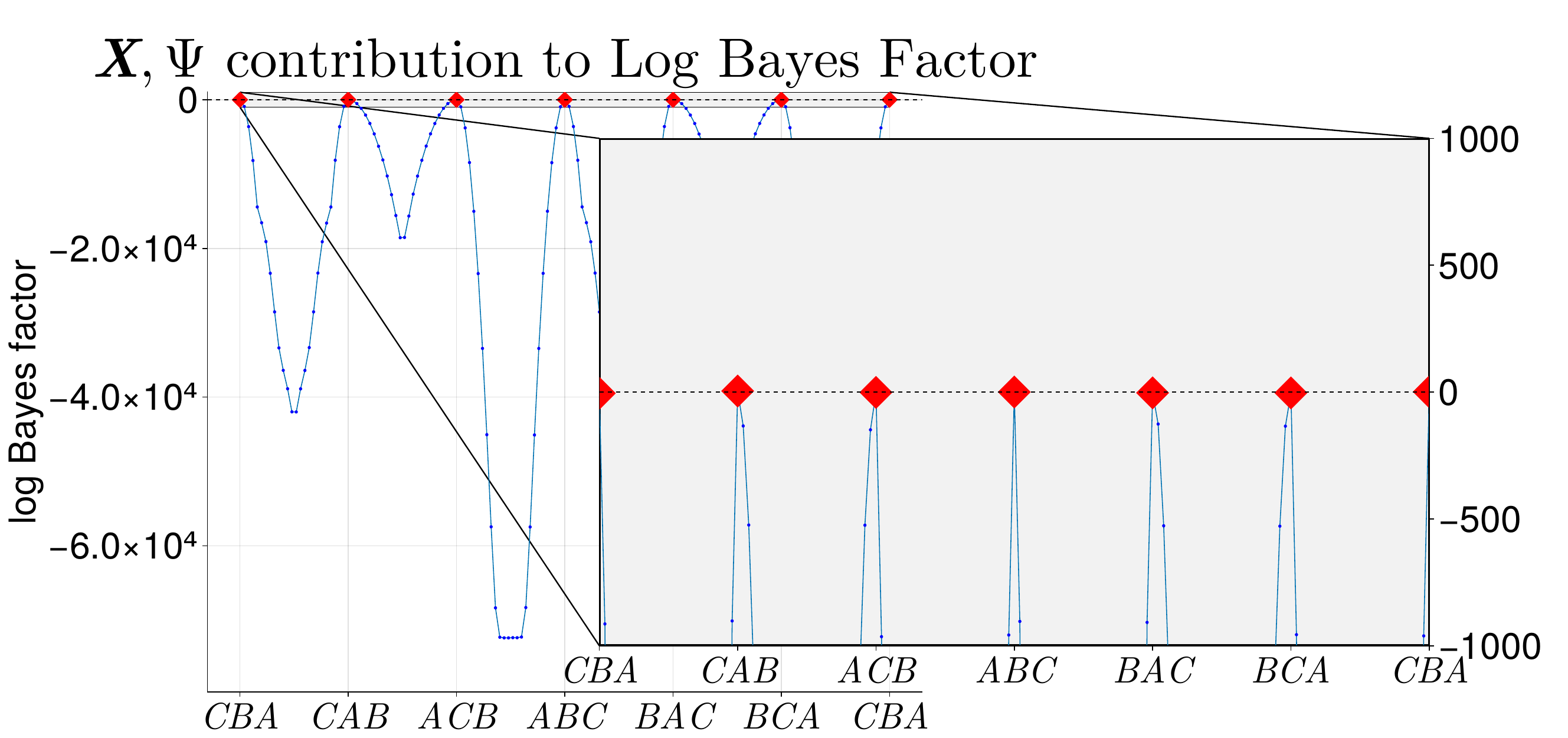}%
\includegraphics[width = 0.5\textwidth]{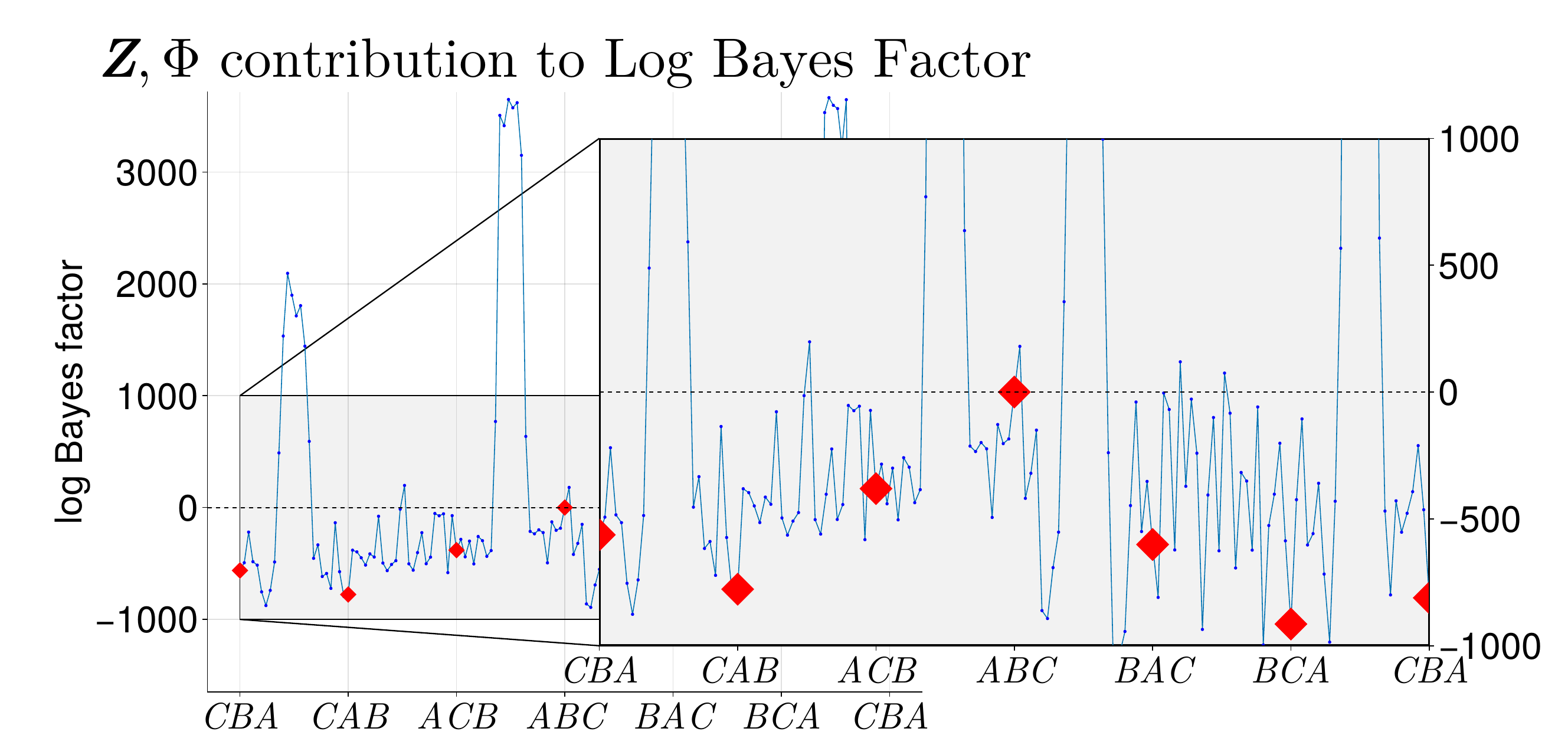} 
\caption{\textbf{Decomposition of the log Bayes factors.}
Each panel shows the contribution of a model component to the evidence difference relative to the true ordering.
The left panel shows the measurement model, 
$\log p(\Xb,\Psib_\pi \mid \Zb_\pi)$, 
which produces multiple posterior modes across permutations. 
The right panel shows the latent mechanisms, 
$\log p(\Zb_\pi,\Phib_\pi)$),
which distinguishes among these modes, favoring the true ordering.
The sum of these two terms yields the total log Bayes factor shown in Figure~\ref{fig:bayes_factors}. \looseness-1}
\label{fig:bayes_factors_decomp}
\end{figure}

\subsection{Empirical Comparison with Alternative Inference Approaches}
\label{sec:evaluation}
We compare our method against alternative inference schemes and model ablations.
For inference methods, we compare our Sequential Monte Carlo Sampling (SMCS) approach against several alternative inference strategies: a standard Gibbs sampler; Parallel Tempering (PT), which augments Gibbs sampling with multiple chains at different temperatures; and Variational Inference (VI), which approximates the posterior using a tractable family optimized via the evidence lower bound (ELBO).

\looseness-1 For model ablations, we investigate the contribution of two key components of our approach: the KL-based shift prior and the sparse intervention prior. Specifically, we consider a variant that removes the KL truncation (SMCS (no KL)) and a variant that fixes all intervention target indicators to $1$ (SMCS ($\mathcal{I} = 1$))
such that every environment is fitted using a separate joint distribution over~$\Zb$%
. These ablations highlight how key modeling choices affect both predictive performance and recovery of the underlying causal structure.

\subsubsection{Experimental Setup} 

We generate $|\Ecal| = 12$ environments with $L=3$ binary (i.e., $K=2$) latent variables, each with $N^e = 500$ observations. These environments comprise three observational environments with no interventions and nine interventional environments with a single intervention each, evenly divided among the $L=3$ latent causal variables.
The measurements $\xb$ have dimension $D=9$. The measurement model is generated with a block structure designed to mimic the structure observed in our real data. Specifically, the observed variables are partitioned into three equal blocks, with each block primarily associated with one latent variable. All entries of $\Mb$ are drawn independently from centered Gaussians, the off-block measurements with $\sigma = 5$ and the targeted blocks with $\sigma \approx 30$.
The remaining components of the dataset are from the priors as specified at the end of~\cref{sec:complete_model}.
Each method is evaluated on the same set of ten independent draws of multi-domain data.

\subsubsection{Evaluation Metrics}
We assess recovery of the intervention structure using the average difference between the learned intervention indicators 
$\mathcal{I}^\Ecal$ and the ground truth $\mathcal{I}^*$,

\[
\text{Intervention Error} = 
\frac{1}{L|\Ecal|}\sum_{e,\ell}\abs{\Ical^e_\ell-(\Ical^*)^e_\ell},
\]
which represents the fraction of intervention indicators that are incorrectly inferred.

To evaluate the measurement model, we report two complementary metrics:
\begin{align*}
\text{Measurement Error} &= 
\frac{1}{(L+1)\,D}\sum_{\ell,d}\abs{ (\Mb- \Mb^*)_{d,\ell}},
\\
\text{Aligned Measurement Error} &= 
\frac{1}{(L+1)\,D} \sum_{\ell,d} \abs{ (\Mb_{\pi^*}- \Mb^*)_{d,\ell}} ,
\end{align*}
where $\pi^*$ is the optimal label alignment relative to ground truth,
\(\pi^* \in \arg\min_\pi \norm{\Mb_\pi - \Mb^*}_{1,1}.\)

The unaligned Measurement Error reflects both recovery of the measurement model and identification of the correct latent ordering. In contrast, the Aligned Measurement Error isolates recovery of the measurement model by minimizing over label permutations. In symmetric mixture models, it is common to align samples to a consistent permutation~\citep{stephens2000labels}. Although label permutations in our model are not symmetric, the Aligned Measurement Error is useful as a diagnostic: it separates whether a method has learned the correct measurement structure from whether it has also recovered the correct latent ordering.\footnote{The model exhibits a sign-flip symmetry in the latent representation and measurement model parameters. This is truly a nuisance symmetry, and we align posterior samples to a consistent sign when computing all reported metrics in this paper (see \Cref{app:metrics} for details).}

In addition, we report the minimum and maximum runtime over the 10 tests and the log posterior predictive density~\citep[LPPD;][]{gelman2013understandingpredictiveinformationcriteria}
\[
\mathrm{LPPD}
=
\frac{1}{\sum_{e\in\Ecal}N^e}
\sum_{e \in \mathcal{E}}
\sum_{j \in [N_e]}
\log \,
\mathbb{E}_{( \boldsymbol{\Phi}, \boldsymbol{\Psi}) \sim p_{\mathrm{post}}}
\!\left[
p\!\left(
\mathbf{x}^{e,j}\mid \boldsymbol{\Phi},\boldsymbol{\Psi}
\right)
\right].
\]
Together, these metrics separate latent structure recovery from predictive fit. Intervention Error and (unaligned) Measurement Error evaluate recovery of the latent causal structure, while Aligned Measurement Error and LPPD evaluate measurement recovery and predictive performance. For sampling-based methods, we use posterior means for all estimates, see \Cref{app:metrics} for exact details.

\begin{table}[tbp]
\centering
\caption{
\looseness-1 \textbf{Comparison of inference methods.} Results are averaged over ten draws from the generative process. Lower values indicate better performance for Intervention (Int.) and Measurement (Meas.) Errors; higher values indicate better predictive fit for log posterior predictive density (LPPD).
SMCS more reliably recovers the latent causal structure, achieving lower intervention and measurement error. The ablation variants SMCS (no KL) and SMCS ($\Ical = 1$) remove the KL-based truncation prior~(\cref{sec:shift_priors}) and the intervention sparsity~(\cref{sec:model_interventions}), respectively. These variants improve predictive fit but lead to substantially worse recovery of the latent structure. %
{\footnotesize$^\dagger$50K total MCMC steps. $^\ddagger$2K iterations.} 
}
\label{tab:inference_comparison}
\resizebox{\textwidth}{!}{
\begin{tabular}{lcccrc}
\toprule
\textbf{Method} & \textbf{Int.\ Err.} ($\downarrow$) & \textbf{Meas.\ Err.}($\downarrow$) & \textbf{Aligned Meas.\ Err.} ($\downarrow$) & \textbf{LPPD} ($\uparrow$) & \textbf{Runtime} ($\downarrow$) \\
\midrule
SMCS$^\dagger$  & 
\bf{0.260} & \bf{8.45}
&4.95 & -13.3 &   10--20min \\ 
Gibbs$^\dagger$ 
& 0.341 &9.77
&\bf{4.41} & -19.1 &  20--30min \\
PT$^\dagger$    
& 0.344 & 9.61
&4.90 & -19.5 & 15--45min \\
VI$^\ddagger$ 
& 0.606 & 9.03 
&9.00 & -1078.0 & \bf{40--50sec} \\ 
\midrule
SMCS (no KL)$^\dagger$  
& 0.547 & 10.51
& 8.64 & -12.0 &   10--20min \\ 
SMCS ($\Ical = 1$)$^\dagger$  
& - & 10.89
& 7.19 & \bf{-10.5}&   10--20min \\ 
\bottomrule
\end{tabular}
}
\end{table}

\subsubsection{Results}
\looseness-1 
The results are summarized in~\cref{tab:inference_comparison}.
SMCS most reliably recovers the latent causal structure. It achieves the lowest intervention error and %
unaligned measurement error, indicating improved recovery of both the intervention structure and the latent ordering. The aligned measurement errors are relatively similar across Gibbs, PT, and SMCS. This suggests that these methods can often find a good mode of the measurement model. However, SMCS achieves substantially lower unaligned measurement error, indicating that it more consistently identifies the correct latent ordering. Consequently, we use SMCS for all subsequent experiment%
s.

The ablation results reveal a tradeoff between predictive fit and causal structure recovery. Removing the KL truncation or intervention sparsity improves LPPD, but substantially worsens recovery of the latent structure, as reflected in the Intervention and Measurement Errors. This suggests that, without the KL and sparsity constraints, the model can explain the observed data by introducing additional interventions, but this comes at the cost of not learning the true latent causal structure.

\section{Case Study I: World Values Survey}
\label{sec:case_study_wvs}
\looseness-1 We apply our model to CRL tasks motivated by survey analysis. In this domain, many important factors such as partisanship, political beliefs, or cultural values are inherently latent and cannot be observed directly. Instead, they are measured indirectly through proxies, such as responses to survey questions. A key question is how these latent views influence one another:
for example, what are the major dimensions of cross cultural variation and what causal relationships exist between different cultural values? 
Because direct interventions on these latent values are not experimentally feasible, methods that leverage %
data across multiple environments may be an appealing way to study these relationships.

We begin with the World Values Survey (WVS), a large-scale cross-national survey designed to measure people’s values, beliefs, and norms in a comparative cross-national perspective. The WVS covers topics such as democracy, religion, gender equality, and societal trust across nearly $100$ countries. We focus on the most recent complete round conducted from $2017$ to $2022$ \citep{haerpfer2022world}. 

\subsection{Modeling}
For this analysis, we select a subset of $D = 37$ questions that were answered by at least $99\%$ of respondents, and treat each country as a separate environment. Respondents with missing answers to these questions were excluded from the analysis, yielding between $382$ and $4,018$ responses per environment, for a total of $\sum_{e \in \mathcal{E}} N^e = 85,851$ responses.

We fit our model with $L = 3$ latent variables, each taking $K = 3$ discrete states. Posterior inference is performed using SMCS with $1,000$ particles and a temperature schedule consisting of 100 geometrically spaced temperatures ranging from 1 to the total number of responses ($\sum_{e \in \mathcal{E}} N^e$). The full model fitting procedure required approximately 9 hours on a multi-core compute cluster.

Since we do not have ground truth for the real world data, in contrast to~\cref{sec:synthetic_experiments}, we focus on qualitative analysis and interpretation of the results. To that end, we examine three aspects of the fitted model: 1) the learned measurement model to ascribe semantic meaning to the causal latent variables, 2) the causal influence among latents, to understand how these learned concepts causally relate to one another, and 3) the cross-country variation in latent concepts as well as the environment-specific interventions underlying these differences. 

\begin{figure}[tbp]
\centering
\includegraphics[width=\textwidth]{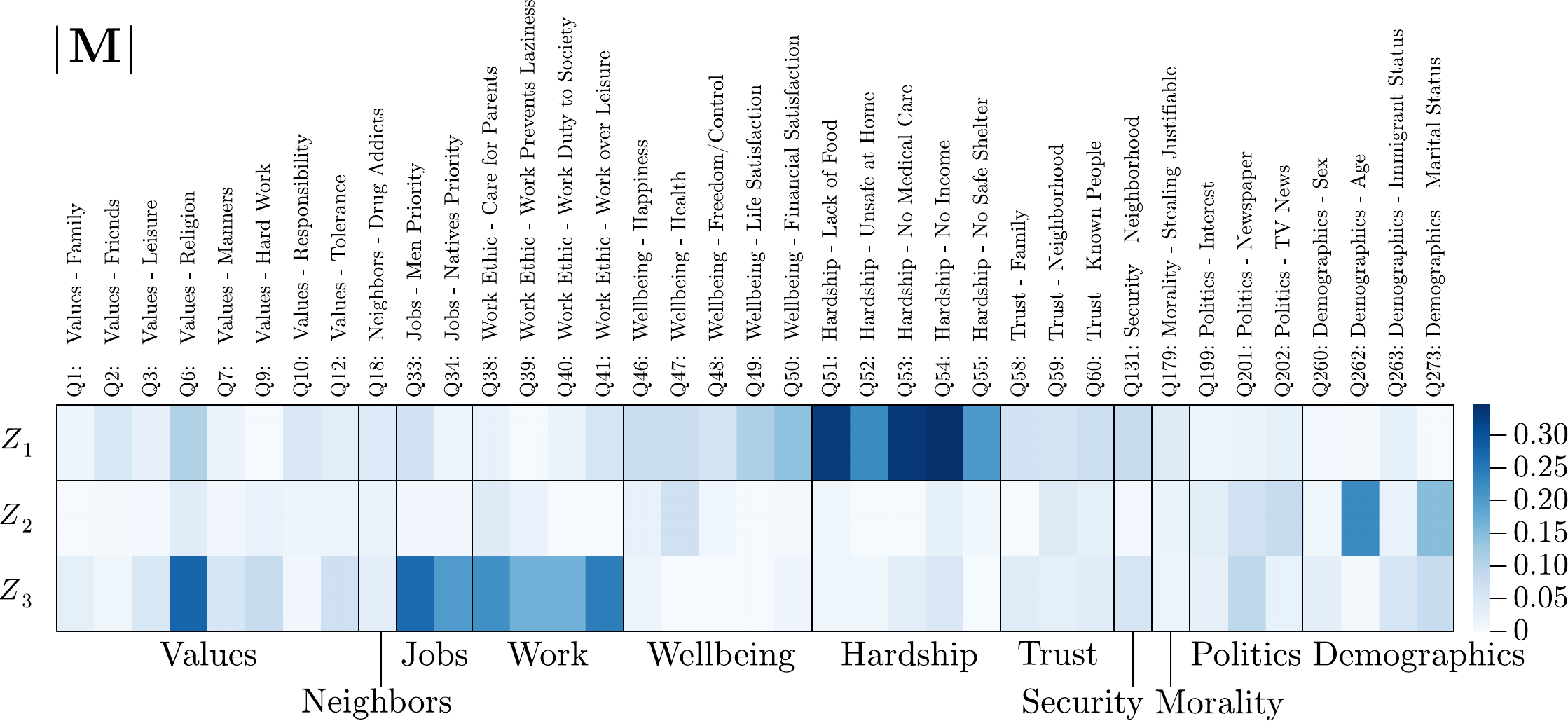}
\caption{
\textbf{Estimated measurement mapping from latent variables to WVS survey responses.}
Heatmap of the normalized posterior mean absolute measurement matrix, $|\Mb^\mathsf{T}|$.
The learned block structure is interpretable: $Z_1$ aligns with \emph{economic hardship}, $Z_2$ with \emph{demographics}, specifically age and marital status, and $Z_3$ with \emph{cultural conservatism}.
}
\label{fig:wvs_M}
\end{figure}

\subsection{Interpretation of Inferred Causal Concepts}
We organize our interpretation in two steps: first, we use the measurement model to assign semantic meaning to the inferred latent variables, and second, we examine how these latent concepts vary across countries to confirm that they are consistent with expected patterns.
\subsubsection{Interpreting the Measurement Model}
In this case study, our model associates three latent variables $(Z_1, Z_2, Z_3)$ with each survey respondent. To interpret their possible correspondence to known domain-specfic concepts, we first examine the posterior mean of the measurement matrix $\Mb$, which we visualize as a heatmap in \Cref{fig:wvs_M}. Recall that the $\ell$\textsuperscript{th} column $\textbf{m}_\ell \in \mathbb{R}^D$ of the measurement matrix specifies how the $\ell$\textsuperscript{th} latent variable $Z_\ell$ %
affects the mean of a given respondent's survey responses $\mathbf{X} \in \mathbb{R}^D$.\footnote{Details on how we deal with the ``label switching'' problem~\citep{jasra2005markov} can be found in \Cref{app:metrics}.}

 Since the World Value Survey groups questions into distinct topics, 
 we can check whether the inferred measurement matrix aligns with the block structure corresponding to the underlying topics (which is unknown to the model). 
 As shown in~\Cref{fig:wvs_M}, a clear block structure, which is consistent with the known topic structure, emerges. The first latent variable $Z_1$ loads strongly onto items in the \emph{hardship} topic, while $Z_2$ loads onto two items in the \emph{demographics} topic, specifically \textsc{Q262 Age} and \textsc{Q263 Marital Status}, and $Z_3$ loads onto the \emph{work ethic} and \emph{jobs} topics, as well as onto one single item in the \emph{values} topic, namely \textsc{Q6 Religion}. In grouping items relating to work ethic, the role of men in the workplace, and religion, $Z_3$ appears to be capturing a broader concept of religious, social, and/or cultural \textit{conservatism}. For clarity, we adopt the following descriptive labels below and from hereon refer to $(Z_1,Z_2,Z_3)$ as $(Z_{\mathrm{hard}}, Z_{\mathrm{demo}}, Z_{\mathrm{cons}})$:
\begin{align*}
Z_1 \mapsto \text{economic hardship}, \qquad 
Z_2  \mapsto  \text{demographics} , \qquad 
Z_3 \mapsto  \text{cultural conservatism}
\end{align*}
Interestingly, the inferred concepts $Z_{\mathrm{cons}}$ and $Z_{\mathrm{hard}}$ %
align well with the two axes of the Inglehart–Welzel map~\citep{inglehart2005world}: %
\emph{traditional versus secular values}, measuring the importance of religion, family values, and nationalism; and 
\emph{survival versus self-expression values}, measuring the relative desire for economic and physical well-being against the desire for diversity and pluralism. The Inglehart-Welzel map has been widely validated within political science for explaining cross-cultural variation in self-reported attitudes and values, so it is encouraging that the model infers causal concepts aligned with these axes. In additional fits with \(L=2\), we found a similar structure, but with the demographic and conservatism dimensions merged into a single shared latent factor. We emphasize that the model inferred these concepts entirely unsupervised from the data without access to the survey's pre-specified topics or any pre-imposed block structure. 
\subsubsection{Distribution of Latent Concepts across Environments}
We further explore this interpretation by examining the distribution of $(Z_{\mathrm{hard}}, Z_{\mathrm{demo}}, Z_{\mathrm{cons}})$ across countries. Specifically, for each country (i.e., environment) $e \in \Ecal$, we compute the marginal mean of each latent concept:
\begin{equation}
\mathbb{E}_{\Zb  \sim p^e(\Zb)}[Z_\ell]
\end{equation}

Throughout this analysis, we refer to the environment with no interventions as the baseline environment. This baseline does not correspond to any observed country. However,
it is still well defined in our model: the baseline distribution of $\Zb$ is obtained
by setting all interventional shifts to zero and depends only on
the shared parameters $\Thetab$.

\looseness-1 In \Cref{fig:wvs_Z}, we visualize %
the raw country-level means and their
differences from this baseline. The patterns %
align with prior expectations based both on global socioeconomic and cultural differences in general and on the Inglehart–Welzel map in particular. For example, Western countries tend to cluster in the low economic hardship, low conservatism corner, while Islamic-African countries exhibit higher levels of cultural conservatism. Latin American countries span a range of economic hardship values but generally remain near moderate conservatism levels. Notably, Venezuela appears at the extreme end of economic hardship in the 2021 survey wave, aligning with its economic crisis during this period.

The results up to here are driven by the measurement model, and the same structure was found without imposing interventional sparsity (i.e., by setting all interventions to be active as in the ablation from \cref{sec:evaluation}). While these priors do not affect the learned concepts, they do allow us to interpret the causal graph, which would otherwise be completely arbitrary.

\begin{figure}[p]
\centering
\includegraphics[width=0.99\textwidth]{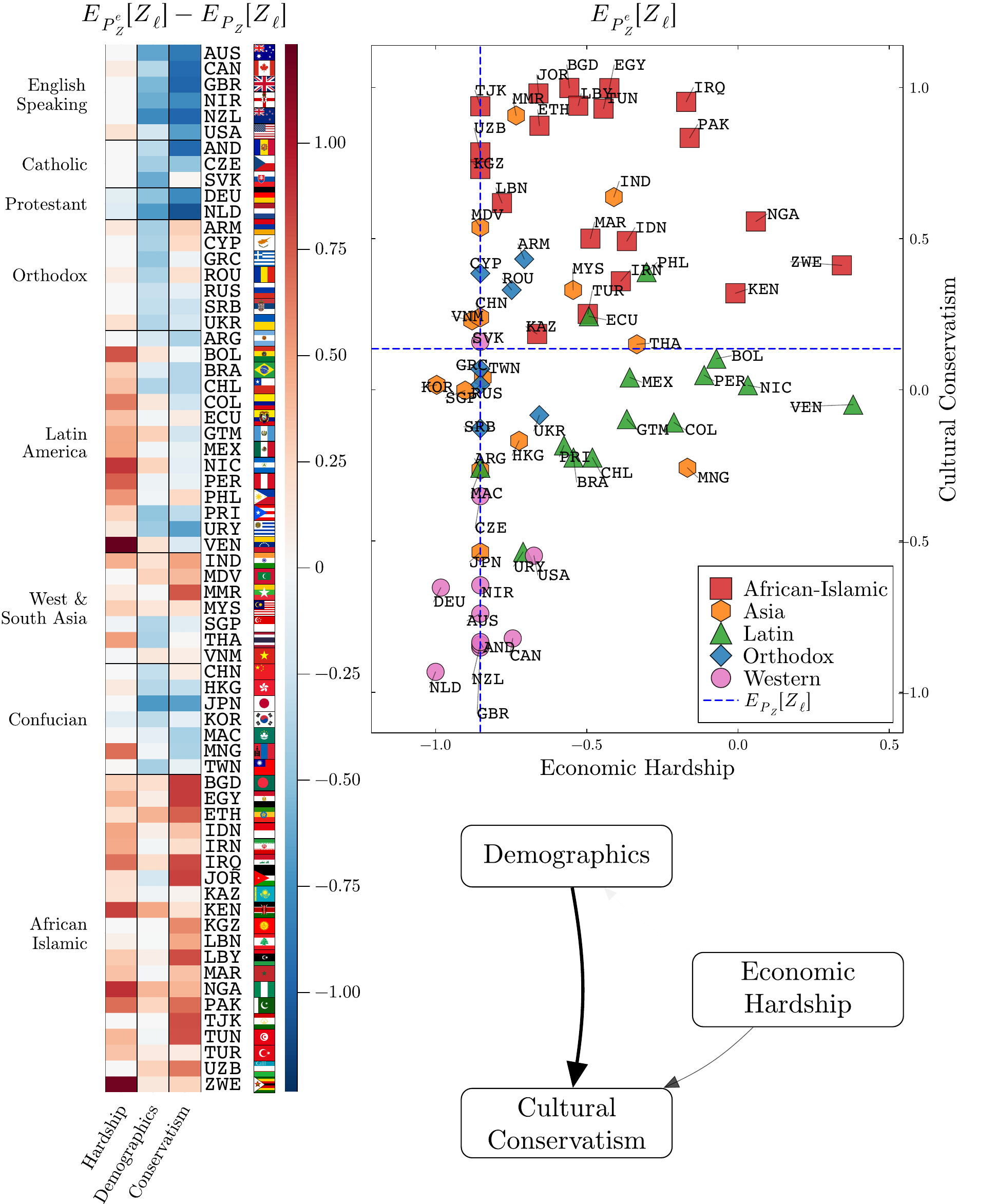}
\caption{
\textbf{Marginal distributions and baseline causal influence of latent concepts.}\\
\textbf{Left:} Country-specific deviations from baseline for each latent concept: $\mathbb{E}_{\Zb  \sim p^e(\Zb)}[Z_\ell] - \mathbb{E}_{\Zb  \sim p(\Zb)}[Z_\ell].$\looseness-1
\\
\textbf{Top Right:} Scatterplot of $Z_{\mathrm{hard}}$ versus $Z_{\mathrm{cons}}$ colored by region. 
We adopt the regional divisions of the Inglehart--Welzel cultural map~\citep{inglehart2005world}, but combine \{English-speaking, Protestant Europe, Catholic Europe\} into ``Western'', and \{Confucian, West \& South Asia\} into ``Asia''.\\
\textbf{Lower Right:} Estimated causal influence among latent concepts. Edges are proportional to causal \\influence, showing strong influences of $\{Z_{\mathrm{hard}}, Z_{\mathrm{demo}}\} \to  Z_{\mathrm{cons}}$ but negligible influence $Z_{\mathrm{hard}} \to  Z_{\mathrm{demo}}$.\looseness-1
}
\label{fig:wvs_Z}
\end{figure}

\subsection{Causal Graph Recovery} 
Having established an interpretable latent space, we can now analyze the causal structure among these latent concepts. The ordering of $\Zb$ gives the causal ordering of latent concepts as \emph{economic hardship} $\preceq$ \emph{demographics} $\preceq$ \emph{cultural conservatism}.
 
As our model fits a fully connected DAG, we apply a post-hoc method to evaluate the strength of individual dependencies. In particular, we are interested in quantifying how much a parent variable $Z_j$ impacts a child variable $Z_\ell$ beyond the effect of $Z_\ell$’s other parents.   For this purpose, we adopt the notion of \emph{causal influence} proposed by \citet{janzing2013quantifying}, which measures the change in the joint distribution when the direct influence of $Z_j$ on $Z_\ell$ is removed:
\begin{equation}
    \operatorname{CI}^{j\to \ell}_{p}:=D_\textsc{kl}\big(p(\Zb)~\big\|~ p^{j\to \ell}(\Zb)\big), \quad \mbox{where} \quad p^{j\to \ell}\big(z_\ell~|~\pa_\ell\setminus z_j\big)
    =
    \int p\big(z_\ell~|~\pa_\ell \big) p(z_j) d z_j
    \vspace{-0.25em}
\end{equation}
\looseness-1
and $p^{j\to \ell}(\Zb)$ is the interventional distribution arising from replacing $p\big(z_\ell~|~\pa_\ell\big)$ by $p^{j\to \ell}\big(z_\ell~|~\pa_\ell\setminus z_j\big)$.

\Cref{fig:wvs_Z} shows the estimated dependencies in the baseline environment, where no interventions are applied. We see that economic hardship ($Z_\mathrm{hard}$)
and demographic structure ($Z_\mathrm{demo}$) both exert substantial influence on cultural conservatism ($Z_\mathrm{cons}$), while the effect of economic hardship on demographic structure ($Z_\mathrm{hard} \to Z_\mathrm{demo}$) is negligible. This lack of influence on $Z_\mathrm{demo}$ is consistent and expected: demographic attributes are commonly modeled as root nodes in a causal graph \ag{cite?}, and are therefore not assumed to be influenced by downstream socioeconomic conditions. 

This baseline does not provide a complete picture, since environment-specific interventions may strengthen or weaken these dependencies. We therefore compute the same causal-influence measure separately within each country, as shown in \Cref{fig:wvs_CI-E}. Although the overall patterns remain similar to the baseline, the influence of \emph{economic hardship} on \emph{cultural conservatism} and the influence of \emph{demographics} on \emph{cultural conservatism} vary substantially between environments.

\begin{figure}[tbp]
\centering
\includegraphics[width=\textwidth,trim={4.25cm 1.88cm 2.25cm 0},clip]{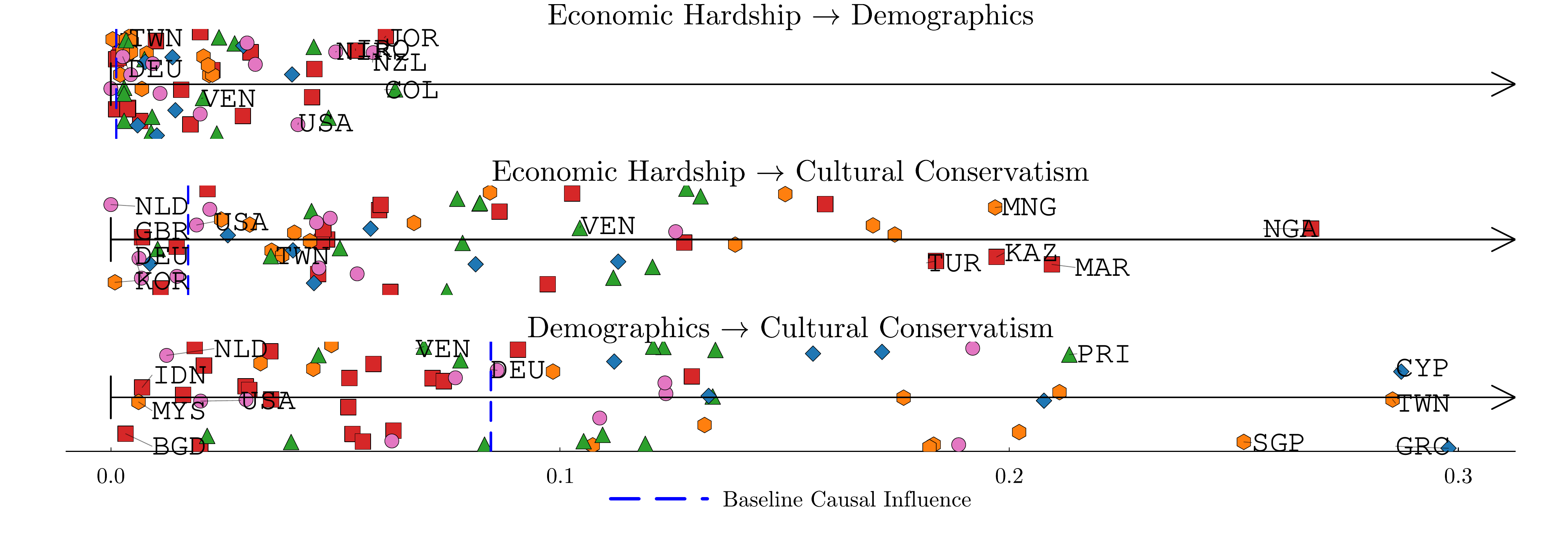}
\caption{
\textbf{Estimated causal influence of latent concepts across environments (countries).}
Each panel shows the environment-specific strength of a single dependency. The scale is shared across panels, revealing that influences of $Z_{\mathrm{demo}} \to Z_{\mathrm{cons}}$ and $Z_{\mathrm{hard}} \to Z_{\mathrm{cons}}$ vary substantially across countries, while the influence of $Z_{\mathrm{hard}} \to Z_{\mathrm{demo}}$ is relatively small across all environments.
}
\label{fig:wvs_CI-E}
\end{figure}

\subsection{Interventions}
In addition to the causal structure among latent concepts, the model also allows us to examine the form of the interventions across environments. 
\Cref{fig:wvs_ID} summarizes both the inferred presence and size of these interventions across countries. We report the posterior mean intervention matrix $\Ical$, where each entry gives the posterior probability that a given latent variable is perturbed in a given environment. This reveals mild sparsity in the inferred intervention structure. Beyond this binary intervention structure, we also examine the expected magnitude of the corresponding perturbations. For latent variable $Z_\ell$ in environment $e$, we measure this magnitude as the KL divergence between the baseline and perturbed conditional mechanisms, averaged over parent configurations:
\begin{equation}
\mathbb{E}\left[\norm{\Deltab^e_\ell}_\mathrm{KL}\right]
:=
\mathbb{E}_{\pa_\ell \sim p^e(\pa_\ell)}
\left[
D_{\mathrm{KL}}\!\left(
p\left(Z_\ell \mid \pa_\ell\right)
\,\middle\|\,
p^e\left(Z_\ell \mid \pa_\ell\right)
\right)
\right].
\end{equation}
This quantity summarizes how strongly an intervention perturbs its associated latent mechanism.  We find that many interventions have relatively small effects, even when the corresponding entries of~$\Ical$ are nonzero, suggesting that the effective sparsity is higher than what is implied by $\Ical$ alone.
\begin{figure}[tbp]
\centering
\includegraphics[width = \linewidth,trim={0 0 2cm 0},clip]{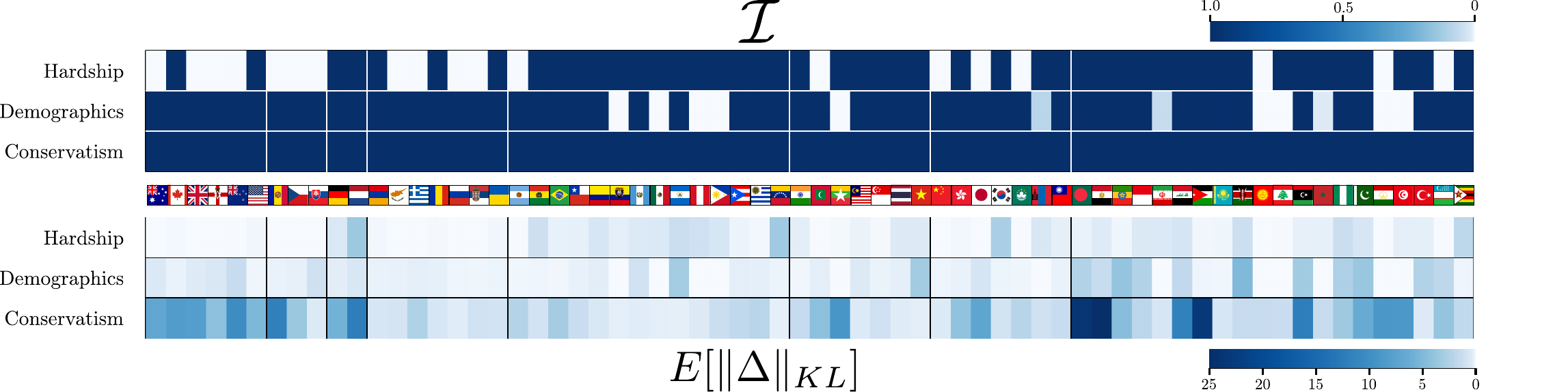} 
\caption{
\textbf{Intervention indicators and magnitudes across environments.}
(Top) Posterior mean intervention matrix $\Ical$, indicating the probability that each latent variable is intervened on in each environment. 
(Bottom) Expected magnitude of the intervention, as measured by the induced shift in $p^e(Z_\ell)$, summarizing the sizes of each environment-specific perturbation.}
\label{fig:wvs_ID}
\end{figure}

We also examine the mechanisms underlying these interventions. Specifically, $\deltab_{\mathrm{cons}}^e(Z_{\mathrm{hard}}, Z_{\mathrm{demo}})$, which describes the direction of the perturbation: that is, how the conditional distribution of $Z_{\mathrm{cons}}$ is shifted relative to the baseline mechanism. Since each parent configuration $(Z_{\mathrm{hard}}, Z_{\mathrm{demo}})$ has its own intervention vector, this analysis allows us to assess whether an intervention acts consistently or varies with the parent state.

\Cref{fig:wvs_Delta_line} presents these vectors projected onto a two-dimensional basis spanning the intervention space. We find that many Western countries, such as the United States, exhibit negative shifts across all levels of economic hardship and demographic structure. Formally, $p^e(Z_{\mathrm{cons}} \mid Z_\mathrm{hard}, Z_\mathrm{demo})$ places more probability mass on lower values of $Z_{\mathrm{cons}}$ than $p(Z_{\mathrm{cons}} \mid Z_\mathrm{hard}, Z_\mathrm{demo})$ for all $(Z_\mathrm{hard}, Z_\mathrm{demo})$. This suggests that, in these countries, the intervention primarily acts as a consistent downward shift in the level of \emph{conservatism}. In contrast, some countries such as Taiwan display more heterogeneous interventions, with both direction and magnitude varying substantially across parent configurations.
\begin{figure}[tbp]
\centering
\includegraphics[width=\textwidth,trim={0 1.75cm 0 0},clip]{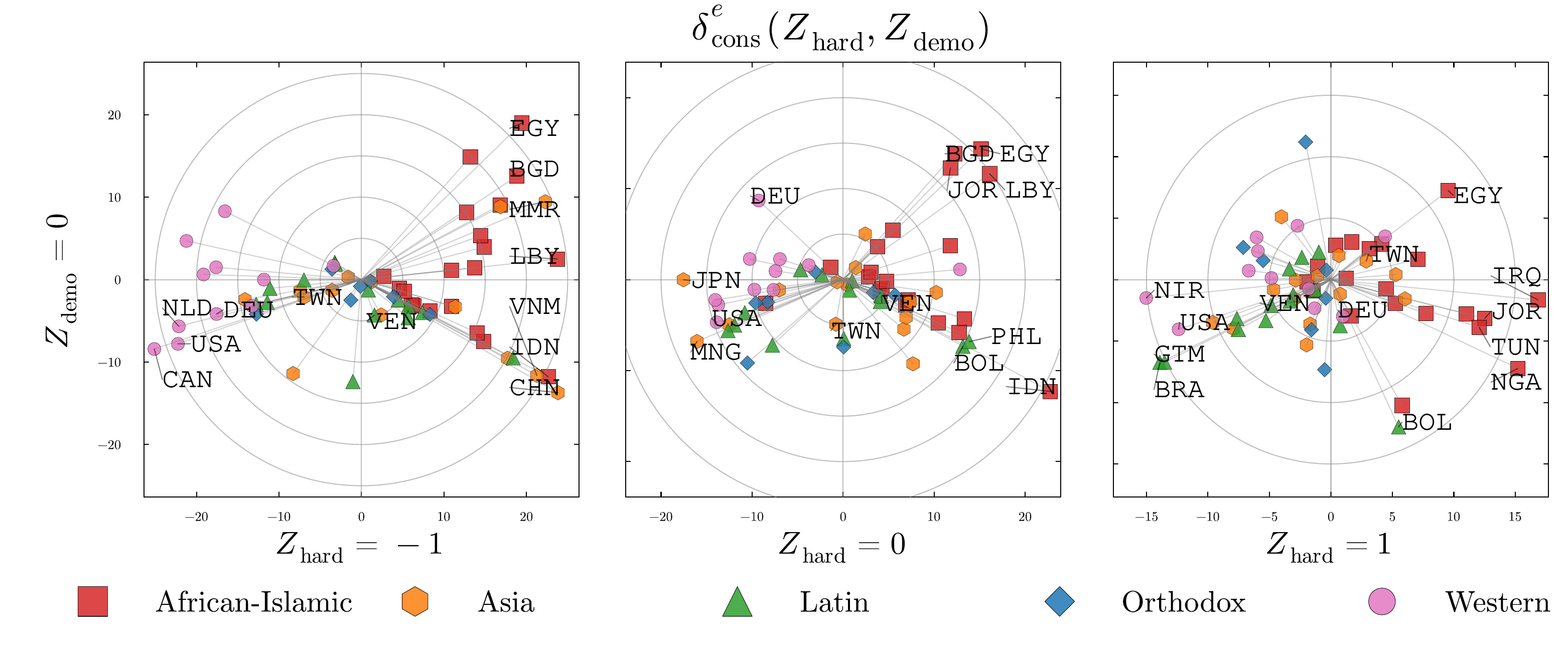}
\caption{\textbf{Intervention directions across parent configurations and environments.}
Posterior mean intervention vectors $\deltab_\mathrm{cons}^e(\pa)$ for each country, shown for $Z_\mathrm{demo}=0$ and all configurations of $Z_\mathrm{hard}$. Each stem represents an environment; its length indicates intervention magnitude and its orientation indicates direction.
 Vectors are expressed in an orthogonal basis for the two-dimensional intervention subspace; the first direction captures shifts in the mean of $Z_{\mathrm{cons}}$ and the second captures changes in dispersion. See \cref{app:metrics_intervention_fig} for the full grid over parent configurations.}\looseness-1
\label{fig:wvs_Delta_line}
\end{figure}

\section{Case Study II: US Political Opinion} 
\label{sec:case_study_us}
As a second empirical test case, we apply our model to American political opinion data, using both real and LLM-generated survey responses. In this setting, we examine how opinions on specific topics and policies interact with one another and with partisan identity. We first analyze real survey data, using geographic and urbanicity-based environments to study the structure naturally present in observed political attitudes. We then construct LLM-generated survey data based on the same survey design, but with controlled interventions, allowing us to investigate a richer causal structure.

\subsection{Real Data}
\label{sec:case_study_us_real}
We begin with a survey dataset of American political opinion collected by the polling firm PredictWise during the period 2017--2022. This dataset contains 75.5k item-level responses from 7.8k U.S.\ respondents collected between 2017 and 2022. Each respondent answers a battery of questions focused on one of five political topic pairs (e.g., elites \& racism or trade \& regulation). The surveys were designed with 7 questions per issue topic and an additional 3 general questions about party and vote choice preferences for a total of $D=17$ questions per survey.
\begin{figure}[tbp]
\centering
\begin{subfigure}{\textwidth}
    \centering
    \includegraphics[width=\textwidth]{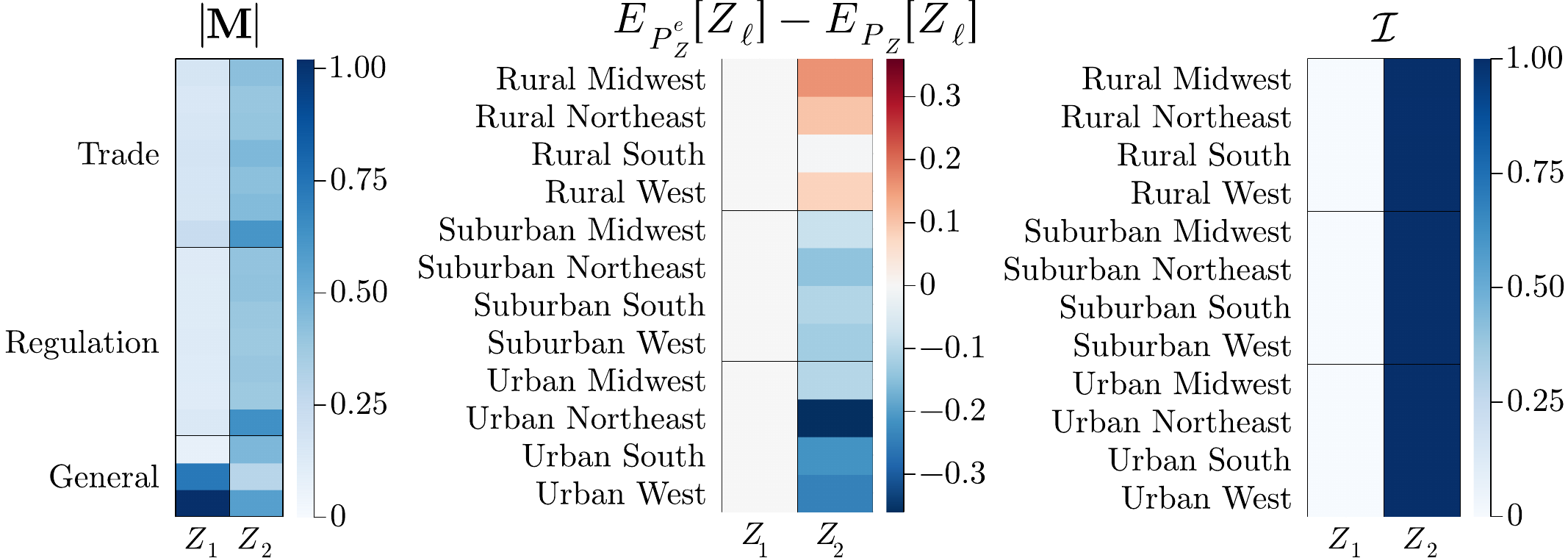}
    \caption{\textbf{Trade and Regulation}.}
    \label{fig:pollfish_TR}
\end{subfigure}
\begin{subfigure}{\textwidth}
    \centering
    \includegraphics[width=\textwidth]{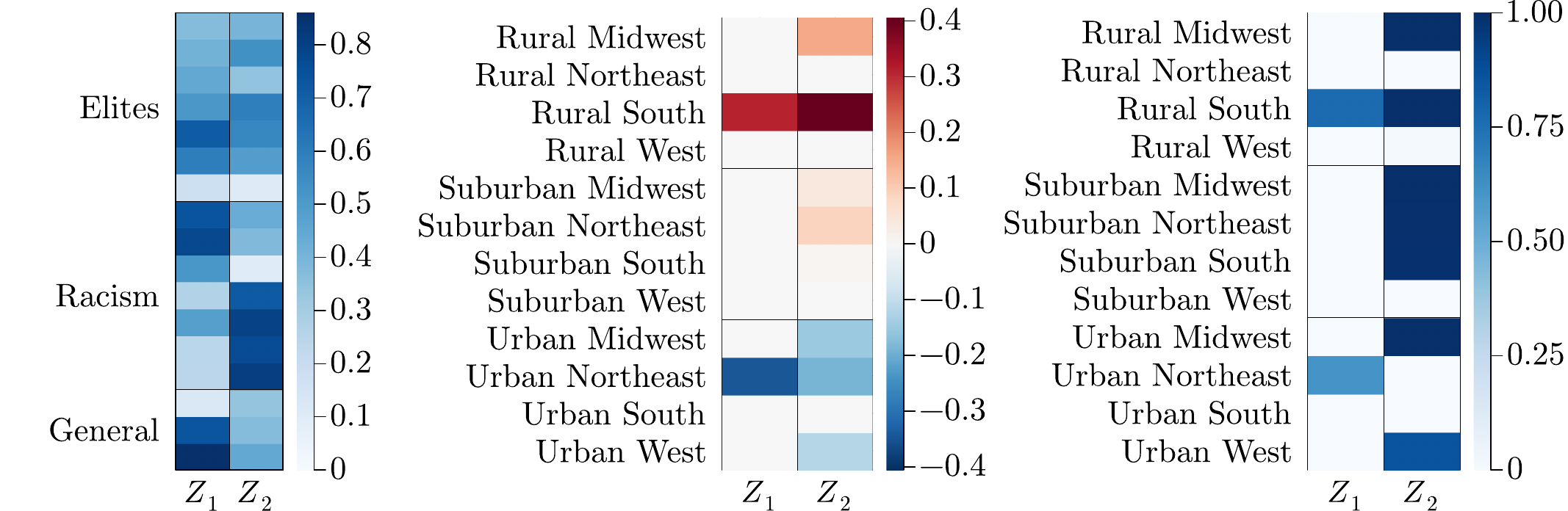}
    \caption{\textbf{Elites and Racism}.}
    \label{fig:pollfish_ER}
\end{subfigure}
\caption{
\textbf{Posterior summaries for the US political opinion surveys.}
Each panel shows the learned measurement model ($\Mb$), intervention probabilities ($\Ical$), and environment-specific deviations from baseline for each latent variable ($\mathbb{E}_{\Zb  \sim p^e(\Zb)}[Z_\ell] - \mathbb{E}_{\Zb  \sim p(\Zb)}[Z_\ell]$). 
The \emph{Trade and Regulation} results (\subref{fig:pollfish_TR}) are largely one-dimensional, separating rural--urban environments along a left--right political gradient. The \emph{Elites and Racism} results (\subref{fig:pollfish_ER}) show a similar rural--urban gradient, but with less consistent variation and more pronounced deviations in the rural South and urban Northeast.
}
\label{fig:pollfish}
\end{figure}
\subsubsection{Modeling}
For this analysis, we look at two of these surveys separately, one on Trade and Regulation and another on Elites and Racism. We partition the data into 12 environments based on the Cartesian product of the 4 Census Regions (West, Midwest, South, and Northeast) and 3 urbanicity levels (Rural, Suburban, and Urban) and fit our model with $L = 2$ with two latent variables, each with cardinality $K =5$.

\subsubsection{Results}
Compared to the WVS analysis, the differences between environments are less nuanced here. The model effectively identifies a one-dimensional latent structure that explains most of the variation across environments.  This is consistent with widely documented trends in U.S.\ politics, where both congressional voting patterns~\citep{poole2000congress} and issue opinions among the public~\citep{green2002partisan,mason2018one} tend to collapse along a single partisan axis.

Although the structure learned from this dataset is rather limited, the patterns that are recovered align with well-documented political trends.  For the \emph{Trade and Regulation} survey~(\cref{fig:pollfish_TR}), responses exhibit a clear urbanicity gradient: respondents in rural environments are shifted toward more right-leaning latent positions, those in urban environments toward more left-leaning positions, and suburban respondents lie in between. For the \emph{Elites and Racism} survey~(\cref{fig:pollfish_ER}), the structure is more intricate. Although an urban--rural divide remains, the largest shifts are concentrated in particular regions: the rural South and the urban Northeast. Overall, these results
reflect the largely one-dimensional nature of U.S.\ public opinion and provide a baseline for how the model behaves in low-signal regimes; it recovers simple but well-known trends with more heterogeneous data required to uncover non-trivial structure.\looseness-1

\subsection{LLM-Generated ``Interventional'' Survey Data}
\label{sec:LLM_data}
To further test our methodology, we next consider a synthetic setting using LLM-generated survey responses. We generate data that mirrors the structure of the original questionnaire while replacing natural environments with explicit interventions designed to satisfy the sparse-mechanism-shift assumption. Unlike the preceding analyses, which study causal structure directly from real survey responses, this experiment studies the causal structure implicit in the LLM’s model of political opinions. As such, our goal is not to claim that these synthetic responses reveal the true causal structure of human political opinions. Rather, we use them as a controlled benchmark with more nuanced latent causal structure than is available in the real survey responses. 

This setting is useful for two reasons. First, because the data are generated through a controlled intervention pipeline, we have access to a form of ground truth about the causal structure against which we can compare our results. Second, the synthetic design produces well-structured cross-environment variation, allowing us to evaluate our method in a richer setting with five latent causal variables and a non-trivial causal graph. This is substantially more complex than the preceding experiments, which involved only two or three latent variables.

\subsubsection{Data generation Process}
\looseness-1 To produce synthetic data with known interventions, we employ a four-stage LLM-driven pipeline. Since LLMs have been shown to generate representative survey responses \citep{argyle2023out}, they provide a natural way to simulate realistic responses under controlled interventions on latent political concepts. Our pipeline uses this capability to construct synthetic datasets with specified interventions: starting from demographic samples, it extrapolates detailed respondent profiles, applies specific interventions, and finally generates survey responses consistent with the intervened profiles.
\begin{enumerate}[itemsep=0.em, topsep=0.5em]
    \item[(i)]\textbf{Sample demographic data from real survey respondents (PredictWise Data).}
    Provides a realistic distribution of demographic variables such as age, education, income,~etc.
    \item[(ii)] \textbf{Expand demographic data into detailed respondent profiles (LLM).}
    Expands the raw demographic data into detailed profile including their beliefs, opinions and reasons behind them. This increases variability leading to more diverse responses representative of differences between individuals not captured by the demographic data from step (i) alone.\looseness-1
    \item[(iii)] \textbf{Apply causal interventions to respondent profiles (LLM).}
     Modifies the profile with a specified intervention and updates causally downstream beliefs and reasoning accordingly.
    \item[(iv)] \textbf{Generate full survey responses based on intervened profiles (LLM).}
\end{enumerate}
We found that, compared to a one-step approach of directly prompting the LLM to “answer as [X],” this multi-stage approach reduced causal leakage and mode collapse, in which responses under a given intervention tended to concentrate on a small number of nearly identical answer patterns. An illustration of the full pipeline is shown in \cref{fig:llm_3step}; further details, including the exact prompts used and samples from each step, can be found in the repository with our code and datasets.

\begin{figure}[tbp]
\centering
    \begin{tikzpicture}
        \tikzstyle{box} = [rectangle, rounded corners, 
        minimum width=3cm, 
        minimum height=1cm,
        text centered, 
        draw=black, 
        fill=black!10]

        \node[draw=none] (start) {};
        \node[box, right=2cm of start, align=center] (box1) {Demographic Data\\{\tiny male, >54, married,...}};
        \node[box, right=2cm of box1, align=center] (box2) {Detailed Profiles\\{\tiny John Smith, Former Dental Technician ...}};

        \node[box,fill=green!10, below= 0.75cm of box2,xshift=1cm, align=center] (box3c) {Interventional Profiles\\{\tiny John Smith, Pro-Immigration, ...}};

        \node[box,fill=red!10, left= 1cm of box3c, align=center] (box3b) {Interventional Profiles\\{\tiny John Smith, Republican, ...}};

        \node[box,fill=blue!10, left= 1cm of box3b, align=center] (box3a) {Interventional Profiles\\{\tiny John Smith, Democrat, ...}};

        \draw[->] (start) -- (box1) node[midway, above] {(i)}
        node[midway, below] {\shortstack{PredictWise\\Data}};

        \draw[->] (box1) -- (box2) node[midway, above] {(ii)} node[midway, below] {LLM};
        \draw[->] (box2) -- (box3a) node[midway, left] {};
        \draw[->] (box2) -- (box3b) node[midway, left] {};
        \draw[->] (box2) -- (box3c) node[midway, left] {(iii)}
        node[midway, right] {LLM};

        \node[box,fill=red!10, below= 0.75cm of box3b, align=center] (box4b) {Survey Responses\\{\tiny X1:10, \ X2:85, \ X3:55, ...}};
        \node[box,fill=blue!10, below= 0.75cm of box3a, align=center] (box4a) {Survey Responses\\{\tiny X1:10, \ X2:25, \ X3:60, ...}};
        \node[box,fill=green!10, below= 0.75cm of box3c, align=center] (box4c) {Survey Responses\\{\tiny X1:90, \ X2:85, \ X3:55, ...}};

        \draw[->] (box3a) -- (box4a) node[midway, left] {};
        \draw[->] (box3b) -- (box4b) node[midway, left] {};
        \draw[->] (box3c) -- (box4c) node[midway, left] {(iv)} node[midway, right] {LLM};
    \end{tikzpicture}
    \caption{
\textbf{LLM sampling scheme for generating interventional survey data.}
We (i)~sample demographic covariates from real survey respondents, (ii)~use an LLM to expand them into detailed respondent profiles, (iii)~apply specified interventions to create multiple intervened versions of each profile, and finally (iv)~generate survey responses based on the intervened profiles. This pipeline preserves heterogeneity of the final survey responses while introducing controlled interventions.
}
    \label{fig:llm_3step}
\end{figure}

With this approach, we generate $|\Ecal|=11$ sets of synthetic survey results: one baseline with no interventions, and five interventional pairs in which partisanship and views on four issue topics are shifted in either direction (e.g., democrat and republican). For each synthetic respondent, we generate a total of $D=27$ responses, divided among the five question topics, with exact question wording taken from the original PredictWise surveys:
\begin{itemize}
    \item \textbf{General}: Three questions about political party affiliation, approval of Donald Trump, and intended vote choice for the House of Representatives.
    \item \textbf{Regulation}: Six questions about whether government regulations are necessary to protect the public interest, including regulations of pollution, workplace safety, banks, and product safety.
    \item \textbf{Trade}: Six questions about support for free trade and its associated trade-offs, such as cheaper goods, job displacement, quality control, stress on the borders, and increased competition.
    \item \textbf{Immigration}: Six questions about whether recent immigrants strengthen the country, labor market, cultural life, and national security, as well as whether the US--Mexico border is secure.
    \item \textbf{Poverty}: Six questions about whether poor people have sufficient government support to attain education, shelter, healthcare, job training, and food.
\end{itemize}
The five intervention targets used to generate the interventional profiles are matched to the latent political topics underlying these questions: general partisanship, regulation, trade, immigration, and poverty. Thus, the observed variables $\{X_1,\dots,X_{27}\}$ form blocks associated with $\{Z_1,\dots,Z_5\}$, suggesting a sparse $\Zb \to \Xb$ relationship. This sparsity, however, is not imposed as a hard constraint on the generated responses; interventions on one latent topic may also affect questions in other blocks, either through downstream causal effects or through mixing of latent topics within the LLM.

Because the LLM data were generated under known interventions, we can compute a ``ground-truth'' average treatment effect (ATE) for each latent topic. This quantity is computed directly from the raw synthetic data using the known intervention structure, which is not provided to our model, and therefore serves as a reference for the causal structure recovered by our model.
Specifically, for each question $d  \in \{1 \ldots 27\}$ and each intervention target $\ell \in \{1 \ldots 5\}$,
\begin{equation}
\label{eq:ate}
\text{ATE}(d, \ell) = 
\EE_{do(Z_\ell \to + 0.5)}  \left[X_d \right]
- 
\EE_{do(Z_\ell \to - 0.5)}  \left[X_d \right],
\end{equation}
where $do(Z_\ell = \pm 0.5)$ denotes a hard intervention setting latent topic $\ell$ to the left-leaning $(+0.5)$ or right-leaning $(-0.5)$ position.\footnote{The values $\pm 0.5$ arise from the binary encoding of the intervention targets with unit spacing, as defined in~\cref{sec:model_measurement}.} For example, $do(Z_{\text{Party}} = -0.5)$ denotes the environment in which all profiles have been intervened on to be Republican. These ATEs capture the total effect of each intervention on each observed question, including both direct effects and indirect effects mediated through other latent topics.

The resulting ATE structure is shown in \cref{fig:llm_ATE}. The heatmap is nearly block lower triangular, indicating that LLM sampling scheme largely respected a causal ordering rather than forming arbitrary bidirectional dependencies. Off-diagonal blocks indicate the total effect that an intervention on one latent topic has on questions associated with a different latent topic, capturing the combined direct and indirect causal effects among latent concepts. The resulting graph shows a strong \texttt{Regulation $\to$ Poverty} effect as well as \texttt{Partisanship $\to\{$Regulation,Poverty,Immigration}$\}$ effects, providing a reference for the causal effects we expect our model to recover.

\begin{figure}[tbp]
\includegraphics[width = 0.6 \textwidth,trim={0 1cm 0 1cm},clip]{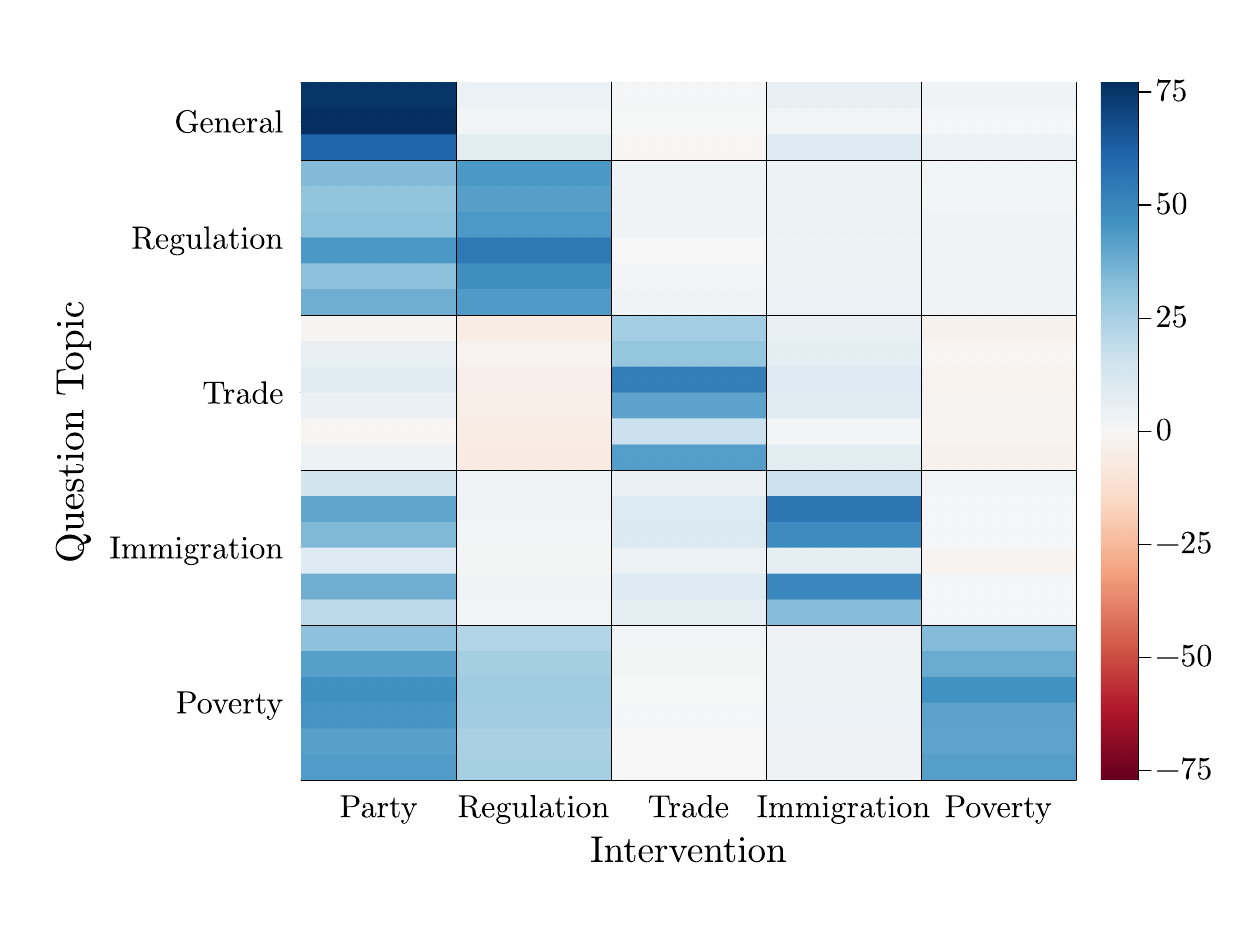}
\centering
\raisebox{0.2  \height}{%
\resizebox{0.38\textwidth}{!}{
\begin{tikzpicture}
\tikzstyle{box} = [rectangle, rounded corners, 
                    minimum width=3cm, 
                    minimum height=1cm,
                    text centered, 
                    draw=black]
\node (G) at (90:2cm) [box,draw,minimum size=8mm] {Party};
\node (T) at (+18:2cm) [box,draw,minimum size=8mm] {Trade};
\node (R) at (-126:2cm) [box,draw,minimum size=8mm] {Regulation};
\node (P) at (-54:2cm) [box,draw,minimum size=8mm] {Poverty};
\node (I) at (-198:2cm) [box,draw,minimum size=8mm] {Immigration};

\newcommand{\drawArrow}[4][]{
  \pgfmathsetmacro{\opacity}{#4/50} %
  \draw[->,line width=\opacity mm, draw opacity=\opacity, fill opacity=\opacity, bend left=10, #1] (#2) to (#3);
}

\drawArrow{G}{T}{4.098333333333336};
\drawArrow{G}{R}{35.78833333333334};
\drawArrow{G}{P}{43.049};
\drawArrow{G}{I}{27.139666666666663};

\drawArrow{T}{G}{0.4406666666666652};
\drawArrow{T}{R}{2.2013333333333236};
\drawArrow{T}{P}{0.9866666666666646};
\drawArrow{T}{I}{8.359333333333332};

\drawArrow{R}{G}{5.652000000000001};
\drawArrow{R}{T}{5.021000000000001};
\drawArrow{R}{P}{25.168666666666674};
\drawArrow{R}{I}{1.9819999999999993};

\drawArrow{P}{G}{2.608000000000004};
\drawArrow{P}{T}{2.0900000000000034};
\drawArrow{P}{R}{1.8386666666666684};
\drawArrow{P}{I}{0.5309999999999988};

\drawArrow{I}{G}{7.130000000000003};
\drawArrow{I}{T}{7.387666666666661};
\drawArrow{I}{R}{4.224666666666678};
\drawArrow{I}{P}{3.8543333333333436};
\end{tikzpicture}
}
}
\caption{\textbf{``Ground-truth'' effects computed from LLM-generated interventional pairs.} Each cell shows the average treatment effect for a question--intervention pair, as defined in~\eqref{eq:ate}. For example, the bottom--left cells show the mean difference in immigration responses between the Democrat and Republican interventional profiles. The graph is generated by weighting edges according to the average question--level effect within each topic block: $\frac{1}{|T|} \sum_{d\in T} \text{ATE}(d, \ell)$.}
\label{fig:llm_ATE}
\end{figure}
\subsubsection{Modeling}
While the construction of \cref{fig:llm_ATE} relied on knowledge of the question groupings and intervention targets, our model is not given access to this information. Instead, it observes only 11 unlabeled environments, each containing $N^e = 1000$ synthetic survey responses. Each environment consists of a stochastic intervention with an 85--15 split (e.g., the ``pro-trade'' environment consists of 85\% pro-trade and 15\% anti-trade interventions).
That is, each environment intervenes on a single $\ell \in [L]$ with $do(Z_\ell=N_\ell)$ where $N_\ell$ is a Categorical with either $p(N_\ell=+0.5)=0.85$ or $p(N_\ell=-0.5)=0.85$. We fit our model with $L=5$ binary latents.

\subsubsection{Interpretation of Inferred Causal Concepts} 
\begin{figure}[tbp]
\centering
\includegraphics[width = 0.6  \textwidth]{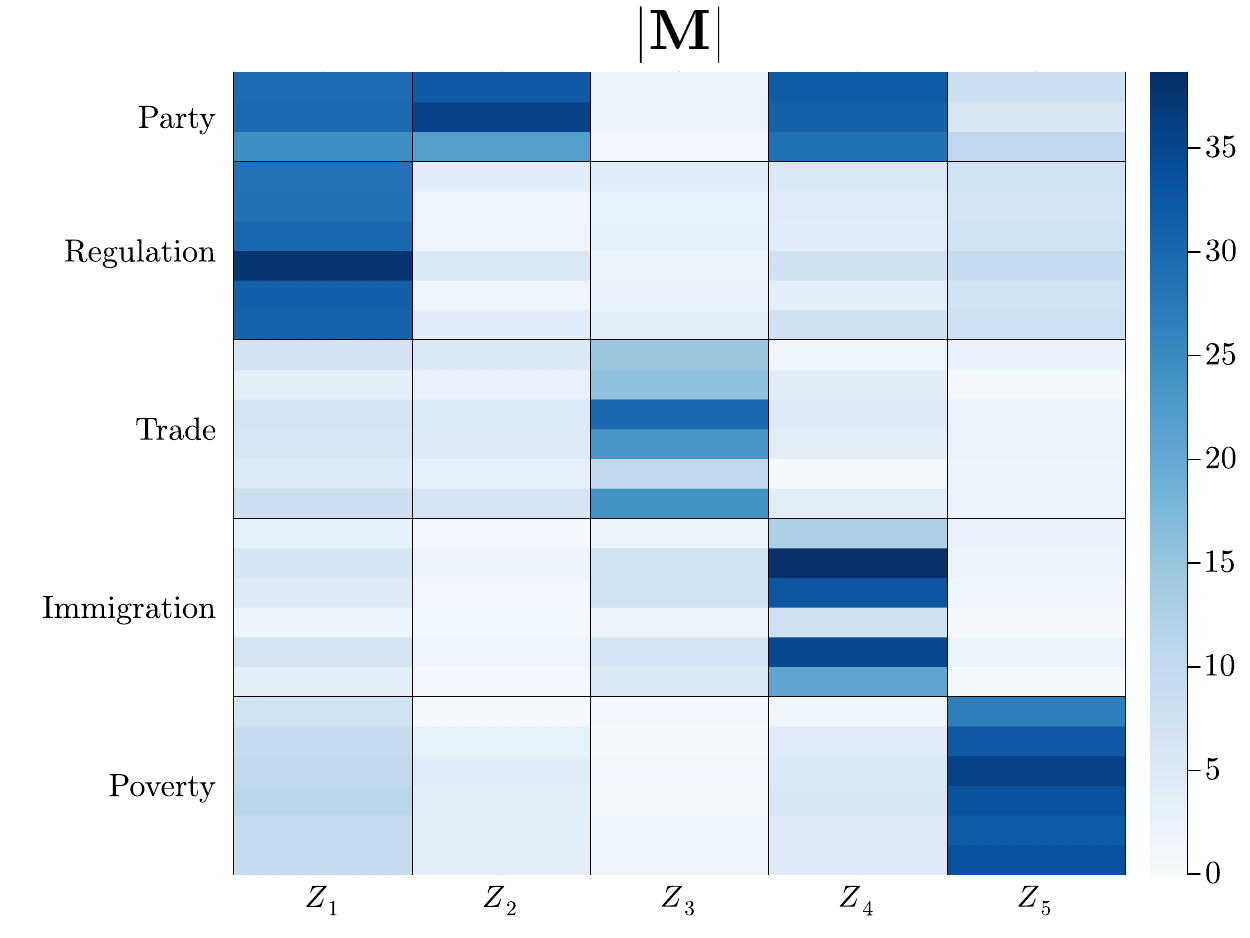}
\caption{\textbf{Estimated measurement mapping for LLM-generated survey responses.}
Heatmap of the posterior mean absolute measurement matrix, $|\Mb|$. The learned block
structure is interpretable as $Z_1,Z_2,Z_3,Z_4,Z_5$ as corresponding to \texttt{Regulation}, \texttt{Party}, \texttt{Trade}, \texttt{Immigration}, and \texttt{Poverty}, respectively, with some \texttt{Party} information also contained in $Z_1$ and $Z_4$. Question topics in this figure and \cref{fig:llm_ATE} are ordered post hoc to facilitate comparison of the learned structure.
}
\label{fig:llm_M}
\end{figure}
As with the WVS data, we begin by interpreting the learned latent variables through the measurement model. \Cref{fig:llm_M} shows a clear block structure in the posterior mean of $|\Mb|$. Since the LLM-generated survey was constructed from known issue groups, we can compare the inferred blocks to the underlying survey design. The resulting associations map each latent variable primarily onto one issue dimension, with some leakage of party-related information into multiple variables:
{\small
\[
Z_1 \mapsto \text{Regulation(\& Party)}, \quad
Z_2 \mapsto \text{Party}, \quad
Z_3 \mapsto \text{Trade}, \quad
Z_4 \mapsto \text{Immigration(\& Party)}, \quad
Z_5 \mapsto \text{Poverty}.
\]
}
\subsubsection{Causal Graph Recovery}

\begin{figure}[tbp]
\includegraphics[width = 0.58 \textwidth]{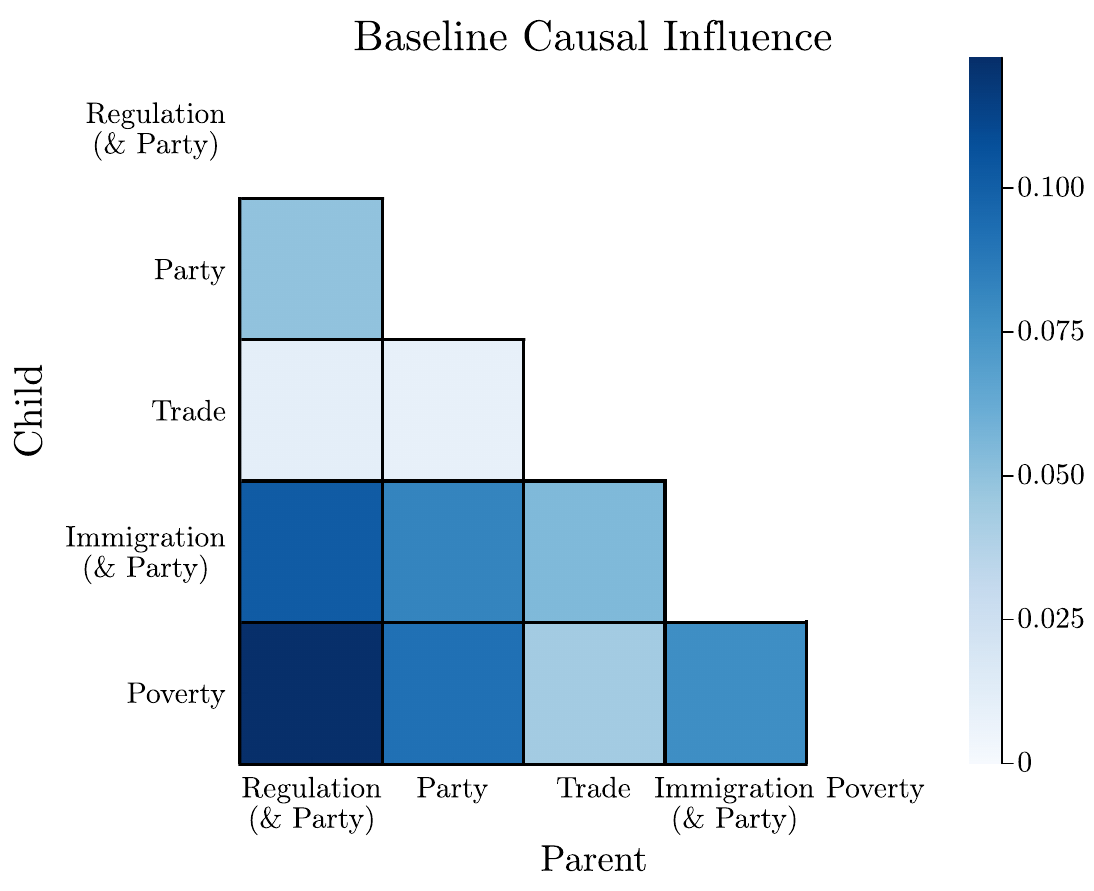}
\raisebox{0.25\height}{%
\resizebox{0.4\textwidth}{!}{
\begin{tikzpicture}
\tikzstyle{box} = [rectangle, rounded corners, 
                    minimum width=3cm, 
                    minimum height=1cm,
                    text centered, 
                    draw=black]
\newcommand{\drawArrow}[4][]{
  \pgfmathsetmacro{\opacity}{#4/.15 } %
  \draw[->,line width=\opacity mm, draw opacity=\opacity, fill opacity=\opacity, bend left=10, #1] (#2) to (#3);
}
\node (z1) at (-126.0:3cm) [box,draw,minimum size=8mm, align=center] {Regulation \\ (\& Party)};
\node (z2) at (90.0:3cm) [box,draw,minimum size=8mm, align=center] {Party};
\node (z3) at (18.0:3cm) [box,draw,minimum size=8mm, align=center] {Trade};
\node (z4) at (-198.0:3cm) [box,draw,minimum size=8mm, align=center] {Immigration \\ (\& Party)};
\node (z5) at (-54.0:3cm) [box,draw,minimum size=8mm, align=center] {Poverty };
\drawArrow{z1}{z2}{0.049626931197355346};
\drawArrow{z1}{z3}{0.01125343464878658};
\drawArrow{z2}{z3}{0.009453086957755654};
\drawArrow{z1}{z4}{0.1020866804326021};
\drawArrow{z2}{z4}{0.08256695750673616};
\drawArrow{z3}{z4}{0.05501078332859461};
\drawArrow{z1}{z5}{0.1227274758908745};
\drawArrow{z2}{z5}{0.09208431660134206};
\drawArrow{z3}{z5}{0.04392101551954549};
\drawArrow{z4}{z5}{0.07793042074274692};
\end{tikzpicture}
}}
\caption{
\textbf{Causal graph recovered from LLM responses.}
Estimated causal influence among latent concepts in the baseline environment, with latent variables labeled according to the block structure identified in \cref{fig:llm_M}. The largest recovered dependency is from $\text{Regulation} \to \text{Poverty}$, with weaker relationships among party, regulation, immigration, and poverty, and very weak effects involving trade.\looseness-1}
\label{fig:llm_CI}
\end{figure}

The learned causal order is consistent with the ``ground-truth'' DAG~(\cref{fig:llm_ATE}); however, because this ``ground-truth'' causal graph is relatively sparse, there are multiple such orders that would also be consistent. Thus, the individual dependencies provide a more nuanced and informative comparison. The estimated causal graph is shown in \cref{fig:llm_CI}, with latent variables labeled according to their semantic interpretations from \cref{fig:llm_M}. Several dependencies emerge, the strongest being the directed relationship from \texttt{Regulation} to \texttt{Poverty}. In contrast, \texttt{Trade} appears largely independent of the other issue dimensions. The results suggest that \texttt{Party} is upstream of \texttt{Poverty}; however, its relationships with \texttt{Regulation} and \texttt{Immigration} are harder to interpret because the learned representation does not isolate partisanship in a single latent variable. Instead, the model distributes partisanship information across multiple latent variables, with party-related signal also mixed into the \texttt{Regulation} and \texttt{Immigration} latents. 

Although this entanglement makes causal relationships involving \texttt{Party}, \texttt{Regulation}, and \texttt{Immigration} difficult to interpret, the recovered graph broadly aligns with the ``ground-truth'' treatment effects from the LLM-generated interventions. In particular, both summaries show weak effects involving \texttt{Trade}, strong effects of \texttt{Party} and \texttt{Regulation} on \texttt{Poverty}, and additional causal relationships among \texttt{Party}, \texttt{Regulation}, and \texttt{Immigration}.

Overall, these results confirm that our model can correctly recover both the latent representation and causal structure in a complex semi-synthetic setting with five latent variables and LLM-generated survey responses. This complements the simpler synthetic experiments in \cref{sec:synthetic_experiments} and provides additional support for the prior empirical case studies in Sections~\cref{sec:case_study_wvs} and~\ref{sec:case_study_us_real}, where ground-truth comparisons are unavailable.

\section{Discussion}
\label{sec:discussion}

Our work connects to several strands of literature in causal representation learning, psychometrics, and Bayesian causal inference.

\paragraph{Causal Representation Learning.}
Our work is most closely related to various other approaches to %
causal representation learning (CRL). %
As detailed in~\cref{sec:introduction}, our work  
differs from existing approaches in that we focus on estimation and interpretation instead of identifiability, adopt a fully Bayesian framework, and perform joint inference over latent variables and their causal relationships.

\citet{kivva2021learning} and \citet{zhang2026discrete} %
also study learning discrete causal representations, albeit from a single observational environment. These works impose additional structural conditions on the measurement model $p(\Xb \mid \Zb)$ to obtain identifiability. In particular, \citet{kivva2021learning} show that, in the setting of a \textit{single} environment, restrictive conditions on $p(\Xb \mid \Zb)$ ensure identifiability and allow recovery of both the number $L$ of latent variables and their cardinalities $|\Zcal_\ell|$. More recently, \citet{zhang2026discrete} propose a discrete causal representation learning framework that models a directed acyclic graph among discrete latent variables, together with a sparse bipartite graph linking latent variables to observed measurements. 
Both of these works are applicable to similar structures of observed and latent variables. The primary difference to our approach is that they rely on restrictions of the measurement model to learn from a single domain, whereas we instead exploit assumptions on sparse latent interventions to learn from multi-domain data.

Applications of CRL to survey settings have also been explored in recent work. In particular,  \citet{huang2022latenthierarchicalcausalstructure} and \citet{dai2025latent} leverage rank constraints to learn latent causal structure from personality and social survey data.

\paragraph{Bayesian Item Response Models.}
A long line of research in psychometrics studies latent variable models for survey and test data (see \citet{reise2010bifactor} for an introduction). Most notably, Item Response Theory (IRT) \citep{lord1980applications},  views responses as being driven by unobserved latent traits (e.g., ability, attitudes, preferences). These latent characteristics are not directly observable, but are instead inferred from individuals’ patterns of responses to survey or test items. IRT provides a family of probabilistic models that formalize the relationship between individuals, survey items, and the latent traits of interest. Each item is associated with parameters that determine how responses vary with the latent trait, while each individual is represented by a position on an unobserved continuum corresponding to that trait.
A substantial body of work develops Bayesian formulations of IRT  \citep{albert1993bayesian, jiang2019gibbs, li2025sparse}. In our setting, we assume discrete instead of continuous latent traits %
 and impose causal structure among them, but the underlying perspective and methods used are closely aligned.

\paragraph{Bayesian Causal Discovery and Experimental Design.}
Bayesian approaches for inferring the causal structure among a set of \textit{observed} variables from purely observational data go back at least to the work of~\citet{heckerman1995learning,heckerman2006bayesian} and \citet{friedman2003being}, typically for categorical variables with conjugate Dirichlet-multinomial models.
More recent work has generalized this approach  %
to continuous variables and nonlinear models with differentiable graph parametrization combined with variational inference~\citep{lorch2021dibs}, as well as to multi-domain data with unknown interventions~\citep{hagele2023bacadi}.
The posterior over structures and model parameters can, in turn, be used for active learning or Bayesian optimal experimental design, i.e., to select interventions for future data collection in categorical~\citep{murphy2001active,tong2001active}, linear~\citep{ness2018bayesian,agrawal19b} and nonlinear~\citep{toth2022active} models.
For the local causal discovery task of finding the parents of a target variable, \citet{wu2025bayesian} propose a Bayesian approach for invariant causal prediction~\citep{peters2016causal} from multi-domain data.

\paragraph{Limitations and Future Work.}
Our approach has several limitations that suggest natural directions for future work.
Firstly, our approach assumes a complete graph over the latent variables. While flexible, this leads to an exponential blowup in the size of the parameter space as the number and cardinality of the latent variables grows. As a result, inference becomes computationally prohibitive quickly as the number of latent variables grows. Developing more parsimonious parameterization---such as by explicitly exploiting the ordinal structure of the latent variables, restricting the graph structure by limiting node degree or introducing parameter sharing across similar parent configurations--could help mitigate this issue, but doing so in a way that preserves flexibility remains an open challenge.

\looseness-1 Second, the measurement model linking latent variables to observed responses is relatively simple in our current framework. This choice provides flexibility and ease of interpretability, but incorporating richer measurement models that are more closely tailored to the data format, such as ordinal responses or hierarchical item groupings, could be more principled and improve empirical performance. One possible direction for ordinal survey data is the use of unified skew-normal distributions, which \cite{durante2019conjugate} recently showed enable conjugate Bayesian inference for probit models.

Third, in our case studies, we do not actually have access to experimental data. We model heterogeneity across domains as soft interventions that modify a subset of the underlying mechanisms. To verify causal claims, truly randomized studies remain the gold standard.

\section{Conclusion}
\label{sec:conclusion}
We have introduced a Bayesian framework for learning discrete causal concepts from multi-domain data. Our approach encodes common assumptions from the causal representation learning literature through model priors and constraints. We develop an efficient sequential Monte Carlo-based inference scheme and evaluate our model on political survey data, including both real-world datasets and data generated by large language models. In these settings, the model is able to recover interpretable latent structure that aligns with known political trends.
A central contribution of this work is to move beyond idealized synthetic settings and develop a Bayesian causal latent variable model that can be effectively fit to real-world data. We view this work as a step toward bridging the gap between theoretical developments in causal representation learning and real-world settings with model misspecification and finite samples.

\section*{Acknowledgements}
The authors thank David M.\ Blei for several insightful discussions, Yuli Slavutsky for feedback on the manuscript, and Tobias Konitzer and David Rothschild for access to and helpful discussions about the PredictWise survey dataset. JvK is supported by The Branco Weiss Fellowship---Society in Science.

\section*{Author Contributions}
JvK conceived of the project, proposed the initial model, and brought the co-authors together. AG then led all major aspects of the project with advice and feedback from JvK and AS: (i)~iterating on and refining the generative model, particularly the logit parametrization and novel KL-shift prior; (ii)~developing the SMCS posterior inference procedure; (iii)~implementing and maintaining the main codebase; and (iv)~conducting empirical case studies. MS contributed early work on variational inference and conducted empirical comparisons to the SMCS procedure. AG and JvK wrote the manuscript with feedback from AS. 

\appendix

\changelinkcolor{black}{}
\parttoc
\changelinkcolor{BrickRed}{}

\section{KL Shift Prior}
\label{app:KL}

This appendix provides additional details on the motivation for and the implementation of the KL-truncation shift prior.

The following Lemma shows that the truncation threshold~\(\varepsilon\) in~\cref{eq:truncated_normal} can be interpreted as a finite-sample
signal-strength condition.

\begin{lemma}[A finite-sample detectability lower bound]
\label{lem:kl_min_detectability}
Let \(p_0(\cdot \mid \pa_\ell)\) denote the
shared mechanism for \(Z_\ell\), and let
\(p_\delta(\cdot \mid \pa_\ell)\) denote the corresponding shifted
mechanism under an intervention. Suppose that environment \(e \in \Ecal \) contains
\(N_e\) independent samples from one of these two mechanisms. Let \(T\) be any test of 
\[
H_0: p=p_0
\qquad\text{versus}\qquad
H_1: p=p_\delta.
\] 
Then, for any \(\eta \in [0,1]\),
\[
p_0(T) + p_\delta(1-T) \leq \eta
\quad \implies  \quad \operatorname{KL}\left(p_0 \,\|\, p_\delta\right) \geq \frac{2(1-\eta)^2}{N_e}.
\]
\end{lemma}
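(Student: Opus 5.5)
The plan is to pass from the sum of the two error probabilities to the total variation distance between the $N_e$-fold product measures, bound that by Pinsker's inequality, and then use tensorization of KL over the i.i.d.\ samples. Throughout, I read $p_0(T)$ and $p_\delta(1-T)$ as the type-I and type-II error probabilities of a (possibly randomized) test $T:\mathcal{Z}_\ell^{N_e}\to[0,1]$. They are computed under the product measures $P_0 := p_0^{\otimes N_e}$ and $P_\delta := p_\delta^{\otimes N_e}$, since the environment contributes $N_e$ independent draws.

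\emph{Step 1 (Le Cam / Neyman--Pearson lower bound).} For any measurable $T$ with values in $[0,1]$,
\[
P_0(T) + P_\delta(1-T) \;=\; 1 - \bigl(P_\delta(T) - P_0(T)\bigr) \;\geq\; 1 - \operatorname{TV}(P_0,P_\delta),
\]
because $\sup_{0\le T\le 1} \bigl(P_\delta(T)-P_0(T)\bigr)=\operatorname{TV}(P_0,P_\delta)$. Hence if the sum of errors is at most $\eta$, then $\operatorname{TV}(P_0,P_\delta)\geq 1-\eta$.

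\emph{Step 2 (Pinsker).} Pinsker's inequality gives $\operatorname{TV}(P_0,P_\delta)\le \sqrt{\tfrac12 \operatorname{KL}(P_0\,\|\,P_\delta)}$. Combining with Step~1 yields $\operatorname{KL}(P_0\,\|\,P_\delta)\geq 2(1-\eta)^2$.

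\emph{Step 3 (tensorization).} By the chain rule for KL divergence over independent coordinates, $\operatorname{KL}(P_0\,\|\,P_\delta)=N_e\,\operatorname{KL}(p_0\,\|\,p_\delta)$. Dividing by $N_e$ gives the claim. If the parent configuration varies across samples, the same chain-rule argument gives $\sum_j \operatorname{KL}(p_0(\cdot\mid\pa^{j})\,\|\,p_\delta(\cdot\mid\pa^{j}))$. The bound then holds for the average over samples, and in particular for the worst configuration.

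The only delicate point is bookkeeping rather than a real obstacle. The statement writes $\operatorname{KL}(p_0\|p_\delta)$ and $p_0(T)$ at the single-sample level, so I will state explicitly that the errors are taken under the $N_e$-fold products while the KL is per-sample. I will also note that the case $\eta\ge1$ is trivial. The direction of the KL matters little, since Pinsker applies in either order.
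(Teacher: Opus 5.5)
Your proposal is correct and follows the paper's argument. The paper likewise lower-bounds the error sum by $1-\operatorname{TV}(p_0^{N_e},p_\delta^{N_e})$, applies Pinsker with tensorization folded in to obtain $1-\sqrt{\tfrac{N_e}{2}\operatorname{KL}(p_0\|p_\delta)}$, and rearranges; your explicit separation of the product-measure bookkeeping and the tensorization step only makes this clearer.
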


\begin{proof}

The error for any simple hypothesis test  can be lower bounded as
\[
p_0(T) + p_\delta(1-T)
\;\geq\;
1 - \operatorname{TV}\left(p_0^{N_e},p_\delta^{N_e}\right)
\;\geq\;
1 - \sqrt{\frac{N_e}{2}
\operatorname{KL}\left(p_0 \,\|\, p_\delta\right)} .
\]
The first inequality follows from standard testing lower bounds \citep[e.g.,][Theorem 15.1.1]{lehmann2022testing} and the second from Pinsker's inequality \citep[e.g.,][Lemma 11.6.1]{cover2006information}.\looseness-1

Now suppose that
\(
p_0(T)+p_\delta(1-T)\leq \eta.
\)
Combining this with the above lower bound gives
\[
1 - \sqrt{\frac{N_e}{2}
\operatorname{KL}\left(p_0 \,\|\, p_\delta\right)}
\leq \eta,
\]
and rearranging yields
\[
\operatorname{KL}\left(p_0 \,\|\, p_\delta\right)
\geq
\frac{2(1-\eta)^2}{N_e}.
\]
 \end{proof}

Thus, without a lower bound on the KL divergence induced by an
intervention, it can be impossible to accurately test for interventions with finite samples, making reliable recovery of intervention targets impossible.

\paragraph{Second-order expansion of the KL divergence.}
The quadratic approximation in \cref{eq:hessian_approximation} follows from a local expansion of the KL divergence.
Let $p = \softmax(\thetab)$ and consider
\[
\mathrm{KL}\!\left(\softmax(\thetab)\,\middle\|\,\softmax(\thetab+\deltab)\right).
\]
A second-order Taylor expansion in $\deltab$ around $\deltab=\bm 0$ gives
\[
\mathrm{KL}\!\left(\softmax(\thetab)\,\middle\|\,\softmax(\thetab+\deltab)\right)
\;\approx\; \tfrac{1}{2}\deltab^\top H_{\thetab}\,\deltab,
\]
where
\[
H_{\thetab} = \mathrm{diag}(p) - p p^\top
\]
is the Fisher information matrix of the categorical distribution parameterized by $\thetab$.

\paragraph{Why approximate and why not evaluate at $\thetab$?}
In principle, the most accurate truncation uses $H_{\thetab}$ evaluated at the current $\thetab$, or even the exact KL. However, this is not computationally feasible.

To see this, consider the simplified model
\[
p(a) = \mathcal{N}(0,1), \qquad p(b \mid a) \propto \mathcal{N}(0,1) \cdot \mathbbm{1}\!\left\{|b|>|a|\right\},
\]
since the truncation region for $b$ depends on $a$, 
\[
p(a \mid b) \propto p(a)\,p(b \mid a) 
= \phi(a)\,\frac{\phi(b) \cdot  \mathbbm{1}\!\left\{|b|>|a|\right\}}{Z(a)},
\]
where $Z(a)$ is the normalization constant of the truncated Gaussian. Because $Z(a)$ depends on $a$, the conditional $p(a \mid b)$ is no longer a tractable conjugate update.

The same issue occurs with ($\thetab,\deltab$): if the truncation depends on $\thetab$ through $H_{\thetab}$, then the normalization constant of the truncated prior over $\deltab$ depends on $\thetab$. This breaks conjugacy and prevents efficient Gibbs updates for $\thetab$.
To retain tractable conjugate updates, we instead fix the Hessian at $\thetab=\bm 0$, yielding the quadratic constraint used in the main text:
\[
\tfrac{1}{2}\deltab^\top H_{\bm 0}\,\deltab \geq \varepsilon.
\qquad 
H_{\bm 0} = \frac{1}{K}\left(I - \frac{1}{K}\bm 1 \bm 1^\top\right).
\]

\paragraph{Alternative: joint truncation.}
One alternative is to jointly truncate $(\thetab,\deltab)$ as
\begin{align*}
 p(\thetab,\deltab) 
 &\propto \phi(\thetab)\,\phi(\deltab)\,\mathbbm{1}\{f(\thetab,\deltab) \geq \varepsilon\}.
\end{align*}
However, this requires rejection sampling from across all environments, which quickly becomes computationally infeasible.

\begin{figure}[tbp]
\centering
\includegraphics[width=\textwidth]{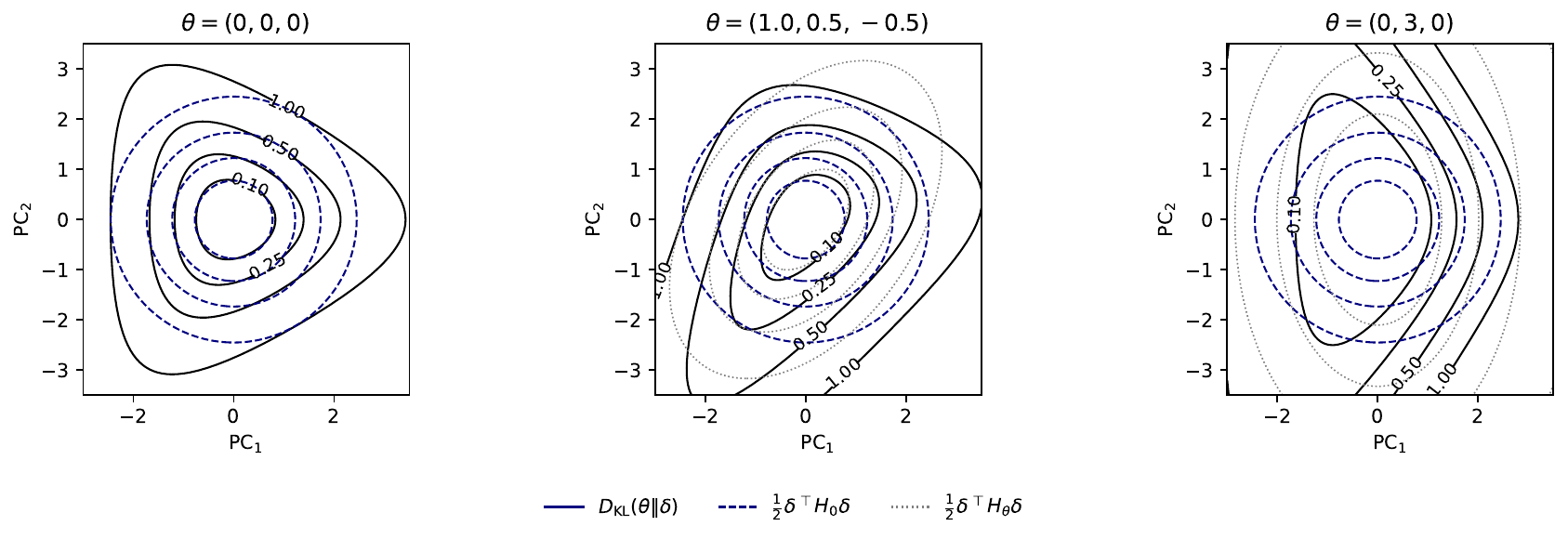}
\caption{\looseness-1 
\textbf{Quadratic KL constraint on $\deltab$ for various $\thetab$.}
We compare the exact KL divergence with second-order approximations based on $H_{\bm 0}$ and $H_{\thetab}$. 
The $H_{\bm 0}$ approximation deteriorates when $\thetab$ is far from zero (i.e., highly concentrated distributions), but this regime is unlikely under our prior.}
\label{fig:kl_normal_comparison}
\end{figure}

\paragraph{Approximation quality.}
The approximation is best at $\thetab=\bm 0$ and degrades as $\thetab$ becomes more extreme (i.e., when the categorical distribution is highly concentrated). Under our prior, however, $\thetab$~typically remains close to zero, so this regime is unlikely.
Empirically, we found this approximation reliably excludes negligible shifts even when $\thetab$ deviates moderately from zero (see \cref{fig:kl_normal_comparison}).

\section{Inference}
\label{app:inference}

This appendix provides full details of the inference procedures, including complete conditional distributions and algorithmic details.\subsection{Conjugate Updates}
\label{app:inference_conj}
\subsubsection{immediate conjugacy}
The following conditional distributions are immediate from standard conjugacy:
\begingroup
\allowdisplaybreaks
\begin{align*}
p(c_\ell \mid -) &= \operatorname{Beta}\left(\alpha_c + \sum_{e\in \Ecal} \Ical_\ell^e,\beta_c+|\Ecal|-\sum_{e\in \Ecal }\Ical_\ell^e\right) 
\\ 
\\
p(\Ical_\ell^e = \Ical^* \mid -) &\propto
c_\ell^{\Ical^*}(1-c_\ell)^{1-\Ical^*}
\prod_{j\in [N_e]}
p\left(
z^{e,j}_{\ell}|   \thetab_\ell\big(\pa_\ell^{e,j}\big) , \deltab_\ell^e\big(\pa_\ell^{e,j}\big) ,\Ical_\ell^e = \Ical^* 
\right) 
\\ 
\\
&= 
c_\ell^{ \Ical^* }(1-c_\ell)^{1- \Ical^* }
\prod_{j\in [N_e]}
\frac{\exp \left(  \thetab_\ell\big(\pa_\ell^{e,j}\big)_{z^{e,j}_{\ell}} + \Ical^*  \cdot  \deltab_\ell^e\big(\pa_\ell^{e,j}\big)_{z^{e,j}_{\ell}} \right)}
    {\sum_{z' \in [K]} \exp \left(  \thetab_\ell\big(\pa_\ell^{e,j}\big)_{z'} + \Ical^*  \cdot  \deltab_\ell^e\big(\pa_\ell^{e,j}\big)_{z'} \right)}
\\ 
\\ 
p(\zb^{e,j}=\zb^* \mid - ) 
&\propto 
p(\xb^{e,j} \mid \Mb,\sigmab,\zb^{e,j}=\zb^*)
\prod_{\ell \in [L]}
p\left(
z^*_\ell \mid   \thetab_\ell\big(\zb^*_{1:\ell-1}\big) , \deltab_\ell^e\big(\zb^*_{1:\ell-1}\big) ,\Ical_\ell^e 
\right) 
\\ &\propto \phi_D\left(\xb^{e,j}  ; \; \mb_0 +\textstyle\sum_{\ell \in [L]} \mb_\ell z^*_{\ell},
\mathrm{diag}\big(\sigmab^2\big)\right)
\prod_{\ell \in [L]}
\frac{\exp \left(  \thetab_\ell\big(\zb^*_{1:\ell-1} \big)_{z^*_{\ell}} +\Ical_\ell^e  \cdot  \deltab_\ell^e\big(\zb^*_{1:\ell-1} \big)_{z^*_{\ell}} \right)}
    {\sum_{z' \in [K]} \exp \left(  \thetab_\ell\big(\zb^*_{1:\ell-1}\big)_{z'} +\Ical_\ell^e  \cdot  \deltab_\ell^e\big(\zb^*_{1:\ell-1}\big)_{z'} \right)}
\end{align*}
\subsubsection{measurement model}
With the measurement model, for each dimension $d \in [D]$, we pool observations across all $(e,j)$. 
Let $N = \sum_{e \in \Ecal} N_e$. Define the stacked response vector $\Xb_d \in \mathbb{R}^N$ and design matrix $\Zb \in \mathbb{R}^{N \times (L+1)}$ with entries indexed by $(e,j)$:
\[
(\Xb_d)_{(e,j)} = x^{e,j}_d,
\qquad
\Zb_{(e,j),:} = \begin{bmatrix} 1 & z^{e,j}_1 & \cdots & z^{e,j}_L \end{bmatrix},
\quad e \in \Ecal,\; j \in [N_e].
\]

The mean $\Mb$ admits Gaussian--Gaussian conjugate updates:
\begin{align*}
\Xb_d \mid \mb_d, \sigma_d^2 
\sim \mathcal{N}_N\big(\Zb \mb_d, \sigma_d^2 I\big) 
\implies&\;
(\Zb^\mathsf{T} \Zb)^{-1}\Zb^\mathsf{T} \Xb_d
\sim \mathcal{N}_{L+1}\!\left(
\mb_d,\;
\sigma_d^2 (\Zb^\mathsf{T} \Zb)^{-1}
\right) \\
\implies&\;
\mb_d \mid -
\sim \mathcal{N}_{L+1}\!\left(\mub_d, \Sigmab_d\right),
\\
&\Sigmab_d
= \left(\Ib+ \frac{\Zb^\mathsf{T} \Zb}{\sigma_d^2} \right)^{-1},
\\
&\mub_d
= \Sigmab_d \left( \frac{\Zb^\mathsf{T} \Xb_d}{\sigma_d^2} \right).
\end{align*}

The variance $\sigma_d^2$ admits an inverse-gamma--Gaussian conjugate update:
\begin{align*}
\sigma_d^2 \mid - 
&\sim \operatorname{Inv\text{-}Gamma}\!\left(
\alpha_\sigma + \frac{N}{2},\;
\beta_\sigma + \frac{1}{2} \|\Xb_d - \Zb \mb_d\|_2^2
\right).
\end{align*}
Note: Here we write $\mb_d \in \RR^{L+1}$ for the $d$-th row of $\Mb$:\ 
\(
\Mb = \begin{bmatrix}
    \mb_1^\top
    \\
    \vdots
    \\
    \mb_D^\top
\end{bmatrix}
\)

\subsubsection{Pólya--Gamma Augmentation:} 
The updates for $\Thetab,\Deltab$ do not follow immediately from conjugacy and require data augmentation~\citep{polson2013bayesian,chen2013polya}. 

Fix a node $\ell \in [L]$ and parent configuration $\pa_\ell \in \mathcal{PA}_\ell$. For the remainder of this section, we suppress this indexing:
\[
\thetab := \thetab_\ell(\pa_\ell), \qquad \deltab^e := \deltab_\ell^e(\pa_\ell).
\]

For each environment $e \in \Ecal$, define the following counts and associated logits:
\begin{align*}
C^e_k 
&= \sum_{j \in [N_e]} \mathbbm{1}\{ z^{e,j}_\ell = k \;\land\; \pa_\ell^{e,j} = \pa_\ell \}, \\
N^e 
&= \sum_{k \in [K]} C^e_k,
\\
\Gammab &= \Ab \Bb.
\end{align*}

Here $\Ab \in \mathbb{R}^{E \times (E+1)}$ and $\Bb \in \mathbb{R}^{(E+1)\times K}$ are given by
\[
\Ab =
\begin{bmatrix}
1 & \Ical_\ell^1 & 0 & \cdots & 0 \\
1 & 0 & \Ical_\ell^2 & \cdots & 0 \\
1 & 0 & 0 & \ddots & 0 \\
\vdots & \vdots & \vdots & \ddots & \Ical_\ell^E
\end{bmatrix},
\qquad
\Bb = 
\begin{bmatrix}
\thetab^\top \\
(\deltab^1)^\top \\
\vdots \\
(\deltab^E)^\top
\end{bmatrix}
\]

Now, we fix a category index $k \in [K]$ and define the conditional update
for the $k$th column $\Bb_{:,k}$ given $\Bb_{:,\lnot k}$.
The aggregated counts satisfy
\begin{align*}
C^e_k 
&\sim \operatorname{Binomial}\!\left(
N^e,\;
\frac{\exp(\Gamma_{e,k})}{\sum_{k'} \exp(\Gamma_{e,k'})}
\right)
=
\operatorname{Binomial}\!\left(
N^e,\;
\frac{\exp(\rho^e)}{1 + \exp(\rho^e)}
\right),
\end{align*}
where
\[
\rho^e = \Gamma_{e,k} - \zeta^e,
\qquad
\zeta^e = \log\!\left(\sum_{k' \neq k} \exp(\Gamma_{e,k'})\right).
\]

This is the form required for Pólya--Gamma augmentation. We introduce auxiliary variables
\[
\omega^e \sim \mathcal{PG}(N^e, \rho^e), \qquad e \in \Ecal,
\]
and define $\Omegab = \mathrm{diag}(\omega^1,\dots,\omega^E)$ and $\bm{\kappa} \in \mathbb{R}^E$ with
\[
\kappa^e = C^e_k - \frac{N^e}{2}.
\]

Including $\Omegab$, the complete conditional (ignoring truncation of $\deltab$) is:
\begin{align*}
\Bb_{:,k} \mid - 
&\sim \mathcal{N}_{E+1}\!\left(\mub_{\mathrm{PG}}, \Sigmab_{\mathrm{PG}}\right), \\
\Sigmab_{\mathrm{PG}} 
&= \left( \Ab^\mathsf{T} \Omegab \Ab + \mathbf{\Lambda}_0 \right)^{-1}, \\
\mub_{\mathrm{PG}} 
&= \Sigmab_{\mathrm{PG}}  \left(
\Ab^\mathsf{T} (\bm{\kappa} + \Omegab \bm{\zeta}) + \mathbf{\Lambda}_0 \mub_0
\right),
\end{align*}
where $\bm{\zeta} = \begin{bmatrix}
    \zeta^1&\dots&\zeta^E
\end{bmatrix}^\mathsf{T}$, and $(\mub_0,\mathbf{\Lambda}_0)$ denote the prior mean and precision induced by $\Sigmab_\Theta,\Sigmab_\Delta$ and conditioning on $\Bb_{:,\lnot k}$:
\[
\mathbf{\Lambda}_0 = \operatorname{diag}\!\left(
(\Sigmab_\Theta^{-1})_{kk},
(\Sigmab_\Delta^{-1})_{kk},
\dots,
(\Sigmab_\Delta^{-1})_{kk}
\right),
\qquad
\mub_0 =
\begin{bmatrix}
\mu_{0,\Theta} &
\mu_{0,\Delta}^1 &
\hdots &
\mu_{0,\Delta}^E
\end{bmatrix}^\top,
\]
where
\[
\mu_{0,\Theta}
=
-\frac{
(\Sigmab_\Theta^{-1})_{k,\lnot k}\,
\thetab_{\lnot k}
}{
(\Sigmab_\Theta^{-1})_{kk}
},
\qquad
\mu_{0,\Delta}^e
=
-\frac{
(\Sigmab_\Delta^{-1})_{k,\lnot k}\,
\deltab^e_{\lnot k}
}{
(\Sigmab_\Delta^{-1})_{kk}
}.
\]

The update for $\theta_k$ uses exactly the Gaussian conditional above. 
For $\deltab^e_k$, there is one additional truncation step. Since the coordinates $\{\deltab^e\}_{e \in \Ecal}$ are conditionally independent given $\thetab$ and the auxiliary variables, each environment $e$ can be updated separately after sampling $\theta_k$. To do this, we partition $\mub_{\mathrm{PG}}$ and $\Sigmab_{\mathrm{PG}}$ as
\[
\mub_{\mathrm{PG}}=
\begin{bmatrix}
\mu_\theta \\
\mub_\Delta
\end{bmatrix},
\qquad
\Sigmab_{\mathrm{PG}}=
\begin{bmatrix}
\Sigma_{\theta\theta} & \Sigmab_{\theta\Delta} \\
\Sigmab_{\Delta\theta} & \Sigmab_{\Delta\Delta}
\end{bmatrix},
\]
where $\mu_\theta \in \mathbb R$, $\mub_\Delta \in \mathbb R^E$, $\Sigma_{\theta\theta}\in\mathbb R$, $\Sigmab_{\theta\Delta}\in\mathbb R^{1\times E}$, $\Sigmab_{\Delta\theta}\in\mathbb R^{E\times 1}$, and $\Sigmab_{\Delta\Delta}\in\mathbb R^{E\times E}$.

We first sample
\[
\theta_k \sim \mathcal N(\mu_\theta,\Sigma_{\theta\theta}).
\]
Conditioning on this sampled value, the remaining coordinates satisfy
\[
p(\delta^e_k\mid \theta_k,-)
\propto
\mathcal N\!\left((\mu_{\mathrm{cond}})_e,(\Sigma_{\mathrm{cond}})_{ee}\right)
\mathbbm{1}\!\left\{
\delta^e_k\in(-\infty,l^e_k] \cup [u^e_k,\infty)
\right\},
\]
where
\begin{align*}
\mub_{\mathrm{cond}}
&=
\mub_\Delta
+
\Sigmab_{\Delta\theta}\Sigma_{\theta\theta}^{-1}
\bigl(\theta_k-\mu_\theta\bigr), \\
\Sigmab_{\mathrm{cond}}
&=
\Sigmab_{\Delta\Delta}
-
\Sigmab_{\Delta\theta}\Sigma_{\theta\theta}^{-1}\Sigmab_{\theta\Delta}.
\end{align*}
The truncation bounds $(l^e_k,u^e_k)$ are obtained as the roots of the quadratic constraint with respect to $\delta^e_k$

\[
\frac{1}{2}H_{kk}(\delta^e_k)^2
+
\delta^e_k \,\Hb_{k,\lnot k}\deltab^e_{\lnot k}
+
\frac{1}{2}(\deltab^e_{\lnot k})^\top
\Hb_{\lnot k,\lnot k}
\deltab^e_{\lnot k}
= \varepsilon ,
\]
where 
\[
\deltab^e
=
\begin{pmatrix}
\delta^e_k \\
\deltab^e_{\lnot k}
\end{pmatrix},
\qquad
\Hb =
\begin{pmatrix}
H_{kk} & \Hb_{k,\lnot k} \\
\Hb_{\lnot k,k} & \Hb_{\lnot k,\lnot k}
\end{pmatrix}.
\]

\subsection{Stratified resampling}
\label{app:inference_resampling}
For the resampling step of SMCS, we use \emph{stratified resampling} \citep{Kitagawa1996Filter}.\footnote{The simplest resampling method is \emph{multinomial resampling}, which independently re-draws $S$ particles with replacement from the current set, where each particle is selected with probability equal to its importance weight. This can be done using i.i.d.\ draws $u_s \sim \mathrm{Uniform}(0,1)$ with the same inverse-CDF construction underlying stratified resampling. This is a valid approach but typically results in higher variance.} Stratified resampling re-draws $S$ particles with replacement from the current set, where the expected number of times each particle is selected is proportional to its importance weight. Specifically, let cumulative weights be
\[
C_s = \sum_{s=1}^S w_k^{(s)},
\]
and draw
\[
u_s \sim \mathrm{Uniform}\left(\frac{s-1}{S}, \frac{s}{S}\right), \quad s=1,\ldots,S.
\]
The ancestor indices are then defined as
\[
a_s = \min \{ j : C_j \geq u_s \}.
\]
Finally, particles are replaced by their selected ancestors, and all weights are reset:
\[
(\Phib,\Psib,\Zb)_k^{(s)} \leftarrow (\Phib,\Psib,\Zb)_k^{(a_s)}, \qquad w_k^{(s)} \leftarrow \frac{1}{S}.
\]
\subsection{Parallel Tempering}
Several methods have been developed to improve Monte Carlo exploration of multimodal posteriors, including mode-jumping strategies~\citep{pompe2020framework} and tempering-based approaches~\citep{geyer1991markov,marinari1992simulated,neal1996sampling}. 
We initially considered parallel tempering \citep{geyer1991markov}, an MCMC-based approach that leverages the same family of tempered posteriors introduced in Section~\ref{sec:SMCS}, but found SMCS performed better in our implementation and model.

Parallel tempering runs $K$ Markov chains in parallel at different temperatures $1 = T_1 < T_2 < \cdots < T_K$, where each chain targets the tempered distribution $p_{T_k}(\Phib,\Psib,\Zb \mid \Xb)$. As in SMCS, higher temperatures correspond to flattened posteriors that facilitate exploration across modes, while the lowest-temperature chain ($T_1 = 1$) targets the true posterior of interest.

 While SMCS maintains a population of particles and progresses sequentially through the temperature schedule, parallel tempering instead maintains chains at fixed temperatures and evolves them sequentially to generate samples. Information is exchanged across temperatures via metropolis swap moves between adjacent chains. Specifically, after every $m$ Gibbs steps, swap proposals are made between adjacent temperature levels $T_k$ and $T_{k+1}$. A proposed swap of states $(\text{State}_k, \text{State}_{k+1})$ is accepted with probability
\begin{equation}
\begin{aligned}
    p_{\mathrm{swap}}
    &=
    \min\!\left\{
        1,\,
        \frac{
            p_{T_k}(\text{State}_{k+1})
            p_{T_{k+1}}(\text{State}_k)
        }{
            p_{T_k}(\text{State}_k)
            p_{T_{k+1}}(\text{State}_{k+1})
        }
    \right\}
    \\
    &=
    \min\!\left\{
        1,\,
            \big[
                p(\Zb_{k+1} \mid \Phib_{k+1})\,
                p(\Xb \mid \Zb_{k+1}, \Psib_{k+1})
            \big]^{1/T_k-1/T_{k+1}}
            \big[
                p(\Zb_k \mid \Phib_k)\,
                p(\Xb \mid \Zb_k, \Psib_k)
            \big]^{1/T_{k+1}-1/T_k}
    \right\}.
\end{aligned}
\label{eq:swap-acceptance}
\end{equation}
This mechanism allows high-temperature chains to escape local modes and transfer information back to lower temperatures, thereby improving overall posterior exploration. The intermediate temperatures serve as bridges between the cold and hot chains, and greatly improve acceptance rates.

\begin{algorithm}[tbp]
\caption{Parallel Tempering}
\label{alg:parallel_tempering}
\begin{algorithmic}[1]
\Require Temperatures $1 = T_1 < T_2 < \cdots < T_K$, iterations $S$, swap interval $m$
\State Initialize states $\{(\Phib,\Psib,\Zb)^{(0)}_k\}_{k\in[K]}$

\For{$s = 1$ {\bf to} $S$}
    \For{$k = 1$ {\bf to} $K$} 
        \State $(\Phib,\Psib,\Zb)^{(s)}_k \leftarrow \textsc{GibbsUpdate}((\Phib,\Psib,\Zb)^{(s-1)}_k, T_k)$ 
    \EndFor

    \If{$s \bmod m = 0$}
        \For{$k = 1$ {\bf to} $K-1$}
            \State Swap $((\Phib,\Psib,\Zb)^{(s)}_k, (\Phib,\Psib,\Zb)^{(s)}_{k+1})$ with probability $p_{\mathrm{swap}}$
        \EndFor
    \EndIf
\EndFor

\State {\bf return} Samples from cold chain $\{(\Phib,\Psib,\Zb)_1^{(s)}\}_{s=1}^S$ 
\end{algorithmic}
\end{algorithm}

\subsection{Variational Inference}
\label{app:VI}

As an alternative inference method, we also considered an approach based on variational inference~\citep{blei2017variational,wainwright2008graphical}.
Rather than maintaining a population of particles or Markov chains, VI approximates the exact posterior by optimizing a tractable surrogate distribution over the continuous latent variables with respect to the evidence lower bound (ELBO).
In our implementation,
the discrete causal variables $\Zb^{\Ecal}$ are \emph{not} approximated variationally; instead, they are marginalized exactly by enumerating all $K^L$ joint configurations, which avoids high-variance score-function gradient estimates for the discrete variables.

\paragraph{Variational family.}
The approximate posterior is defined over the continuous latent variables $\Wb := (\Mb,\,\Thetab,\,\bm{c},\,\Ical^{\Ecal},\,\Deltab^{\Ecal})$ and factorizes according to a mean-field assumption as
\begin{equation}
\label{eq:variational_family}
    q_{\phib}\!\left(\Wb\right)
    =
    q_{\phib}\!\left(\Mb\right)
    \prod_{\ell=1}^{L}
    \Bigg[
    q_{\phib}\!\left(c_\ell\right)
    \prod_{\pa_\ell\in\PAcal_\ell}
    q_{\phib}\!\left(\bm\theta_\ell(\pa_\ell)\right)
    \prod_{e=1}^{E}
    q_{\phib}\!\left(\Ical_\ell^e\right)\,
    q_{\phib}\!\left(\bm\delta^e_\ell(\pa_\ell)\right)
    \Bigg].
\end{equation}
The individual factors are specified as follows.

\begin{itemize}

\item \emph{Measurement matrix.}
The matrix $\Mb\in\RR^{D\times(L+1)}$ is treated as a point estimate rather than a full variational distribution, i.e.,
\begin{equation}
    q_{\phib}\!\left(\Mb\right) = \text{Dirac-Delta}\!\left(\Mb_{\mathrm{loc}}\right),
\end{equation}
where 
$\Mb_{\mathrm{loc}}\in\RR^{D\times(L+1)}$ is a learnable parameter.
This corresponds to maximum a posteriori (MAP) estimation of $\Mb$ within the variational framework.

\item \emph{Base mechanism parameters.}
Each parameter vector $\bm\theta_\ell(\pa_\ell)$ is approximated by an independent multivariate Gaussian with learnable mean and %
covariance:
\begin{equation}
    q_{\phib}\!\left(\bm\theta_\ell(\pa_\ell)\right)
    = \Ncal_K\Big(\bm\mu^\theta_\ell(\pa_\ell),\;
    \Lb^\theta_\ell(\pa_\ell)\,{\Lb^\theta_\ell(\pa_\ell)}^\top\Big),
\end{equation}
where $\bm\mu^\theta_\ell(\pa_\ell)\in\RR^K$ is the variational mean and $\Lb^\theta_\ell(\pa_\ell)$ is a learnable lower-triangular matrix with positive diagonal.

\item \emph{Intervention probability.}
The scalar $c_\ell$ is approximated by a Beta distribution with learnable concentration parameters:
\begin{equation}
    q_{\phib}(c_\ell)
    = \mathrm{Beta}\Big(\mathrm{softplus}(\alpha_\ell),\;\mathrm{softplus}(\beta_\ell)\Big),
\end{equation}
where $\alpha_\ell,\beta_\ell\in\RR$ are passed through the softplus function to ensure positivity.

\item \emph{Intervention indicators.}
Since $\Ical^e_\ell\in\{0,1\}$ is discrete, we replace the Bernoulli factor with a Relaxed Bernoulli~\citep{maddison2016concrete} at a fixed temperature $\tau>0$:
\begin{equation}
    q_{\phib}\!\left(\Ical^e_\ell\right)
    = \mathrm{RelaxedBernoulli} \Big(\tau,\;\sigma\!\left(\xi^e_\ell\right)\Big),
\end{equation}
where $\xi^e_\ell\in\RR$ is a learnable logit and $\sigma(\cdot)$ denotes the sigmoid function.
This continuous relaxation permits reparameterization gradients while approaching binary behaviour for small~$\tau$.
To recover binary predictions at evaluation time, the straight-through estimator~\citep{bengio2013estimating} is applied: the forward pass uses the value of the relaxed sample rounded to $\{0,1\}$ by thresholding at $0.5$, while gradients are passed through the continuous relaxation.

\item \emph{Intervention shifts.}
Each shift vector $\bm\delta^e_\ell(\pa_\ell)$ is approximated by a multivariate Gaussian:
\begin{equation}
    q_{\phib}\!\left(\bm\delta^e_\ell(\pa_\ell)\right)
    = \Ncal_K\Big(\bm\mu^\delta_\ell(e,\pa_\ell),\;
    \Lb^\delta_\ell(e,\pa_\ell)\,{\Lb^\delta_\ell(e,\pa_\ell)}^\top\Big).
\end{equation}
To enforce the orthogonality constraint from~\eqref{eq:covariance_delta} (i.e., $\sum_k \delta_k = 0$), each raw sample $\bm\delta_{\mathrm{raw}}$ is projected onto the sum-zero subspace after sampling:
\begin{equation}
\label{eq:vi_delta_projection}
    \bm\delta = \bm\delta_{\mathrm{raw}} - \frac{1}{K}\!\left(\bm1_K^\top \bm\delta_{\mathrm{raw}}\right)\bm1_K.
\end{equation}
Note that, unlike in the SMCS and Gibbs samplers, the KL-truncation of the shift prior (\cref{sec:shift_priors}) is not applied in this VI implementation; a plain isotropic Gaussian $\Ncal_K(\bm0,\,\sigma^2_\Delta\,\Ib)$ is used as the prior on the raw shifts.

\end{itemize}

The observation noise $\bm\sigma^2$ is treated as a fixed hyperparameter and is not estimated variationally.
The variational parameters $\phib$ thus comprise
\begin{equation}
\label{eq:variational_parameters}
    \phib = \Big(\Mb_{\mathrm{loc}},\;
    \big(\bm\mu^\theta_\ell(\pa_\ell),\,\Lb^\theta_\ell(\pa_\ell)\big)_{\ell,\pa_\ell},\;
    (\alpha_\ell,\beta_\ell)_{\ell},\;
    \big(\xi^e_\ell\big)_{\ell,e},\;
    \big(\bm\mu^\delta_\ell(e,\pa_\ell),\,\Lb^\delta_\ell(e,\pa_\ell)\big)_{\ell,e,\pa_\ell}
    \Big).
\end{equation}

\paragraph{ELBO and exact marginalization of the discrete variables.}
Collecting all continuous latents in $\Wb$, the ELBO is
\begin{align}
\label{eq:vi_elbo}
    \Lcal(\phib)
    &:= \EE_{q_{\phib}(\Wb)}\!\left[
        \log \sum_{\zb\in\Zcal^L}
        p\!\left(\Wb,\Zb=\zb,\Xb^\Ecal\middle|\bm\sigma^2\right)
        - \log q_{\phib}(\Wb)
    \right]
    \;\leq\;
    \log p\!\left(\Xb^\Ecal\mid\bm\sigma^2\right).
\end{align}
Because no variational factor is placed on $\Zb$, the discrete latents are summed out exactly.
For each assignment $\zb\in\Zcal^L$, the joint log-likelihood is computed analytically, and the sum over all $K^L$ configurations is folded into the ELBO via the log-sum-exp operation.
This eliminates the need for a score-function estimator for the discrete variables entirely.
In practice, this exact enumeration is carried out by Pyro's \texttt{TraceEnum\_ELBO} objective with parallel enumeration of latent states.

\paragraph{Optimization.}
All variational parameters $\phib$ are optimized jointly using the Adam optimizer~\citep{kingma2015adam} with a parameter-specific learning rate schedule.
The measurement matrix $\Mb_{\mathrm{loc}}$ and the mechanism means $\bm\mu^\theta_\ell$ are updated with learning rate $\eta_\theta = 10^{-3}$, while the intervention logits $\xi^e_\ell$ and the shift means $\bm\mu^\delta_\ell$ use a higher learning rate $\eta_\delta = 10^{-2}$ to encourage faster adaptation to environment-specific effects.
The ELBO is normalized by the total number of observations before computing gradients.
Training proceeds for $2{,}000$ gradient steps with intervention temperature $\tau = 0.4$.

The measurement matrix $\Mb_{\mathrm{loc}}$ is initialized from $\Ncal(0,1)$; all remaining means are initialized from small-variance Gaussians near zero.
The intervention logits $\xi^e_\ell$ are initialized to $-3$, placing the prior probability of intervention at approximately $5\%$ and encouraging sparsity at the start of training.

\paragraph{Soft KL penalty.}
To partially compensate for the absence of the hard KL truncation, we also evaluated a variant that adds a differentiable Lagrangian penalty to the ELBO,
\begin{equation}
\label{eq:vi_soft_penalty}
    \log f_\lambda(\bm\delta^e_\ell(\pa_\ell))
    = -\lambda \cdot \mathrm{ReLU}\!\left(b - \frac{1}{2K}\left\|\bm\delta^e_\ell(\pa_\ell)\right\|^2\right),
\end{equation}
with threshold $b=0.1$ (matching the paper's default) and penalty weight $\lambda=1$.
This penalizes configurations where the shift is below the KL threshold while remaining fully differentiable.
However, the penalty term is negligible in scale relative to the ELBO (which is on the order of $10^6$ per dataset), so the optimizer effectively ignores it. Empirically, intervention recovery accuracy and LPPD were indistinguishable from the unpenalized VI baseline across all 10 datasets. This confirms that a soft Lagrangian relaxation alone is insufficient to replicate the effect of the hard KL truncation, and thus the adaptations used to allow for gradient-based optimization create a performance gap between VI and sampling based approaches, beyond any differences that may arise from the inference methods themselves.\looseness-1

\section{Experiment Processing and Metrics}
\label{app:metrics}
This appendix provides details on processing steps and exact metrics computed for each figure.
\subsection{Label Switching}
The model permits a nuisance symmetry of sign flips in $\Zb$, which we have to account for in our analysis. Specifically, for any $\sbb \in \{-1,1\}^L$, the transformation
\[
\zb' = \sbb \odot \zb , \quad  \Mb' = \begin{bmatrix}
    \mb_0 & s_1 \mb_1 &\hdots & s_L \mb_L 
\end{bmatrix},
\]
leaves the posterior over all other quantities unchanged and is fundamentally the same representation. Unlike permutations of the latent ordering, this is a true nuisance symmetry and leaves the full posterior invariant, not just the measurement model. To manage this, we align the signs of $\zb$ individually for each sample using the coordinate of $\Mb$ with the largest signal,
\[
s_\ell = \mathrm{sign}\left(M_{ \arg\max_{d\in [D]} |M_{d,\ell}|,\ell}\right), \qquad
z_\ell' = s_\ell z_\ell.
\]

When comparing to ground truth $\Mb^*$, we perform the additional alignment
\[
s_\ell =
\mathrm{sign}\left(M^*_{ \arg\max_{d\in [D]} |M_{d,\ell}|,\ell}\right)\cdot
\mathrm{sign}\left(M_{ \arg\max_{d\in [D]} |M_{d,\ell}|,\ell}\right),
\qquad
z_\ell' = s_\ell z_\ell.
\]
This relabeling is standard practice~\citep{stephens2000labels}, and we use aligned $\zb'$ instead of raw $\zb$ for all metrics and graphs.

        \subsection{Full Intervention Figure}
        \label{app:metrics_intervention_fig}
        \Cref{fig:wvs_Delta_all} contains the full parent grid referenced in \cref{fig:wvs_Delta_line}, as well as similar plots for $\delta_1,\delta_2$
\begin{figure}[tbp]
\centering
\includegraphics[scale = 0.25]{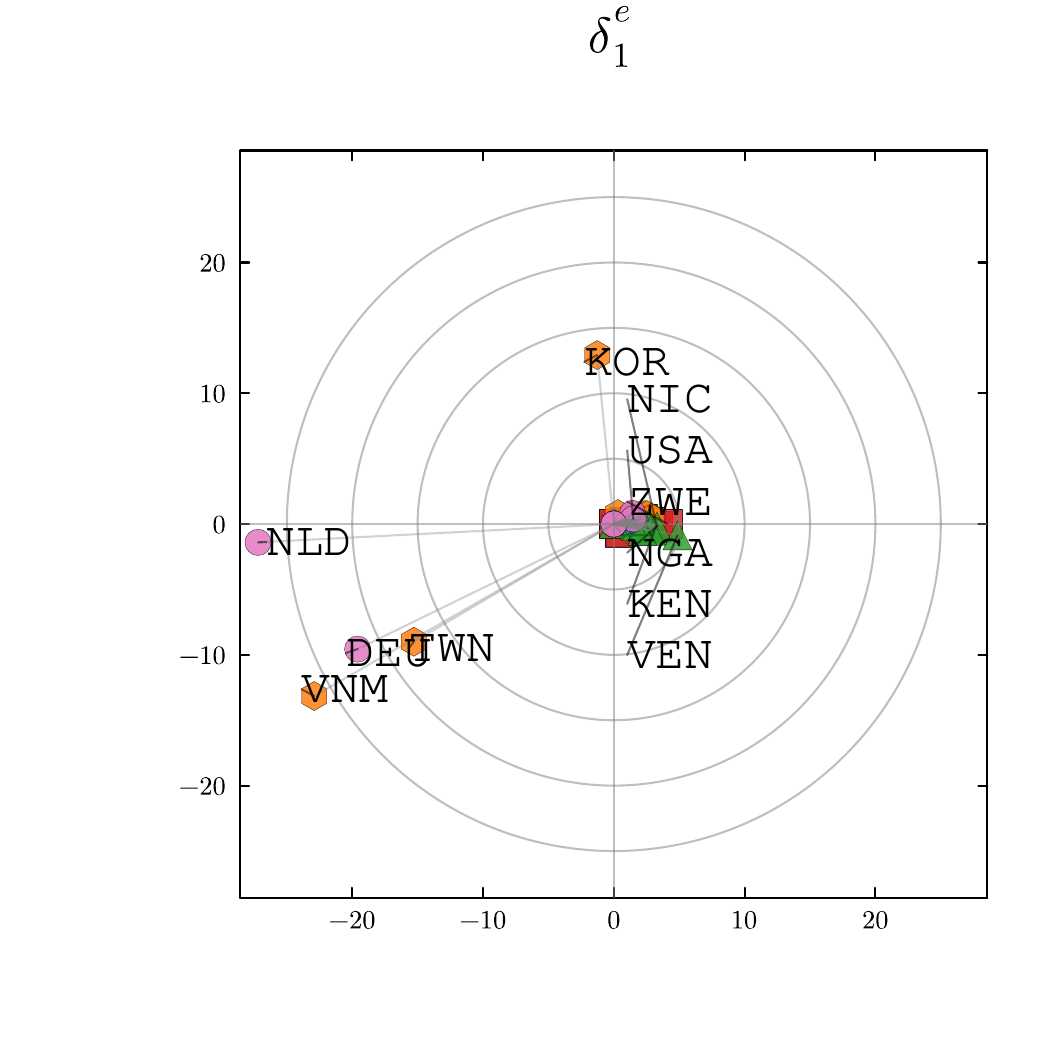}
\includegraphics[scale = 0.2]{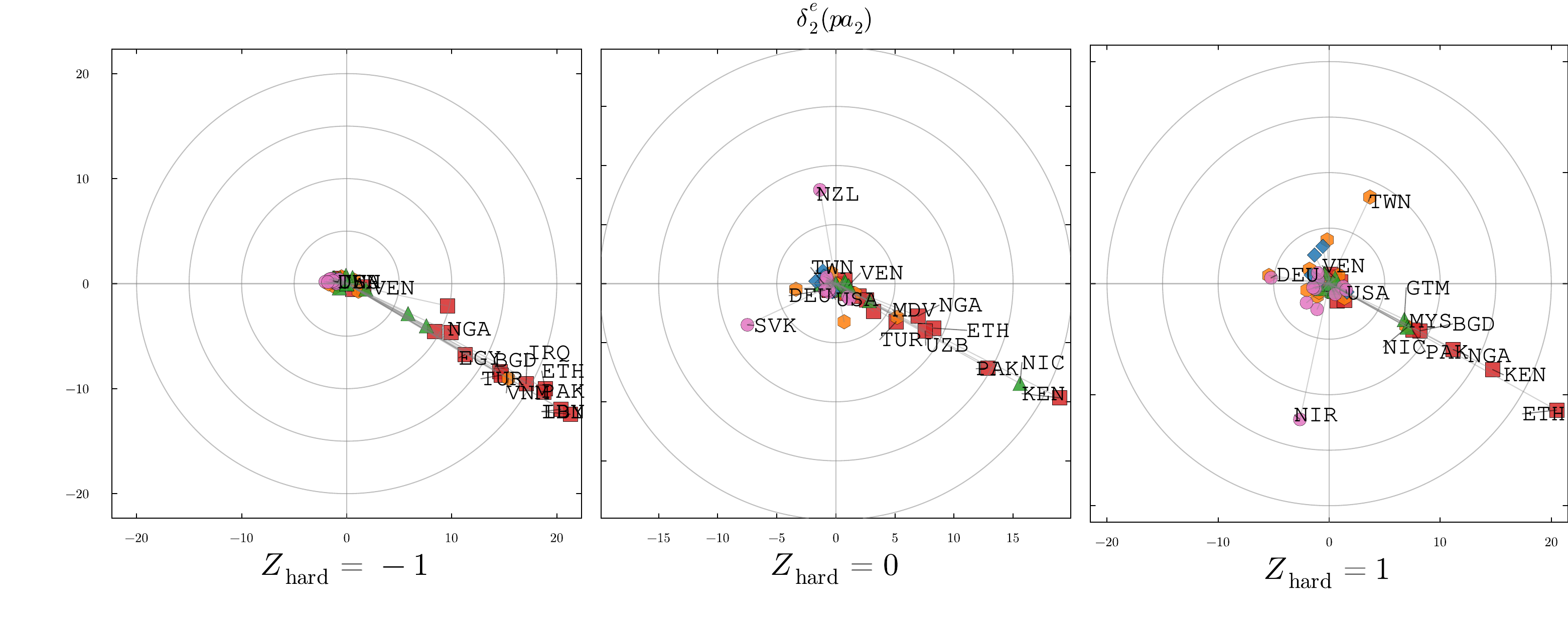}
\includegraphics[scale = 0.25]{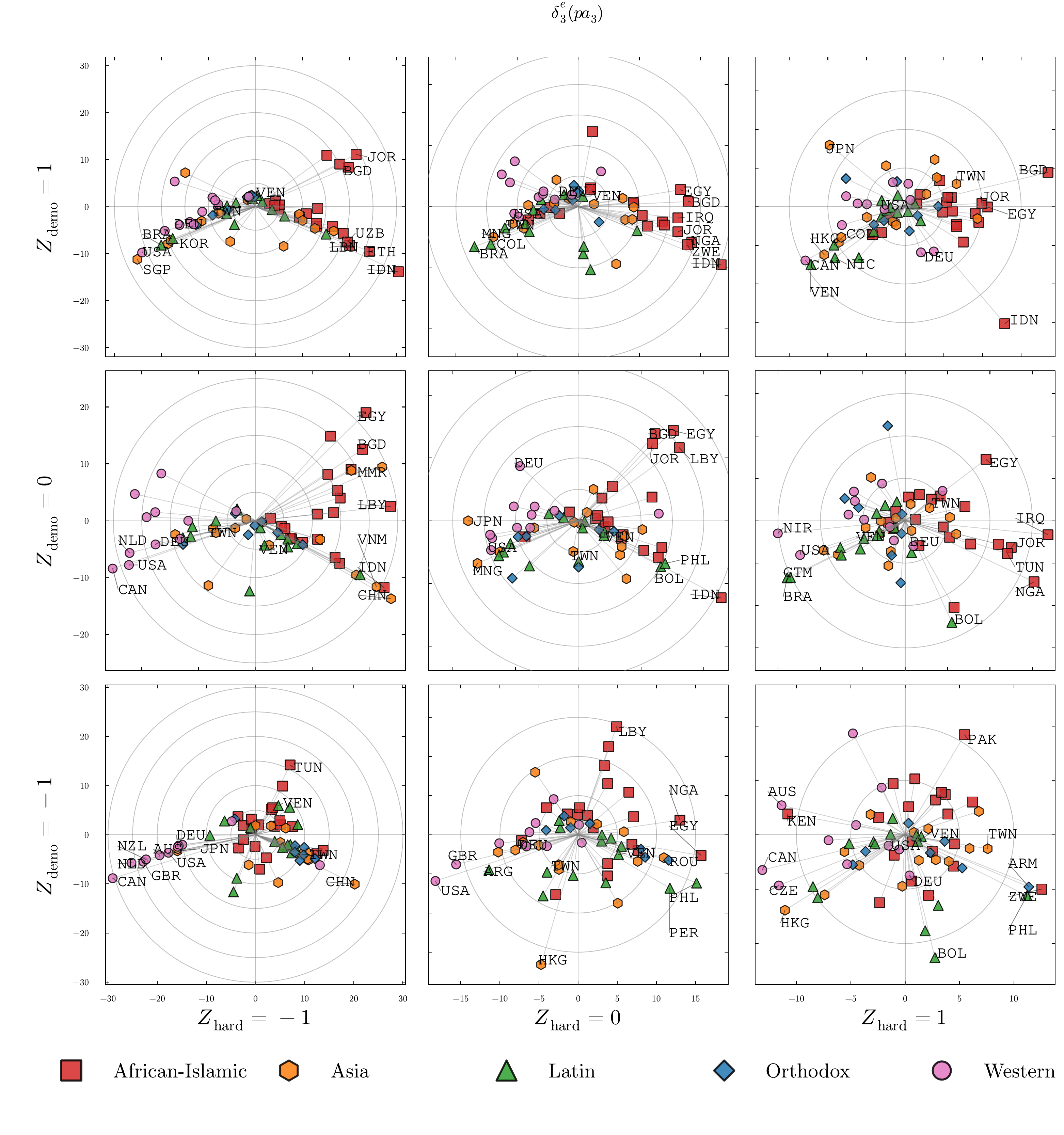}
\caption{
\textbf{Intervention directions across parent configurations and environments.}
Posterior mean intervention vectors $\delta_\ell^e(\pa_\ell)$ for each country, shown across all parent configurations. Vectors are expressed in an orthogonal basis for the two-dimensional intervention subspace, where the first basis direction captures shifts in the mean of $Z_{\ell}$ and the second captures changes in dispersion.\looseness-1
}
\label{fig:wvs_Delta_all}
\end{figure}

\subsection{Metrics} 
This reports the exact sample statics used in figures and tables throughout the experimental results sections. 
Throughout,  $\{(\Phib, \Psib,\Zb)^{(s)}\}_{s=1}^S$ is the set of posterior samples with weights. Weights are uniformly $1/S$ for Gibbs and PT.

\subsubsection{Measurement Model}
We report the posterior mean of the magnitude of the measurement matrix. For the WVS dataset, we additionally normalize each variable by its range of observed values. 
This normalization is necessary because \textsc{Q262 Age} has a substantially larger range than other variables; it would otherwise dominate the scale, obscuring patterns in the remaining variables.

\begin{align*}
\text{normalized }|\Mb|_{d,\ell }
&:= 
\sum_{s=1}^S w^{(s)} \cdot  \frac{|\Mb^{(s)} _{d,l}|}{\max_{e,j}x^{e,j}_d - \min_{e,j}x^{e,j}_d }
&&\text{\cref{fig:wvs_M}}
\\
|\Mb|_{d,l}
&:= 
\sum_{s=1}^S w^{(s)} \cdot  |\Mb^{(s)} _{d,l}|
&&
\text{\cref{fig:pollfish_ER,fig:pollfish_TR,fig:llm_M}}
\\
\Mb_{d,l}
&:= 
\sum_{s=1}^S w^{(s)} \cdot  \Mb^{(s)} _{d,l}
&&
\text{\cref{tab:inference_comparison}}
\\
\Mb_{\pi^*} &:= \sum_{s=1}^S  w^{(s)} \cdot \Mb^{(s)}_{\pi^{*(s)}} 
&&\text{\cref{tab:inference_comparison}}
\\
\pi^{*(s)} &:= \arg\min_\pi \|{\Mb^{(s)}_\pi} - \Mb^*\|_{1,1}
\end{align*}

\subsubsection{Causal Latent Variables}
We report the marginal posterior mean of each causal latent variable. We also define several distributions over $\Zb$ that are used by multiple metrics below.
\begin{align*}
\EE_{p^e_\Zb}[Z_\ell] 
&:= 
\sum_{s=1}^S w^{(s)} \cdot \left ( 
\sum_{z_\ell \in \Zcal_\ell } z_\ell \cdot p^{e,(s)}(z_\ell)\right)
&&\text{\cref{fig:wvs_Z,fig:pollfish_ER,fig:pollfish_TR}}
\\
p^{e,(s)}(z_\ell|\pa_\ell)
&=
\softmax\Big(\bm\theta^{(s)}_{\ell}\left(\pa^{(s)}_\ell\right) + \Ical^{e,(s)}_\ell  \bm\delta^{e,(s)}_\ell\left(\pa^{(s)}_\ell\right   ) \Big)_{z_\ell }
&&\text{Conditional}
\\
p^{e,(s)}(\zb)
&=
\prod_{\ell=1}^L
p^{e,(s)}_{Z_\ell}(z_\ell|\zb_{1:\ell-1})
&&\text{Joint}
\\
p^{e,(s)}(z_\ell^*)
&=
\sum_{\zb \in \Zcal } \mathbbm{1}(z_\ell = z_\ell^*) \cdot p^{e,(s)}(\zb)
&&\text{Marginal}
\\
\Zcal &= \prod_{\ell = 1}^L \Zcal_\ell \qquad \Zcal_\ell = \left\{1-\frac{K+1}{2},\, 2-\frac{K+1}{2},\, \dots,\, K-\frac{K+1}{2}\right\}.
\end{align*}

\subsubsection{Causal Graph}
We quantify causal dependence using the method suggested by \citet{janzing2013quantifying}.
\begin{align*}
 \text{Causal Influence}
&:= 
\sum_{s=1}^S w^{(s)} \cdot 
D_\textsc{kl}\big(p^{e,(s)}(\Zb)~\big\|~ p^{j\to \ell,e,(s)}(\Zb)\big)
&&\text{\cref{fig:wvs_Z,fig:wvs_CI-E,fig:llm_CI}}
\\
p^{j\to \ell,e,(s)}(\zb)
&= 
\left(
\sum_{z_j^* \in \Zcal_j}
p^{e,(s)}(z_j^*)
\cdot 
p^{e,(s)}(z_\ell|
\zb_{1:j-1},
z^*_{j},
\zb_{j+1:\ell-1})
\right)
\cdot 
\prod_{\ell'\in [L]\setminus\ell}
p^{e,(s)}(z_{\ell'}|\zb_{1:\ell'-1})
\end{align*}
\subsubsection{Interventions}
We report posterior summaries of intervention indicators and shifts.
\begin{align*}
\Ical
&:= 
\sum_{s=1}^S w^{(s)} \cdot  \Ical^{(s)}
&&
\parbox{15em}{
\cref{tab:inference_comparison} \\
\cref{fig:wvs_ID,fig:pollfish_ER,fig:pollfish_TR}}
\\
\EE[\|\Delta\|_{KL}]^{e}_\ell
&:= 
\sum_{s=1}^S w^{(s)} 
\cdot  
\left( 
\sum_{\zb \in \Zcal }D_\textsc{kl}\left(
p^{0,(s)}(Z_\ell \cdot|\zb_{1:\ell-1})
\|
p^{e,(s)}(Z_\ell|\zb_{1:\ell-1})
\right) \cdot p^{e,(s)}(\zb)
\right)
&&\text{\cref{fig:wvs_ID}}
\\
\delta_{\ell}^e(\pa_\ell)
&:= 
\sum_{s=1}^S w^{(s)} \Ical^{e,(s)}_\ell\delta_{\ell}^{e,(s)}\left(\pa^{(s)}_\ell\right)
&&\text{\cref{fig:wvs_Delta_line}}
  \end{align*}
\subsubsection{Miscellaneous}
\begin{align*}
\EE_{e} \left[X_d \right]
&:=
\frac{1}{N^{e}} \sum_{j \in [N^e]}  \mkern-5mu x^{e,j}_d 
&&\text{\Cref{eq:ate}}
\\
\mathrm{LPPD}
&:=
\frac{1}{\sum_{e\in\Ecal}N^e}
\sum_{e \in \mathcal{E}}
\sum_{j \in [N_e]}
\log \!\left(
\sum_{s=1}^S
w^{(s)}\,
\sum_{\zb\in\Zcal}
p^{e,(s)}(\zb)
\phi_D\left(x^{e,j} ; \mb_0^{(s)} +\textstyle\sum_{\ell \in [L]} \mb_\ell^{(s)} z_\ell,
\mathrm{diag}\big(\sigmab^2\big)^{(s)}\right)
\right)
  &&\text{\cref{tab:inference_comparison}}                                                                   
        \end{align*}

\endgroup

{
\setlength{\bibsep}{4pt plus 2pt minus 2pt}
\small
\bibliography{ref}
}
\end{document}